\long\def\killtext#1{}
\newtheorem{theorem}{Theorem}[section]
\newtheorem{lemma}[theorem]{Lemma}
\newtheorem{claim}[theorem]{Claim}
\newtheorem{corollary}[theorem]{Corollary}
\newtheorem{remark}[theorem]{Remark}
\newtheorem{exa}[theorem]{Example}
\newtheorem{defn}[theorem]{Definition}
\newenvironment{example}{\begin{exa}}{\hfill$\blacklozenge$\end{exa}}
\newenvironment{proof}{\noindent{\bf Proof.}}{\hfill$\square$\medskip}
\def\Pr{{\sf Pr}}\def\E{{\sf E}}
\def\eps{\varepsilon}
\def\R{{\mathbb R}}
\def\Z{{\mathbb Z}}
\def\J{\hbox{\rm JOIN}}
\def\L{\hbox{\rm LINK}}
\def\ve{{\varepsilon}}
\def\e{{\sf exp}}
\long\def\killtext#1{}
\begin{document}

\title{Cortical Computation via Iterative Constructions}

\author{Christos Papadimitrou\thanks{UC Berkeley, christos@berkeley.edu, supported in part by NSF award CCF-1408635 and by Templeton Foundation grant 3966.} \and Samantha Petti\thanks{Georgia Tech, \{spetti3,vempala\}@gatech.edu, supported in part by NSF awards CCF-1217793 and EAGER-1415498.} \and Santosh Vempala\footnotemark[2]}

\maketitle

\begin{abstract}  We study Boolean functions of an arbitrary number of input variables that can be realized by simple iterative constructions based on constant-size primitives. This restricted type of construction needs little global coordination or control and thus is a candidate for neurally feasible computation. Valiant's construction of a majority function can be realized in this manner and, as we show, can be generalized to any uniform threshold function. We study the rate of convergence, finding that while linear convergence to the correct function can be achieved for any threshold using a fixed set of primitives, for quadratic convergence, the size of the primitives must grow as the threshold approaches $0$ or $1$. We also study finite realizations of this process and the learnability of the functions realized. We show that the constructions realized are accurate outside a small interval near the target threshold, where the size of the construction grows as the inverse square of the interval width. This phenomenon, that errors are higher closer to thresholds (and thresholds closer to the boundary are harder to represent), is a well-known cognitive finding. 
\end{abstract}

\section{Introduction}

%In this paper, we study a family of simple algorithmic processes motivated by neurally plausible computation. 

\paragraph{Cortical computation.}
Among the many unexplained abilities of the cortex are learning complex patterns and invariants from relatively few examples. This  is manifested in a range of cognitive functions including visual and auditory categorization, motor learning and language. In spite of the highly varied perceptual and cognitive tasks accomplished, the substrate appears to be relatively uniform in the distribution and type of cells. How could these $80$ billion cells organize themselves so effectively? 
 
Cortical computation must therefore be highly distributed, require little {\em synchrony} (number of pairs of events that must happen in lock-step across neurons), little {\em global control} (longest chain of events that must happen in sequence) and be based on very simple primitives  \citep{PV15b}.  Assuming that external stimuli are parsed as sets of binary sensory features, our central question is the following: 

\smallskip

{\em What functions can be represented and learned by algorithms so simple that one could imagine them happening in the cortex?} 

\smallskip

Perhaps the most natural  primitives are the AND and OR functions on two input variables. These functions are arguably neurally plausible. They were studied as JOIN and LINK by \cite{Valiant94, Valiant00, Valiant05, FeldmanV09}, who showed how to implement them in the neuroidal model. An {\em item} is a collection of neurons (corresponding to a {\em neural assembly} in neuroscience) that represents some learned or sensed concept. Given two items $A,B$, the JOIN operation forms a new item $C=\J(A,B)$, which ``fires" when both $A$ and $B$ fire, i.e., $C$ represents $A \wedge B$. $\L(A,B)$ captures association, and causes $B$ to fire whenever $A$ fires. By setting $\L(A,C)$ and $\L(B,C)$, we achieve that $C$ is effectively $A \vee B$. While the precise implementation and neural correlates of JOIN and LINK are unclear, there is evidence that the brain routinely engages in hierarchical memory formation. 

\paragraph{Monotone Boolean functions.}
Functions constructed by recursive processes based on AND/OR trees have been widely studied in the literature, motivated by the design of reliable circuits as in \citep{Moore1956} and more recently, understanding the complexity-theoretic limitations of monotone Boolean functions.  One line of work studies the set of functions that could be the limits of recursive processes, where at each step, the leaves of  a tree are each replaced by constant-size functions. \cite{Moore1956}, showed that a simple recursive construction leads to a threshold function, which can be applied to construct stable circuits. \cite{Valiant84a} used their $4$-variable primitive function $(A \vee B) \wedge (C \vee D)$ to derive a small depth and size threshold function that evaluates to $1$ if at least $(2-\phi) \approx 0.38$ fraction of the inputs are set to $1$ and to zero otherwise. The depth and size were $O(\log n)$ and $O(n^{5.3})$ respectively. Calling it the {\em amplification method}, \cite{Boppana1985} showed that Valiant's construction is optimal. \cite{Dubiner1992} extended the lower bound to classes of read-once formulae. \cite{Hoory2006} gave smaller size Boolean {\em circuits} (where each gate can have fan out more than $1$), of size $O(n^3)$ for the same threshold function. \cite{Luby1998} gave an alternative analysis of Valiant's construction along with applications to coding. The construction of a Boolean formula was extended by  \cite{Servedio2004} to monotone linear threshold functions, in that they can be approximated on most inputs by monotone Boolean formulae of polynomial size.  \cite{Friedman1986} gave more efficient constructions for threshold functions with small thresholds. 

Savicky gives conditions under which the limit of such a process is the uniform distribution on all Boolean functions with $n$ inputs \citep{Savicky1987, Savicky1990} (see also \cite{Brodsky2005, Fournier2009}).
%He also give exponential lower bounds on the size of a constant-depth circuit computing the majority function \cite{Boppana1984}.  
In a different application,  
\cite{Goldman1993} showed how to use properties of these constructions to identify read-once formulae from their input-output behavior. 

\paragraph{Our work.} 
Unlike previous work, where a single constant-sized function is chosen and applied recursively, we will allow constructions that randomly choose one of two constant-sized functions. To be neurally plausible, our constructions are {\em bottom-up} rather than {\em top-down}, i.e., at each step, we apply a constant-size function to an existing set of outputs. In addition, the algorithm itself must be very simple --- our goal is not to find ways to realize all Boolean functions or to optimize the size of such realizations. Here we address the following questions: 
What functions of $n$ input items can be constructed in this iterative manner?
Can arbitrary uniform threshold functions be realized? What size and depth of iterative constructions suffices to guarantee accurate computations? Can such functions and constructions be learned from examples, where the learning algorithm is also neurally plausible?

Our rationale for uniform threshold functions is two-fold. First, uniform threshold functions are fundamental in computer science and likely also for cognition. Second, the restriction to JOIN and LINK as primitives ensures that any resulting function will be monotone since negation is not possible in this framework. Moreover, if we require the construction to be symmetric, it would seem that the only obtainable family of Boolean functions are uniform thresholds. However, as we will see, there is a surprise here, and in fact we can get {\em staircase functions}, i.e., functions that take value $p_i$ on the interval $(a_i, a_{i+1})$  where $a_0=0<a_1<a_2<\dots <a_k< a_{k+1}=1$ and $0=p_0<p_1< p_2< \dots< p_{k-1}<p_k=1$.

To be able to describe our results precisely, we begin with a definition of iterative constructions.

\subsection{Iterative constructions}

A sequence of AND/OR operations can be represented as a tree.  Such a tree $T$ with $n$ leaves naturally computes a function $g_T: \{0, 1\}^n \to \{0,1\}$. 
We can build larger trees in a neurally plausible way by using a set of small AND/OR trees as building blocks. Let $C$ be a probability distribution on a finite set of trees. We define an {\em iterative tree for $C$} as follows.

%as follows: each JOIN node computes AND of its children, each LINK node computes OR of its children, and the value computed by the root is returned. We use an AND/OR tree to represent a sequence of JOIN and LINK operations, as shown in the example below.
\killtext{
\begin{figure}[h]
\centering
\includegraphics[scale=.75]{joinlinkandor.eps}
\caption{The item $\J( \L(A, B), \L (C,D))$ computes $(A \vee B ) \wedge (C \vee D)$.}\label{vtree}
\end{figure}
}

%Note that while each sequence of JOIN and LINK operations uniquely defines an AND/OR tree, several distinct trees may compute the same boolean function. For example, note that  $(A \wedge B) \wedge (C \wedge D)= ((A \wedge B) \wedge C) \wedge D$, so the two trees corresponding to these expressions compute the same boolean function. See Figure \ref{same}.

%\begin{figure}[h]
%\centering
%\includegraphics[scale=.75]{doubleand.eps}
%\caption{Left: Tree for $((A \wedge B) \wedge C) \wedge D$. Right: Tree for $(A \wedge B) \wedge (C \wedge D)$. These trees compute the same boolean function.}\label{same}
%\end{figure}

\begin{figure}[h!]
\fbox{\parbox{\textwidth}{
{\bf IterativeTree($L,m,C,X$):}\\[0.1in]
For each level $j$ from $1$ to $L$, apply the following iteration $m$ times:\\ 
(level $0$ consists of the input items $X$)
\begin{enumerate}
\item Choose a tree $T$ according to $C$. 
\item Choose items at random from the items on level $j-1$. 
\item Build the tree $T$ with these items as leaves. 
\end{enumerate}
}}
\end{figure}

%\begin{figure}[h!]
%\centering
%\includegraphics[scale=.95]{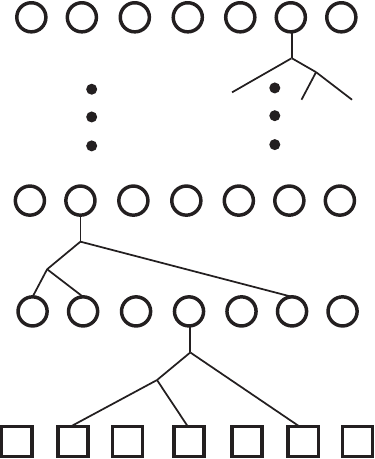}
%\caption{An iterative tree.}\label{iterativetree}
%\end{figure}

The construction of small AND/OR trees 
%(e.g., using the JOIN and LINK operations) 
is a decentralized process requiring a short sequence of steps, i.e., the synchrony and control parameters are small. Therefore, we consider them to be neurally plausible. 

The iterative tree construction has a well-defined sequence of {\em levels}, with items from the next level having leaves only in the current level. A construction that needs even less coordination is the following: the probability that an item participates in future item creation decays exponentially with time. The weight of an item starts at $1$ when it is created and decays by a factor of $e^{-\alpha}$ each time unit. We refer to such constructions as {\em exponential} iterative constructions. An extreme version of this, which we call {\em wild} iterative construction, is to have $\alpha=0$, i.e, all items are equally likely to participate in the creation of new items. Figure \ref{wild and exp} illustrates these constructions. 
\begin{figure}[h!]
\fbox{\parbox{\textwidth}{
{\bf ExponentialConstruction($k, C, \alpha$):}\\

Initialize the weights of input items to $1$.\\
Construct $k$ items as follows:
\begin{enumerate}
\item Choose a tree $T$ according to $C$. 
\item  Choose the leafs for $T$ independently from existing items with probability proportional to the weight of the item.
%according to the following distribution: the $i^{th}$ item created is selected with probability $e^{-(k-i-1)}/W_k$ and each of the original $n$ input items is selected with probability $e^{-\alpha k}/W_k$.
\item Build the tree $T$ with these items as leaves.
\item Multiply the weight of every item by $e^{-\alpha}$. 
%where $W_k=n e^{-\alpha k} +\sum_{j=0}^{k-1} e^{-j \alpha}.$ 
 
\end{enumerate}
}}
\end{figure}

\begin{figure}[h!]
\centering
\includegraphics[scale=.75]{treeoftrees.eps}
\quad
\includegraphics[scale=.35]{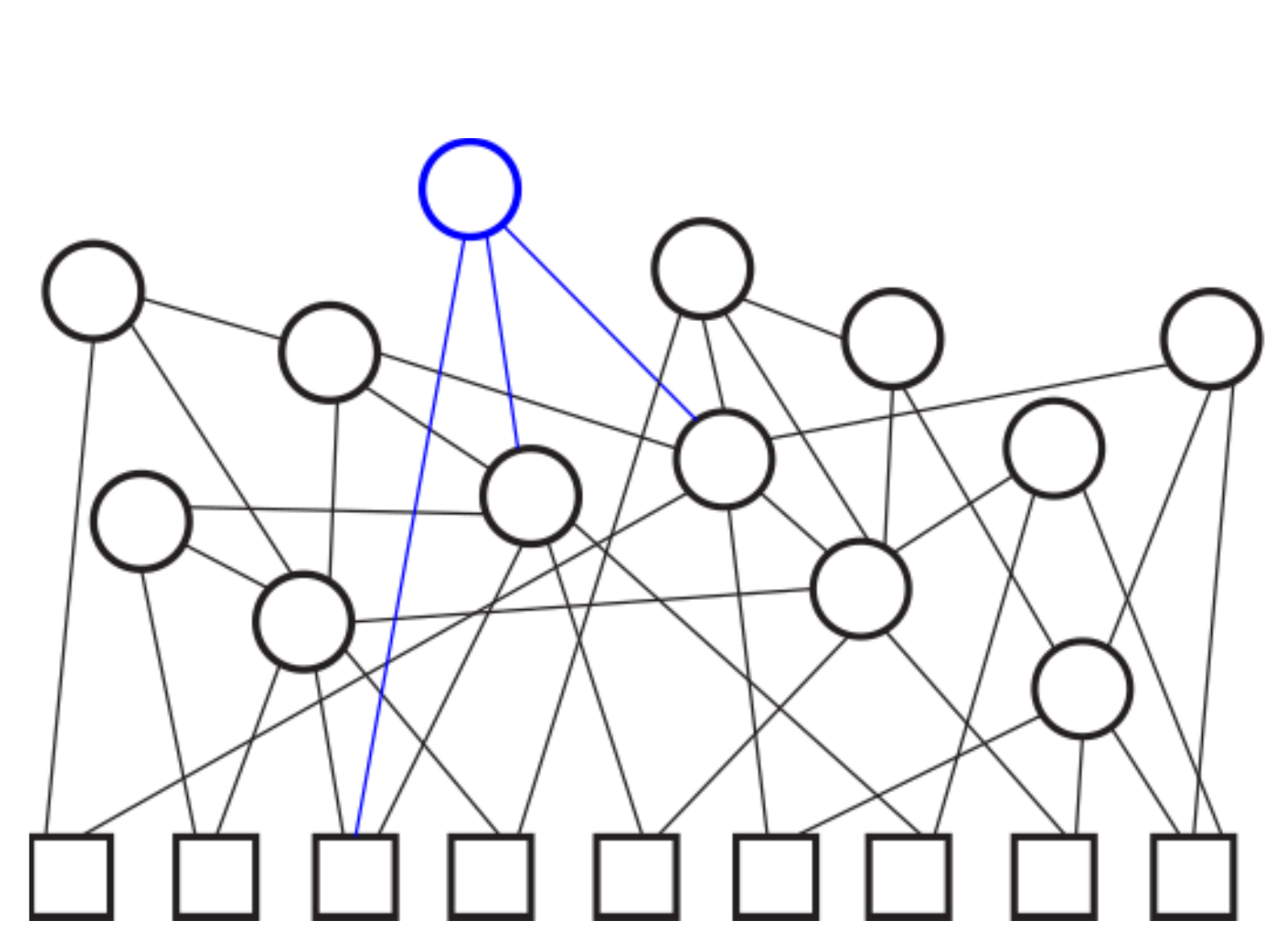}
\quad
\includegraphics[scale=.35]{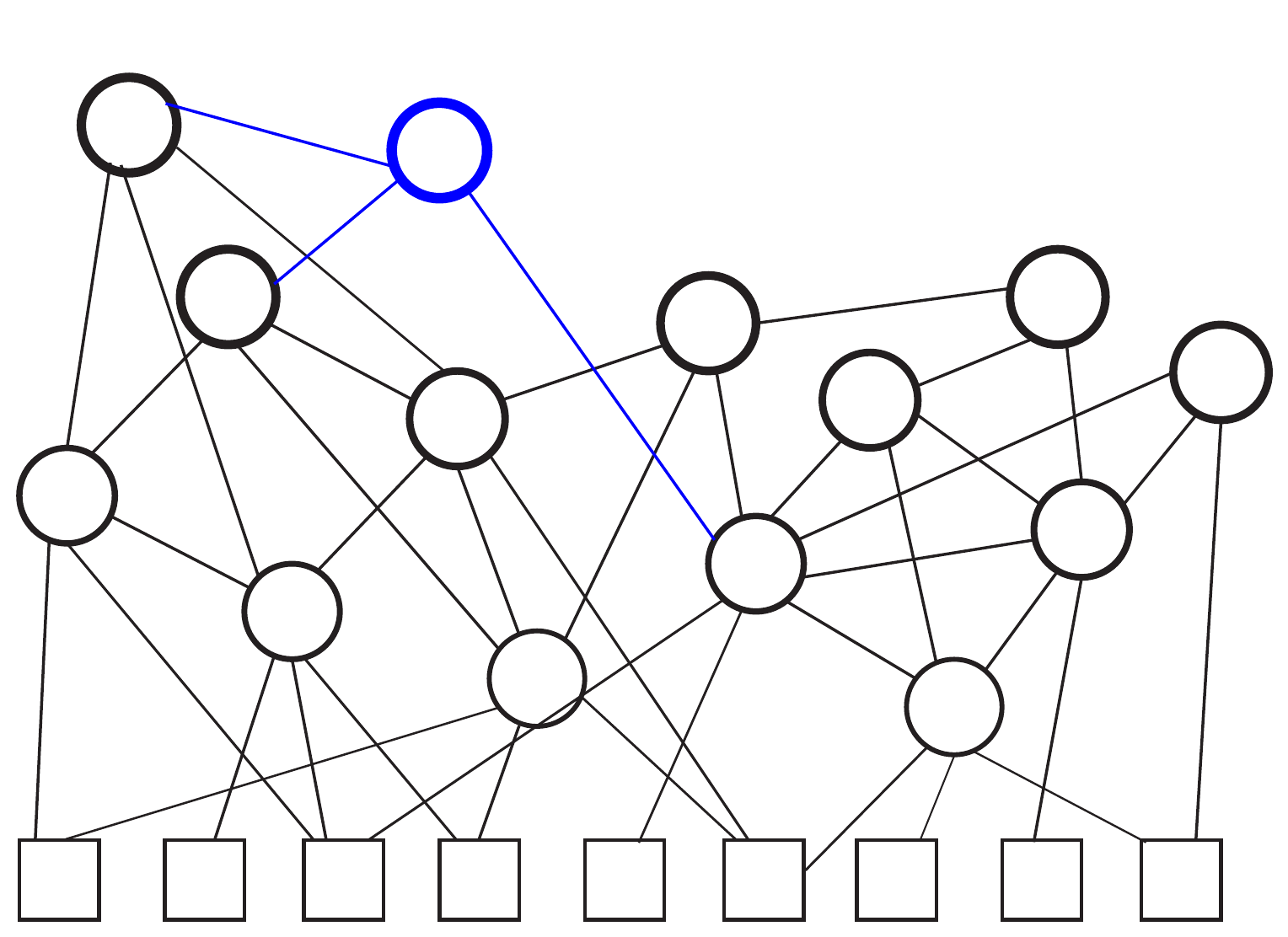}
\caption{Left: an iterative construction. Middle: a wild construction. Right: an exponential construction. In the latter two images, the thickness of the outline representing each item indicates the probability the item will be selected in the construction of the next item.}\label{wild and exp}
\end{figure}

\subsection{Results}

We are interested in the functions computed by high-level items of iterative constructions. In particular, we design iterative constructions so that high-level items compute a threshold function with high probability. 
\begin{defn} The function $f:[0,1] \to [0,1]$ is a $t$-threshold if $f(x)=0$ for $x< t$ and $f(x)=1$ for $x>t$. \end{defn}
For given probability distribution on a set of trees, the output of high-level items of a corresponding iterative construction depends on the following: $(i)$ the fraction of input items firing, $(ii)$ the width of the levels, and $(iii)$ the number of levels. For an $n$ item input, the fraction of input items firing must take the form $ k/n$, $k \in \mathbb{Z}$. Throughout the paper, we assume that the distance between the desired threshold and the fraction of input items firing is at least  $1/n$. To address $(ii)$, we first analyze the functions computed by high-level items of an iterative construction when the width of the levels is infinite, which is equivalent to the ``top down" approach. Then in Section \ref{finite},  we remove this assumption and analyze the ``bottom-up" construction in which the items at level $j-1$ are fixed before the items at level $j$ are created.  The following theorems give a guarantee on the probability that an iterative tree with infinite width levels accurately computes a threshold function in terms of the number of levels. 
%The width of the levels becomes important since the smaller the width of the levels the more likely the function computed deviates from expectation.  The following theorems describe \textit{in the limit} with respect to the width to the levels, the probability that a high-level item of an iterative tree computes a threshold function.
%\santosh{top-down vs bottom-up}
To start, we restate Valiant's result \citep{Valiant84a}.
%as a bottom-up construction. 
Here $\phi = (\sqrt{5}+1)/2$ is the golden ratio ($2-\phi \approx 0.38$).

\begin{theorem}\label{thm:Valiant}
Let $R$ be the tree that computes $(A \vee B ) \wedge (C \vee D)$. Then, an item at level $\Omega(\log n + \log k)$ of an infinite width iteratively constructed tree for $R$ computes a $(2-\phi)$-threshold function accurately with probability at least $1- 2^{-k}$.
\end{theorem}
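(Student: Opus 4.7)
The plan is to reduce the theorem to analyzing the one-dimensional dynamical system $p \mapsto f(p)$ induced by the primitive $R$ under the infinite-width assumption, and then bounding the number of iterations needed for $f^{(L)}(p_0)$ to reach distance $2^{-k}$ from the correct Boolean value. Since the four leaves of $R$ at any level are drawn independently from the previous level (infinite width), the probability that a level-$j$ item fires, given that a level-$(j-1)$ item fires with probability $p$, is
\[
f(p) \;=\; \bigl(1-(1-p)^2\bigr)^2 \;=\; p^2(2-p)^2.
\]
Thus, if $p_0 = x$ is the fraction of input items firing, the probability that a chosen level-$L$ item fires is exactly $p_L = f^{(L)}(x)$, and we must show $p_L \le 2^{-k}$ when $x < 2-\phi$ and $p_L \ge 1-2^{-k}$ when $x > 2-\phi$.

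Next I would study the fixed points of $f$ on $[0,1]$. Solving $f(p)=p$ and factoring gives $p(p-1)(p^2-3p+1)=0$, so the fixed points in $[0,1]$ are $0$, $1$, and $t := (3-\sqrt{5})/2 = 2-\phi$. A direct computation shows $f'(t) > 1$, so $t$ is a repelling fixed point, and one checks $f(p) < p$ on $(0,t)$ and $f(p) > p$ on $(t,1)$. Consequently, under iteration, $p_j$ moves monotonically away from $t$ toward $0$ if $p_0 < t$ and toward $1$ if $p_0 > t$.

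I would then split the convergence analysis into two phases. In the \emph{escape phase}, $p_j$ lies in a constant neighborhood of $t$ where $f'(t) =: \lambda > 1$, so $|p_{j+1}-t| \ge \lambda' |p_j-t|$ for some $1 < \lambda' < \lambda$ after a small constant-factor shrink; starting from $|p_0 - t| \ge 1/n$, after $L_1 = O(\log n)$ iterations we have $|p_{L_1} - t| \ge c$ for some absolute constant $c > 0$ (equivalently, $p_{L_1}$ is bounded away from $t$ by a constant on the correct side). In the \emph{absorption phase}, once $p_j$ is close to $0$ or $1$, the dynamics become quadratic: near $0$, $f(p) = 4p^2 - 4p^3 + p^4 \le 4p^2$, giving $p_{j+1} \le 4 p_j^2$; near $1$, with $q = 1-p$, $1 - f(p) = q^2(2-q)^2 \le 4 q^2$, so $q_{j+1} \le 4 q_j^2$. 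Quadratic recursion reaches $2^{-k}$ after $L_2 = O(\log k)$ further iterations. Summing gives $L = L_1 + L_2 = O(\log n + \log k)$.

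The main obstacle is quantitatively controlling the escape phase: one must verify there is a uniform lower bound $\lambda' > 1$ on the local expansion of $f$ throughout the interval $[t - c, t + c]$ and check that the monotone push out of this interval is strictly geometric down to scale $1/n$. The other calculations (fixed-point computation, quadratic absorption, final union over the two sides of $t$) are routine. Finally, the assumption $|x - t| \ge 1/n$ is exactly what makes $L_1 = O(\log n)$ sufficient; without it the escape phase would be unbounded.
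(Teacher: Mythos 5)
Your proposal is correct and follows essentially the same two-phase analysis (linear escape from the repelling fixed point $2-\phi$, then quadratic absorption at $0$ or $1$) that the paper codifies in Lemma~\ref{conditions} and carries out in the proof of Theorem~\ref{thm:quadratic four}A, of which Theorem~\ref{thm:Valiant} is the special case $\alpha=1$ (pure $F_1$). One minor algebra slip: with $q = 1-p$, the complement identity (Lemma~\ref{complement}) gives $1-f(p) = f_{F_2}(q) = 2q^2 - q^4 = q^2(2-q^2)$, not $q^2(2-q)^2$, but your bound $1-f(p) \le 4q^2$ still holds (in fact $\le 2q^2$), so the quadratic absorption phase is unaffected.
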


In this construction, the iterative tree that computes the $2 - \phi$ threshold function is built using only one small tree. We show that it is possible to achieve arbitrary threshold functions if we allow our iterative tree to be built according to a probability distribution on two distinct smaller trees. 

%\begin{figure}[!htp]
%\centering
%\includegraphics[scale=.75]{favtree.eps}
%\caption{For all $0\leq t\leq1$, there exists a probability distribution on the above trees such that the corresponding iterative tree computes the $t$-threshold function with high probability.}\label{fav}
%\end{figure}

\begin{theorem}\label{thm:linear}
Let $0<t<1$ and let $R = \{\Pr(T_1)=t, \Pr(T_2)=1-t \}$ where $T_1$ is the tree that computes $(A\vee B ) \wedge C$ and $T_2$ is the tree that computes $(A \wedge B ) \vee C$. Then, an item at level $\Omega( \log n +k)$ of an infinite width iteratively constructed tree for $R$ computes a $t$-threshold function accurately with probability at least $1- 2^{-k}.$
\end{theorem}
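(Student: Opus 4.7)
The plan is to reduce the random process to a deterministic one-dimensional recurrence governing the firing probability at each level, and then to analyze that recurrence using a two-phase argument driven by a single algebraic identity. Under the infinite-width assumption, each fresh item at level $j\ge 1$ is built from independent leaves drawn from level $j-1$, each firing with probability $p_{j-1}$, and the tree type is chosen as $T_1$ with probability $t$ and $T_2$ with probability $1-t$. Using $\Pr[T_1=1\mid p]=p(1-(1-p)^2)=2p^2-p^3$ and $\Pr[T_2=1\mid p]=1-(1-p^2)(1-p)=p+p^2-p^3$ and averaging gives
\[
p_j=f(p_{j-1}), \qquad f(p)=t(2p^2-p^3)+(1-t)(p+p^2-p^3)=(1-t)p+(1+t)p^2-p^3,
\]
with $p_0=x$.

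The key algebraic fact is that $f(p)-p=p(1-p)(p-t)$, so $f$ has exactly the fixed points $0,t,1$ on $[0,1]$, and adding $p-t$ to both sides yields the pivotal identity
\[
f(p)-t=(p-t)\bigl(1+p(1-p)\bigr).
\]
Thus $\operatorname{sgn}(p_j-t)=\operatorname{sgn}(x-t)$ for all $j$, and the signed gap $p_j-t$ is multiplied by $1+p_j(1-p_j)\ge 1$ at every step. I will exploit this in two complementary ways. Set $c=\min(t,1-t)/3$. While $p_j\in[c,1-c]$, the multiplier $1+p_j(1-p_j)\ge 1+c(1-c)$ is a $t$-dependent constant strictly greater than $1$; starting from $|x-t|\ge 1/n$ the gap therefore grows geometrically, and the iterate must exit $[c,1-c]$ after $L_1=O_t(\log n)$ levels.

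For the second phase I use the complementary form $f(p)/p=1-(1-p)(t-p)$, which shows that once $p_j\le c$ (the $x<t$ branch) the ratio satisfies $f(p_j)/p_j\le 1-(1-c)(t-c)<1$, giving geometric decay of $p_j$ toward $0$; the $x>t$ branch is identical in the variable $q=1-p$ via the symmetric recurrence $1-f(1-q)=tq+(2-t)q^2-q^3$, whose fixed points are $0,1-t,1$. Hence $L_2=O_t(k)$ additional levels drive $p_{L_1+L_2}$ within $2^{-k}$ of the correct attractor $\mathbf{1}[x>t]$, which is precisely the required accuracy. Summing yields the $\Omega(\log n+k)$ bound. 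I expect the main obstacle to be purely organizational, namely tracking the $t$-dependent constants through the two phases so that the hidden constant in $\Omega$ comes out correctly (and observing that it degenerates as $t\to 0$ or $t\to 1$, foreshadowing the quadratic-convergence results advertised in the abstract); the actual content of the argument is entirely concentrated in the factorization $f(p)-t=(p-t)(1+p(1-p))$, which is the algebraic miracle of this particular choice of primitives $T_1,T_2$.
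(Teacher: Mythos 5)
Your proof is correct and follows essentially the same route as the paper: compute $f_R(p)=(1-t)p+(1+t)p^2-p^3$, factor $f_R(p)-p=p(1-p)(p-t)$, and run the same two-phase argument of geometric growth of $|p_j-t|$ while $p_j$ stays bounded away from $\{0,1\}$ followed by geometric decay of $p_j$ (resp.\ $1-p_j$) once it is small. Your identity $f(p)-t=(p-t)(1+p(1-p))$ is the same as the paper's ratio computation $\frac{t-f(p)}{t-p}=1+p(1-p)$ (the paper has a small typo, writing $1+p(1-t)$), and your explicit symmetric recurrence in $q=1-p$ plays the role the paper assigns to Corollary~\ref{cor:complementary}.
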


The rate of convergence of this more general construction is linear rather than quadratic. While both are interesting, the latter allows us to guarantee a correct function on every input with depth only $O(\log n)$, since there are $2^n$ possible inputs.

\begin{defn} A construction exhibits \textit{linear convergence} if items at level $\Omega( \log n +k)$ of an infinite width iterative tree accurately compute the threshold function with probability at least $1-2^{-k}$. A construction exhibits \textit{quadratic convergence} if items at level $\Omega( \log n +\log k)$ of an infinite width iterative tree accurately compute the threshold function with probability at least $1-2^{-k}$.\end{defn}

%To achieve quadratic convergence, our construction must rely on larger trees. 
The next theorem gives constructions using slightly larger trees with $4$ and $5$ leaves respectively (illustrated in Figure \ref{quad trees five}) that converge quadratically to a $t$-threshold function for a range of values of $t$, with more leaves giving a larger range. Moreover, these ranges are tight, i.e. no construction on trees with $4$ or $5$ leaves yields quadratic convergence to a $t$-threshold function for $t$ outside these ranges. 
%Theorem \ref{thm:quadratic four} 
%The following theorem gives a construction using the two trees with four leaves illustrated in Figure \ref{d four quads} that converges quadratically to a $t$-threshold function for restricted values of $t$, $0.38 \lesssim t \lesssim 0.62$ and demonstrates that no construction on trees with four leaves will yield quadratic convergence to a threshold function for $t$ outside this range. 

\begin{figure}[h]
\centering
\includegraphics[scale=.65]{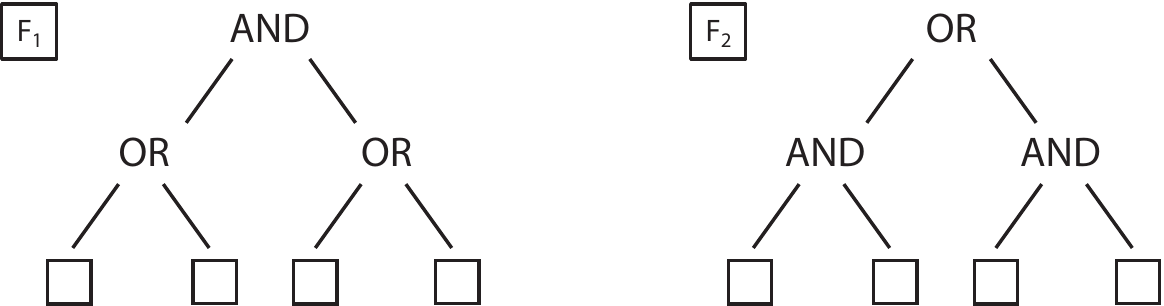}

\bigskip

\includegraphics[scale=.75]{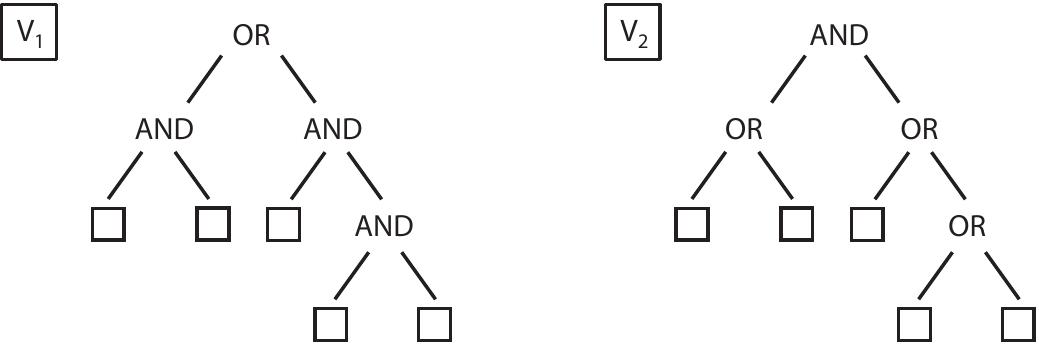}
\caption{For $.38 \lesssim t \lesssim 0.62$, there exists a probability distribution on  $F_1$ and $F_2$ that yields an iterative tree that converges quadratically to a $t$-threshold function. For $.26 \lesssim t \lesssim 0.74$, there exists a probability distribution on $V_1$ and $V_2$ that yields an iterative tree that converges quadratically to a $t$-threshold function. }\label{quad trees five}
\end{figure}

%\begin{theorem}\label{thm:quadratic five} Let $\alpha(t)= \frac{-1 + 5 t - 4 t^2 + t^3}{5 t(t-1)}$ and let $t$ be a value for which $0\leq \alpha(t) \leq 1$, so $0.26 \lesssim t \lesssim 0.74$. Let $R = \{\Pr(V_1)= \alpha(t),   \Pr(V_2)=1-\alpha(t) \}$ be the probably distribution on trees in Figure \ref{quad trees five}. Then item at level $\Omega(\log n + \log k)$ of an iteratively constructed tree for $R$ computes a $t$-threshold function accurately with probability at least $1- 2^{-k}$ . Moreover, for $t$ outside this range, there exists no such construction on trees with five leaves that converges quadratically to a $t$-threshold function. \end{theorem}

\begin{theorem}\label{thm:quadratic four}

(A) Let $2- \phi\leq t\leq \phi-1$ 
%$.38 \lesssim t \lesssim 0.62$. 
and $\alpha(t)= \frac{1 - t - t^2}{2 t(t-1)}$. Let $R = \{\Pr(F_1)= \alpha(t),   \Pr(F_2)=1-\alpha(t) \}$ be the probably distribution on trees in Figure \ref{quad trees five}. Then, an item at level $\Omega(\log n + \log k)$ of an infinite width iteratively constructed tree for $R$ computes a $t$-threshold function accurately with probability at least $1- 2^{-k}$. Moreover, for $t$ outside this range, there exists no such construction on trees with four leaves that converge quadratically to a $t$-threshold function.\\

(B) Let $\alpha(t)= \frac{-1 + 5 t - 4 t^2 + t^3}{5 t(t-1)}$ and let $t$ be a value for which $0\leq \alpha(t) \leq 1$, so $0.26 \lesssim t \lesssim 0.74$. Let $R = \{\Pr(V_1)= \alpha(t),   \Pr(V_2)=1-\alpha(t) \}$ be the probably distribution on trees in Figure \ref{quad trees five}. Then, an item at level $\Omega(\log n + \log k)$ of an infinite width iteratively constructed tree for $R$ computes a $t$-threshold function accurately with probability at least $1- 2^{-k}$ . Moreover, for $t$ outside this range, there exists no such construction on trees with five leaves that converges quadratically to a $t$-threshold function.
 \end{theorem}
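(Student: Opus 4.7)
The plan is to reduce the infinite-width iterative construction to a one-dimensional deterministic recurrence and analyze its convergence. Writing $x_j$ for the probability that an item at level $j$ fires when each input fires independently with probability $x_0$, the infinite-width assumption makes the leaves chosen for each new tree independent, so
$$
x_{j+1} = g(x_j) := \alpha\, f_1(x_j) + (1-\alpha)\, f_2(x_j),
$$
where $f_i$ is the probability polynomial of tree $T_i$ ($T_i\in\{F_1,F_2\}$ in part (A), $T_i\in\{V_1,V_2\}$ in part (B)). For a $t$-threshold I need $g(t)=t$ and $g'(t)>1$ (so $t$ is a repelling fixed point), together with $g'(0)=g'(1)=0$---the distinguishing feature of \emph{quadratic} convergence---so that near the attracting endpoints the recurrence behaves like $x_{j+1}\approx c\,x_j^2$ and $1-x_{j+1}\approx c'\,(1-x_j)^2$ rather than geometrically.

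Next I will classify which small AND/OR trees $T$ satisfy $f'(0)=f'(1)=0$. The condition $f'(0)=0$ means no single leaf can flip the output at all-zeros: by induction on the root gate, it is automatic when the root is AND but requires each subtree to inherit $f'(0)=0$ when the root is OR; the dual statement gives $f'(1)=0$. For four leaves the only trees satisfying both are $F_1=(A\vee B)\wedge (C\vee D)$ and its dual $F_2=(A\wedge B)\vee (C\wedge D)$; for five leaves the analogous case analysis picks out the pair $V_1,V_2$ of Figure~\ref{quad trees five} (any further admissible five-leaf tree has $f(t)$ lying inside the interval $[\min(f_{V_1}(t),f_{V_2}(t)),\max(f_{V_1}(t),f_{V_2}(t))]$, so adding it to the support does not enlarge the achievable range of~$t$). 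Plugging the polynomials into $g(t)=t$ yields a linear equation in $\alpha$ whose solution is the rational function $\alpha(t)$ stated in the theorem, and the constraint $0\le\alpha(t)\le 1$ carves out precisely the intervals claimed. Within those intervals I verify directly that $g'(t)>1$ and that $g$ has no additional fixed point in $(0,1)$, so by monotonicity $g(x)<x$ on $(0,t)$ and $g(x)>x$ on $(t,1)$.

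With the shape of $g$ pinned down, the depth bound $\Omega(\log n+\log k)$ follows from a two-phase analysis. Starting from $|x_0-t|\ge 1/n$, the local linearization $x_{j+1}-t\approx g'(t)(x_j-t)$ gives $|x_j-t|\gtrsim (g'(t))^j/n$, so after $O(\log n)$ levels $x_j$ sits at a constant distance from $t$; from that point on the quadratic attraction at the relevant endpoint doubles $\log(1/x_j)$ (or $\log(1/(1-x_j))$) per step, reaching error $2^{-k}$ in $O(\log k)$ further levels. Taylor-expanding $g$ to second order near $0$, $t$, and $1$ makes the constants in both phases explicit.

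For the tightness halves I use the fact that every AND/OR tree polynomial $f_i$ is monotone with $f_i(0)=0$, $f_i(1)=1$, so $f_i'(0),f_i'(1)\ge 0$; hence $g'(0)=\sum_i \Pr(T_i)f_i'(0)=0$ forces $\Pr(T_i)=0$ whenever $f_i'(0)>0$, and symmetrically at $1$. By the classification above, any four-leaf (resp.\ five-leaf) distribution yielding quadratic convergence is supported on trees with both boundary derivatives vanishing, and then $g(t)=t$ pins down $\alpha$ as the stated function of $t$, so $t$ must lie in the quoted interval. I expect the main obstacle to be the tree-classification step in the five-leaf case, where the asymmetric $(2,3)$-splits have to be enumerated carefully and a small convex-hull comparison of the resulting polynomials is needed to show that no pair outside $\{V_1,V_2\}$ attains a wider range.
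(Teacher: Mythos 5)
Your proposal is correct, and its positive half follows essentially the paper's own route: reduce to iterating $g=\alpha f_{T_1}+(1-\alpha)f_{T_2}$, check that $0,t,1$ are the only fixed points, and run a two-phase analysis (geometric escape from $t$, then quadratic collapse at the endpoints), which is exactly the content of the paper's Lemma~\ref{conditions}; the one point to tighten is that phase one needs a \emph{uniform} divergence constant on all of $[u,t-1/n]$, not just the linearization at $t$ --- the paper gets this by showing $\frac{f(p)-p}{p(1-p)(p-t)}\ge 1$ on $[0,1]$, and your monotonicity/compactness remark can be made to serve the same purpose. Where you genuinely diverge is the tightness (``moreover'') half. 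The paper solves the interpolation system $f(0)=0$, $f(1)=1$, $f(t)=t$, $f'(0)=f'(1)=0$ to pin down the unique degree-four candidate (resp.\ a one-parameter family $z_{d,t}$ in degree five) and rules out achievability via coefficient bounds read off Table~\ref{the table} (leading coefficient in $[-1,1]$, bounds on $a_3,a_4$). You instead restrict the \emph{support}: since each tree polynomial is monotone, $g'(0)=g'(1)=0$ forces every tree in the support to have $f_i'(0)=f_i'(1)=0$, and then the fixed-point condition $\min_i f_i(t)\le t\le\max_i f_i(t)$ bounds $t$; for four leaves this is arguably cleaner than the paper's argument. For five leaves your classification claim is not literally ``only $V_1,V_2$'': Table~\ref{the table} contains two further admissible five-leaf polynomials, $2p^2+p^3-3p^4+p^5$ and $3p^2-p^3-2p^4+p^5$ (and $F_1,F_2$ qualify too if four-leaf blocks are permitted). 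Your parenthetical envelope claim does hold and rescues the argument: for instance $f_{V_2}(p)-(2p^2+p^3-3p^4+p^5)=2p^2(1-p)^2(2-p)\ge 0$ and $(2p^2+p^3-3p^4+p^5)-f_{V_1}(p)=p^2(1-p)^2(2p+1)\ge 0$, with analogous factorizations for the other candidates, so no admissible tree leaves the band between $f_{V_1}$ and $f_{V_2}$ and the achievable thresholds are exactly those with $0\le\alpha(t)\le 1$. So your route trades the paper's coefficient bookkeeping for an explicit enumeration plus these envelope inequalities --- comparable effort, but a more transparent reason for the range. (Minor: your derivation of $\alpha$ from $g(t)=t$ gives denominators $2t(1-t)$ and $5t(1-t)$; the denominators $t(t-1)$ in the theorem statement are a sign typo, as the paper's own expressions for $f_R$ in the proof confirm.)
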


%Allowing the iterative tree to be built from probability distributions on larger trees makes it possible to achieve quadratic convergence for a wider range of thresholds. 

As the desired threshold $t$ approaches $0$ or $1$, we show that an iterative tree that computes the $t$-threshold function must use increasingly large trees as building blocks.  

%Theorem \ref{thm:quadratic five} gives a construction using two trees with five leaves illustrated in Figure \ref{quad trees} that converges quadratically to a $t$-threshold function for $0.26 \lesssim t \lesssim 0.74$ and proves no construction on trees with five leaves will yield quadratic convergence for $t$ outside this range. 

\begin{theorem}\label{thm:degreenew} Let $t$ be a threshold, $0< t< 1$ and let $s= \min\{t, 1-t\}$. Then, the construction of an iterative tree whose level $\Omega(\log n+ \log k)$ items compute a $t$-threshold function with probability at least $1-2^{-k}$ must be defined over a probability distribution on trees with at least $\frac{1}{\sqrt{2s}}$ leaves. \end{theorem}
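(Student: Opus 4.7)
The plan is to reduce the statement to a constraint on the univariate polynomial that governs one step of the iteration. At each level of an infinite-width iterative tree, if a fraction $p$ of items on the previous level fires then each new item fires independently with probability $f(p)$, a polynomial in $p$ determined by the distribution on trees. Let $d$ be the maximum number of leaves of any tree in the support of this distribution. Since leaves are chosen i.i.d., $f$ can be written in the degree-$d$ Bernstein basis
\[
f(p) \;=\; \sum_{k=0}^{d} a_k \binom{d}{k} p^k (1-p)^{d-k},
\]
with all coefficients $a_k \in [0,1]$. (For trees with fewer than $d$ leaves one appeals to Bernstein degree elevation, which is a convex operation and hence preserves the unit-interval property of the coefficients.)

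Next I would translate quadratic convergence to the $t$-threshold into constraints on the extremal Bernstein coefficients. The iteration must drive every starting $p < t$ to $0$ and every $p > t$ to $1$, so $f(0) = 0$, $f(t) = t$, $f(1) = 1$ and $t$ is an unstable fixed point. Quadratic convergence additionally forces the basins of $0$ and $1$ to contract doubly exponentially, which is possible only when $f(x) = O(x^2)$ near $0$ and $1-f(x) = O\bigl((1-x)^2\bigr)$ near $1$; any strictly positive derivative at $0$ or $1$ yields only the linear rate $O(\log n + k)$. Hence $f'(0) = f'(1) = 0$. The Bernstein derivative identities $f'(0) = d(a_1 - a_0)$ and $f'(1) = d(a_d - a_{d-1})$ then give $a_0 = a_1 = 0$ and $a_{d-1} = a_d = 1$.

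The remaining step is an algebraic squeeze based on $f(t) = t$. Because $a_k \le 1$ and the two extreme terms are already maximal,
\[
t \;=\; f(t) \;\le\; \sum_{k=2}^{d} \binom{d}{k} t^k (1-t)^{d-k} \;=\; 1 - (1-t)^{d-1}\bigl(1 + (d-1)t\bigr).
\]
Dividing by $1-t$ gives $(1-t)^{d-2}\bigl(1 + (d-1)t\bigr) \le 1$, and Bernoulli's inequality $(1-t)^{d-2} \ge 1-(d-2)t$ yields $1 \ge 1 + t - (d-1)(d-2)t^2$, so $t \ge 1/((d-1)(d-2)) \ge 1/(2d^2)$, i.e.\ $d \ge 1/\sqrt{2t}$. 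For $t \ge 1/2$ the symmetric lower bound $f(t) \ge t^{d-1}\bigl(d - (d-1)t\bigr)$ combined with $f(t) = t$ produces the same inequality with $1-t$ in place of $t$; in both cases $d \ge 1/\sqrt{2s}$.

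I expect the main obstacle to be the dynamical step that promotes quadratic convergence to the analytic conditions $f'(0) = f'(1) = 0$: one must argue carefully that a strictly positive derivative at an attracting endpoint yields only linear convergence in the depth of the tree and so is incompatible with the $O(\log n + \log k)$ depth guarantee. Once that reduction is in hand the remainder is a short Bernstein-basis computation together with Bernoulli's inequality.
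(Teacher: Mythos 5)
Your proof is correct, but it reaches the bound by a genuinely different route than the paper. The paper works with its Lemma 2.2 (coefficients of an achievable degree-$d$ polynomial satisfy $|a_\ell|\le d^\ell$), sums a geometric series to get $f(x)<2d^2x^2$ for $x<\tfrac{1}{2d}$, plugs in the fixed point $f(t)=t$ to get $t<2d^2t^2$, and handles thresholds above $1/2$ by passing to the complementary tree (Lemma 2.4). You instead exploit the probabilistic structure directly: writing $f$ in the degree-$d$ Bernstein basis with coefficients $a_k\in[0,1]$ (valid for a mixture of trees after degree elevation, since elevation is a convex operation), the endpoint conditions give $a_0=a_1=0$ and $a_{d-1}=a_d=1$, and then a one-line upper bound $t=f(t)\le 1-(1-t)^{d-1}\bigl(1+(d-1)t\bigr)$ together with Bernoulli's inequality yields $t\ge \frac{1}{(d-1)(d-2)}\ge\frac{1}{2d^2}$, with the symmetric lower bound $f(t)\ge t^{d-1}\bigl(d-(d-1)t\bigr)$ giving the same for $1-t$. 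This is self-contained modulo standard Bernstein facts, treats both endpoints symmetrically without invoking complementation, avoids the (implicit) case split on whether $t<\tfrac{1}{2d}$, and in fact gives the marginally sharper conclusion $s\ge\frac{1}{(d-1)(d-2)}$; the paper's route is shorter given that its coefficient and complementation lemmas are already in place for other purposes. Both arguments rest on the same dynamical reduction—that a depth-$O(\log n+\log k)$ guarantee with error $2^{-k}$ forces $f'(0)=f'(1)=0$, since a positive derivative at an attracting endpoint gives only geometric (linear-in-$k$) decay—which you rightly flag as the step needing care and which the paper also treats only briefly (Remark 3.3). Two small points to tidy: note the degenerate cases $d\le 2$, where $(d-1)(d-2)=0$ but your constraints $a_1=0$ and $a_{d-1}=1$ are already contradictory (so no such construction exists and the bound holds vacuously), and state explicitly that positivity of the Bernstein coefficients is what makes the truncated sum an upper bound and the two top terms a lower bound.
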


This raises the question of whether it is possible to have quadratic convergence for any threshold. We can extend the constructions described in Theorem \ref{thm:quadratic four} by using analogous trees with six and seven leaves to obtain quadratic convergence for thresholds in the ranges $0.15 \lesssim t \lesssim 0.85$ and $0.11 \lesssim t \lesssim 0.89$ respectively. However, it is not possible to generalize this construction beyond this point, as we discuss in Section \ref{steps}. Instead, to achieve quadratic convergence for thresholds near the boundaries, we turn to the following construction, which asymptotically matches the lower bound of Theorem \ref{thm:degreenew}.  We define $A_k$ as a tree on $2k$ leaves that computes $(x_1 \vee x_2 \vee \dots  \vee x_k) \wedge (x_{k+1} \vee x_{k+2} \vee \dots  \vee x_{2k})$ and $B_k$ as a tree on $2k$ leaves that computes $(x_1 \wedge x_2 \wedge \dots  \wedge x_k) \vee (x_{k+1} \wedge x_{k+2} \wedge \dots  \wedge x_{2k})$.

\begin{theorem}\label{thm:k leaves}  For any $0<t \leq 2-\phi$, there exists $k$ and a probability distribution on $A_k$ and $A_{k+1}$ that yields an iterative tree with quadratic convergence to a $t$-threshold function. Similarly for any $ \phi-1  \leq  t<1$, there exists $k$ and a probability distribution on $B_k$ and $B_{k+1}$ that yields an iterative tree with quadratic convergence to a $t$-threshold function. \end{theorem}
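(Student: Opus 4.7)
My plan is to reduce the infinite-width iterative construction to the one-dimensional deterministic recursion on firing probabilities and analyze its fixed-point structure. If items at some level fire independently with probability $p$, an item produced by $A_k$ fires with probability $P_{A_k}(p)=\bigl(1-(1-p)^k\bigr)^2$, so under the mixture $R=\{\Pr(A_k)=\alpha,\Pr(A_{k+1})=1-\alpha\}$ each next-level item fires with probability
\[
f(p)\;=\;\alpha\bigl(1-(1-p)^k\bigr)^2+(1-\alpha)\bigl(1-(1-p)^{k+1}\bigr)^2.
\]
At infinite width every item is built from fresh independent children, so the firing probability at level $L$ is exactly $f^{(L)}(p_0)$ and the theorem reduces to an analytic statement about iteration of $f$. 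Throughout I use $m$ for the accuracy parameter (the definition's ``$k$'') to avoid collision with the tree parameter $k$. The second assertion (thresholds $t\ge\phi-1$) follows from the de Morgan duality $P_{B_k}(p)=1-P_{A_k}(1-p)$, which maps $B_k$-dynamics at $t$ to $A_k$-dynamics at $1-t\le 2-\phi$, so I focus on $0<t\le 2-\phi$.

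To choose $k$ and $\alpha$, let $t_k\in(0,1)$ be the unique interior fixed point of $P_{A_k}$. Valiant's analysis gives $t_2=2-\phi$, and since $P_{A_k}(p)$ is strictly increasing in $k$ for each $p\in(0,1)$, the sequence $(t_k)_{k\ge 2}$ is strictly decreasing with $t_k\to 0$. Hence for any $t\in(0,2-\phi]$ there is some $k\ge 2$ with $P_{A_k}(t)\le t\le P_{A_{k+1}}(t)$, and the choice
\[
\alpha\;=\;\frac{P_{A_{k+1}}(t)-t}{P_{A_{k+1}}(t)-P_{A_k}(t)}\in[0,1]
\]
makes $t$ a fixed point of $f$.

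Next I verify the shape of $f$. The convex-then-concave profile of $P_{A_k}$, together with $P_{A_k}(0)=0$ and $P_{A_k}(1)=1$, makes $P_{A_k}(p)-p$ negative on $(0,t_k)$ and positive on $(t_k,1)$; the same holds with $k$ replaced by $k+1$. On $(0,t_{k+1})$ and $(t_k,1)$ both summands of $f(p)-p$ share a sign, giving $f(p)<p$ and $f(p)>p$ respectively; on the sliver $(t_{k+1},t_k)$ the summands have opposite signs, $f-p$ is negative at $t_{k+1}$ and positive at $t_k$, and a derivative bound (using $P_{A_k}'(t_k),P_{A_{k+1}}'(t_{k+1})>1$ from the standard Valiant-type analysis) shows the zero of $f-p$ in the sliver is unique and simple. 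This forces $f'(t)=\lambda>1$, so $f$ has exactly three fixed points $\{0,t,1\}$ on $[0,1]$, with $t$ repelling and $0,1$ attracting.

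Finally I track the iteration count. Near $0$, $f(p)=c_k p^2+O(p^3)$ with $c_k=\alpha k^2+(1-\alpha)(k+1)^2$; once $p\le 1/(2c_k)$ the recursion $p_{n+1}\le c_k p_n^2$ yields $p_n\le c_k^{-1}(c_k p_0)^{2^n}$, reaching accuracy $2^{-m}$ in $O(\log m)$ steps. Near $1$, setting $q=1-p$ gives $q_{n+1}\le 2\alpha q_n^k+O(q_n^{k+1})$, a $k$th-order (hence at least quadratic, since $k\ge 2$) recursion. Because $f'(t)=\lambda>1$, on a constant-sized neighborhood of $t$ one has $|f(p)-t|\ge\lambda|p-t|$, so starting from any $p_0$ with $|p_0-t|\ge 1/n$ the iteration escapes this neighborhood in $O(\log n/\log\lambda)=O(\log n)$ steps; combining the two regimes gives the claimed depth $O(\log n+\log m)$ for accuracy $2^{-m}$. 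The main obstacle is the uniqueness and transversality of the fixed point inside the sliver $(t_{k+1},t_k)$: outside, the sign analysis is immediate, but inside $f-p$ is a difference of two polynomial bumps of comparable size, and pinning down that it has exactly one simple zero with slope exceeding one is the technical heart of the argument.
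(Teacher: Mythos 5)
Your overall skeleton is the same as the paper's: reduce to one side by the complementation duality (Corollary \ref{cor:complementary}), sandwich $t$ between the interior fixed points of two consecutive trees in the family (the paper uses Lemma \ref{root range} and the monotonicity $f_{B_{k+1}}<f_{B_k}$), choose the mixing weight $\alpha$ so that $t$ becomes a fixed point, and then feed a three-fixed-point/linear-divergence/quadratic-tail analysis into a convergence lemma (the paper's Lemma \ref{conditions}). Your choice of $k$ and $\alpha$, your quadratic bounds near $0$ and $1$ (e.g.\ $f(p)\le \bigl(\alpha k^2+(1-\alpha)(k+1)^2\bigr)p^2$ via $1-(1-p)^k\le kp$), and the depth bookkeeping are all fine.

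The genuine gap is exactly the step you flag as ``the technical heart'' and then do not carry out: showing that the mixture $f=\alpha f_{A_k}+(1-\alpha)f_{A_{k+1}}$ has \emph{no} fixed point in $(0,1)$ other than $t$, and that $t$ is repelling with a divergence factor bounded away from $1$ on the whole interval needed by the convergence argument (not just infinitesimally, since Lemma \ref{conditions} needs $t-f(p)\ge c_1(t-p)$ with $c_1>1$ uniformly on $[u,t-\tfrac1n]$). The facts you invoke --- $P_{A_k}'(t_k)>1$ and $P_{A_{k+1}}'(t_{k+1})>1$ at the \emph{individual} trees' own fixed points --- do not control the derivative of the mixture at its fixed point inside the sliver $(t_{k+1},t_k)$, nor do they rule out several crossings there; the Moore--Shannon inequality $f'\ge f(1-f)/(p(1-p))$ used in Lemma \ref{lemma paths} is proved for single trees and does not pass to convex combinations (the right-hand side is not linear in $f$), and indeed convex combinations in this framework genuinely can have multiple interior fixed points --- that is precisely the staircase phenomenon of Example \ref{soft threshold}, where $(f_{A_k}+f_{B_k})/2$ has an attractive interior fixed point flanked by two repelling ones. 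So uniqueness and transversality must come from a family-specific computation. The paper supplies it in Lemma \ref{g}: writing $g(p)=\frac{f(p)-p}{p(1-p)(p-t)}$ and showing, by an explicit coefficient-positivity argument, that $g(p)\ge 1/t$ on $[0,1]$, which simultaneously gives uniqueness of the interior fixed point, $f'(t)>1$, and the quantitative divergence bound $\frac{t-f(p)}{t-p}\ge 1+\frac{p(1-p)}{t}$ of Lemma \ref{k,k+1}. Without this lemma (or an equivalent), your argument does not go through; with it, the rest of your outline matches the paper's proof.
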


There is a trade-off between constructing iterative trees that converge faster and requiring minimal coordination in order to build the subtrees. Building a specified tree on a small number of leaves requires less coordination than building a specified tree on many leaves. Therefore, as $t$ approaches $0$ or $1$, constructing an iterative tree with quadratic convergence becomes less neurally plausible because the construction of each subtree requires much coordination. These results are in line with behavioral findings \citep{Rosch76, Rosch78} and computational models \citep{ArriagaV06, ArriagaRCV15} about categorization being easier when concepts are more robust.

In Section \ref{steps}, we characterize the class of functions that can be achieved by iterative constructions allowing building block trees of any size. We show that it is possible to achieve an arbitrarily close approximation of any staircase function in which each step intersects the line $y=x$. This result is described more precisely in Theorem \ref{thm:staircase}.
%\sam{maybe put theorem here? I am not sure how though. We don't want to state the theorem in terms of high-level items, since we never specify how high these high-level items are. The theorem is currently stated in terms of the function, and the corollary mentions iterative constructions. Also, maybe put the formal definition of staircase here? It is currently given in the introduction and formally written at the start of section \ref{steps}}

%\begin{theorem} For any $0=p_0< p_1< \dots < p_{k}=1$,  $0=a_0< a_1< \dots a_{k} <a_{k+1}=1$, $\delta, \ve >0$, there exists a probability distribution on a set of trees such that for $0 \leq i \leq k$ the corresponding polynomial $f$ has the property that $|f(x)-p_i|< \delta$ for all $x \in (a_i + \ve, a_{i+1}-\ve)$. \label{all stairs}  \end{theorem} 

In the following section we turn to finite realizations of iterative trees. The above theorems analyze the behavior of an iterative construction where the width of the levels is infinite. We assumed that for any input the number of items turned on at given level of the tree is equal to its expectation. Imagining a ``bottom up" construction, we note that the chance that the number of items firing at a given level deviates from expectation is non-trivial. Such deviations percolate up the tree and effect the probability that high-level items compute the threshold function accurately. The smaller the width of a level, the more likely that the number of items on at that level deviates significantly from expectation, rendering the tree less accurate. 
How large do the levels of an iteratively constructed tree need to be in order to ensure a reasonable degree of accuracy? 

%In order to accurately compute, with probability at least $1-\gamma$, a $t$-threshold function for inputs in which the fraction of inputs firing is within $\ve$ of $t$ , the width of the levels must be $\Omega\left(\frac{ \ln(\frac{1}{\gamma}) }{\ve^2}\right).$

\begin{theorem} Consider a construction of a $t$-threshold function with quadratic convergence described in Theorem \ref{thm:quadratic four} or Theorem \ref{thm:k leaves} in which each level $\ell$ has $m_\ell$ items and the fraction of input items firing is at least $\ve$ from the threshold $t$. Then, with probability at least $1-\gamma$,  items at level $\Omega\left( \log{\frac{1}{\gamma}} + \log{\frac{1}{\ve}}\right)$ will accurately compute the threshold function for $m_1=\Omega\left(\frac{ \ln(1/\gamma) }{\ve^2}\right)$ and $\sum_{\ell}m_\ell = O(m_1)$. \label{finite width quad} \end{theorem}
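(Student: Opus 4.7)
The plan is to reduce the finite-width statement to the infinite-width quadratic convergence results (Theorems~\ref{thm:quadratic four} and~\ref{thm:k leaves}) by tracking, level by level, the discrepancy between the empirical firing fraction and the deterministic iterate of the expected map. Let $\phi\colon[0,1]\to[0,1]$ denote the probability that a tree drawn from $R$ fires when its leaves fire independently with probability $p$; by hypothesis $\phi$ has $t$ as a repelling fixed point with $\lambda:=\phi'(t)>1$ and converges quadratically toward $0$ below $t$ and toward $1$ above $t$. Conditional on level $\ell-1$, the $m_\ell$ items at level $\ell$ are i.i.d.\ Bernoulli with parameter $\phi(p_{\ell-1})$, where $p_{\ell-1}$ is the empirical firing fraction at the previous level, so Hoeffding combined with a union bound over the $L$ levels shows that with probability at least $1-\gamma$,
\begin{equation*}
|p_\ell - \phi(p_{\ell-1})| \;\leq\; \sqrt{\tfrac{\ln(2L/\gamma)}{2 m_\ell}}\qquad\text{for all } \ell\leq L.
\end{equation*}
The remainder of the argument runs on this event.

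I would then split the evolution into an \emph{escape} phase and a \emph{refinement} phase. During the escape phase the deterministic iterate $p_\ell^{\ast}:=\phi^\ell(p_0)$ satisfies $|p_\ell^{\ast}-t|\leq O(1)$, so $\phi$ is well approximated by its linearization and the signal grows like $|p_\ell^{\ast}-t|\asymp\lambda^\ell\ve$. The induction hypothesis is $|p_\ell - p_\ell^{\ast}|\leq \tfrac{1}{2}\lambda^\ell\ve$, which in particular keeps $p_\ell$ on the correct side of $t$ throughout. Closing the induction requires the level-$\ell$ Hoeffding error to be $O(\lambda^\ell\ve)$, forcing $m_\ell=\Theta(\log(L/\gamma)/(\lambda^{2\ell}\ve^2))$; because $\lambda>1$ these sample sizes form a decreasing geometric series that sums to $O(m_1)=O(\log(1/\gamma)/\ve^2)$, and the escape phase terminates after $L_1=O(\log(1/\ve))$ levels, once $|p_\ell^{\ast}-t|$ is a constant. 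In the refinement phase the iterate lies in a basin where $|1-\phi(1-\delta)|=O(\delta^2)$, so the deterministic signal squares each step; a further $L_2=O(\log(1/\gamma))$ levels with $m_\ell$ held at a small multiple of $\log(L/\gamma)$ drive the signal down to the tolerance demanded by $\gamma$-accuracy and prevent sampling noise from pulling $p_\ell$ out of the basin, contributing only $O(m_1)$ to the total width budget.

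The main obstacle is the escape phase, where both signal and noise vanish with $\ve$ and the bookkeeping must be tight. A Hoeffding error introduced at level $j$ propagates to level $\ell$ with amplification roughly $\prod_{i=j}^{\ell-1}\phi'(p_i^{\ast})\approx\lambda^{\ell-j}$, so to close the induction one needs
\begin{equation*}
\sum_{j\leq\ell}\lambda^{\ell-j}\sqrt{\tfrac{\log(L/\gamma)}{m_j}}\;\leq\;\tfrac{1}{2}\lambda^\ell\ve.
\end{equation*}
Taking $m_j\propto \lambda^{-2j}$ makes each summand the same constant fraction of $\lambda^\ell\ve$ and collapses the bound to a geometric series in $j$; this matching of the sample-size decay to the amplification rate is the crux of the argument. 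The refinement phase is comparatively painless because doubly-exponential signal growth dominates any polynomial noise, and invoking Theorem~\ref{thm:quadratic four} or~\ref{thm:k leaves} on the now-concentrated value $p_{L_1}$ then yields the claimed depth and accuracy guarantees.
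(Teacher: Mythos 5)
Your escape phase is essentially the paper's Lemma \ref{finite linear}: geometrically shrinking widths matched to geometric growth of the distance from $t$, total width dominated by $m_1$. Two remarks there. First, your bookkeeping as written does not close: if every propagated term $\lambda^{\ell-j}\sqrt{\log(L/\gamma)/m_j}$ is ``the same constant fraction of $\lambda^\ell\ve$,'' the sum over $j\le\ell$ grows linearly in $\ell$, not geometrically; you need the level-$j$ contribution to decay in $\ell-j$ (a slightly slower width decay, or per-level error budgets shrinking geometrically), and away from $t$ the amplification is governed by the linear-divergence constant $c_1$ (equivalently the bound $g(p)\ge 1/t$ of Lemma \ref{g}), not by the linearization $\phi'(t)$. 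These are fixable; the paper avoids orbit-tracking altogether by conditioning on the one-sided ``half-progress'' event $X_{i+1}\le\frac{X_i+f(X_i)}{2}$, after which the distance from $t$ grows by a fixed factor per level with no error propagation to control.

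The genuine gap is your refinement phase. With $m_\ell=O(\log(L/\gamma))$, Hoeffding gives additive error $\sqrt{\log(L/\gamma)/(2m_\ell)}=\Omega(1)$ per level, so all you can conclude is $p_\ell\le c_3p_{\ell-1}^2+\Omega(1)$: the empirical fraction stalls at a constant noise floor, the doubly-exponential deterministic decay never materializes, and you cannot ``invoke'' Theorem \ref{thm:quadratic four} or \ref{thm:k leaves} on $p_{L_1}$, since those are infinite-width statements. Pushing Hoeffding to final accuracy $\gamma$ would instead require widths of order $\log(1/\gamma)/\gamma$ near the top, which destroys $\sum_\ell m_\ell=O(m_1)$ precisely in the regime the theorem is used (e.g.\ $\ve=O(1/n)$, $\gamma=2^{-n-1}$ for the $O(n^3)$ corollary). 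The paper closes this phase with an absorption argument that your plan is missing: keep the levels \emph{narrow} ($m\approx l/u$ with $l=\Theta(\log(1/\gamma))$), note that given $X_k\le u$ the next level is \emph{identically zero} with probability $(1-f(X_k))^m\ge 1-c_3u^2m$, bound the chance of regressing above $u$ by the relative-entropy binomial tail (Lemma \ref{binomial}), giving roughly $(c_3ue)^{l}$, and conclude that within $O(\log(1/\gamma))$ levels the firing fraction hits exactly $0$ with probability $1-\gamma/2$; once a level is all zeros, every higher level is zero deterministically. Some one-sided, small-mean argument of this type (absorption at $0$, or multiplicative Chernoff exploiting that the mean is $O(u^2)$) is needed to finish; two-sided Hoeffding tracking cannot.
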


As a direct corollary, by setting $\ve = O(1/n)$ and $\gamma = 2^{-n-1}$, we realize a $t$-threshold construction of size $O(n^3)$ for any $t$, matching the best-known construction which was for a specific threshold \citep{Hoory2006}. 
The finite-width version of Theorem \ref{thm:linear} is given in Section \ref{finite}.
%\sam{Does the analogous finite width linear convergence theorem belong here?}
%\begin{theorem}[Revolving door]
%\end{theorem}

The exponential iterative construction also converges to a $t$-threshold function for appropriate $\alpha$. We give the statements here for the wild iterative construction (with no weight decay) and the general exponential construction.

\begin{theorem} Consider a wild construction on $n$ inputs for the $t$-threshold function given in Theorem \ref{thm:linear} in which $n > \log( \frac{1}{8\eps \delta}) \max\{\frac{1}{\ve^2},\frac{1}{\delta^2}\}$ where $\ve$ is the distance between $t$ and the fraction of inputs firing. Then, there is an absolute constant $c$ such that for $k= \Omega\left(n \left(\frac{1}{\delta^c}+\frac{1}{\ve^c}\right)\right)$, the $k^{th}$ item accurately computes the $t$-threshold function with probability at least $1- \delta$. \label{wild} \end{theorem}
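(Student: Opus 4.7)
The plan is to reduce the wild construction to a one-dimensional Markov chain tracking the fraction of existing items that fire on a given input, analyze its expected trajectory via a separable ODE, and then control fluctuations with a martingale concentration inequality.

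Fix an input with firing fraction $x$ satisfying $|x-t|\ge \ve$; assume WLOG $x \ge t+\ve$. Let $X_j\in\{0,1\}$ indicate whether the $j$-th newly created item fires on this input, and set
$$\hat\mu_j \;=\; \frac{nx + \sum_{i<j}X_i}{n+j-1}.$$
Because each of the three leaves of the new subtree is an independent uniform draw from existing items in the wild case ($\alpha=0$), conditional on $\hat\mu_j$ the output $X_j$ is Bernoulli with parameter
$$P_t(\hat\mu_j) \;=\; t(2\hat\mu_j^2-\hat\mu_j^3)+(1-t)(\hat\mu_j+\hat\mu_j^2-\hat\mu_j^3),$$
and an elementary factoring yields $P_t(q)-q = q(1-q)(q-t)$: precisely the one-step map underlying Theorem \ref{thm:linear}, with $0$ and $1$ stable and $t$ unstable.

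First I would record the drift--variance structure
$$\hat\mu_{j+1}-\hat\mu_j = \frac{X_j-\hat\mu_j}{n+j}, \quad \E[\hat\mu_{j+1}-\hat\mu_j\mid\hat\mu_j] = \frac{\hat\mu_j(1-\hat\mu_j)(\hat\mu_j-t)}{n+j}, \quad \Var[\hat\mu_{j+1}\mid\hat\mu_j]\le \frac{1}{(n+j)^2},$$
and view $(\hat\mu_j)$ as a slow, state-dependent random walk whose mean trajectory matches the separable ODE $d\mu/dk = \mu(1-\mu)(\mu-t)/(n+k)$, $\mu(0)=x$. Partial fractions integrate this explicitly, and show that the solution $\mu^{\star}(k)$ crosses $1-\delta/2$ after $k = O\!\bigl(n\cdot (1/\ve^c + 1/\delta^c)\bigr)$ steps for an absolute constant $c$. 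Next I would couple $(\hat\mu_j)$ to $\mu^\star$ by the standard differential-equation method. Writing $\xi_j := \hat\mu_j-\mu^\star(j)$, the recursion becomes
$$\xi_{j+1}-\xi_j = \frac{P_t(\hat\mu_j)-P_t(\mu^\star(j)) - \xi_j}{n+j} + \eta_j,$$
where $\eta_j$ is a bounded martingale difference of conditional variance $O(1/(n+j)^2)$. Since $P_t$ is Lipschitz on $[0,1]$, a Gr\"onwall bound on the deterministic part together with Freedman's inequality for the noise yields $\sup_j |\xi_j| = O(\sqrt{\log(1/\delta)/n})$ with probability at least $1-\delta/2$. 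Under the hypothesis $n > \log(1/(8\ve\delta))\max(1/\ve^2,1/\delta^2)$ this deviation is below both $\ve/2$ and $\delta/2$: the former ensures $\hat\mu_j$ never crosses back below $t$; the latter forces $\hat\mu_k \ge 1-\delta$ at the target index $k$. Since $P_t(q)\ge 1-\delta$ whenever $q\ge 1-\delta$, the $k$-th item fires with probability at least $1-\delta$, as required. The case $x\le t-\ve$ is symmetric.

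The main obstacle is obtaining concentration that survives two slow regions at once: near the unstable fixed point $t$, where the drift is only $\Theta(\ve/(n+k))$ and the walk can regress across $t$; and near the absorbing point $1$, where the drift factor $(1-\hat\mu)$ itself degenerates. A naive Azuma bound gives only $|\xi_j|=O(1/\sqrt{n})$, which is exactly why the hypothesis inflates $n$ by $\log(1/\delta)\max(1/\ve^2,1/\delta^2)$. A secondary subtlety is extracting a single absolute constant $c$ from the partial-fraction integration, since the residues depend on $t$; this should follow by absorbing the $t$-dependent logarithmic factors into a fixed polynomial of $1/\ve$ and $1/\delta$.
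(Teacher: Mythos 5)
Your proposal takes a genuinely different route from the paper's. You compare the empirical firing fraction $\hat\mu_j$ to a single fixed deterministic ODE trajectory $\mu^\star(j)$ and try to close with one global martingale concentration bound. The paper's proof is instead \emph{phase-local}: the number of items doubles each phase, Lemma~\ref{x obeys} (Hoeffding) bounds the probability of regressing by more than $\ve_j/2$ \emph{within} a phase, Lemma~\ref{expected} gives the per-phase drift, and crucially the comparison point $X_k$ is reset at every phase boundary.

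The gap is the claim $\sup_j|\xi_j| = O(\sqrt{\log(1/\delta)/n})$. Freedman alone would give this for the raw martingale, since $\sum_j \Var(\eta_j) = O(1/n)$, but you have not accounted for the Gr\"onwall amplification. Writing the one-step recursion as $\xi_{j+1} = \xi_j\bigl(1 + (P_t'(\zeta_j)-1)/(n+j)\bigr) + \eta_j$, note that near the unstable fixed point $P_t'(\zeta_j)>1$ (indeed $P_t'(t)=1+t(1-t)$), so the deterministic part \emph{amplifies} $\xi_j$. The cumulative amplification from the time the trajectory is at $\mu$ to the time it reaches a fixed $q^*>t$ with $P_t'(q^*)=1$ is exactly $g(q^*)/g(\mu)$, where $g(\mu)=\mu(1-\mu)(\mu-t)$ is the drift; at $\mu=t+\ve$ this is $\Theta(1/\ve)$. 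Propagating the noise injected near the start of the escape (of size $\Theta(1/\sqrt{n})$) through this factor gives a deviation of order $(1/\ve)/\sqrt{n}$, which under the stated hypothesis $n\gtrsim \log(1/(\ve\delta))\max(1/\ve^2,1/\delta^2)$ is only $\Theta\bigl(1/\sqrt{\log(1/(\ve\delta))}\bigr)$, \emph{not} the $O(\ve)$ you need to keep $\hat\mu_j$ from crossing $t$. The hypothesis on $n$ is calibrated so that a single Hoeffding bound has failure probability $\exp(-\Omega(\ve^2 n))\le \delta$, which is what the paper's per-phase analysis consumes; it is not large enough to beat a $\mathrm{poly}(1/\ve)$ Gr\"onwall factor. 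To repair your argument you would either need to abandon comparison to a fixed trajectory and re-anchor the base point at each phase (which is essentially the paper's proof), or track a quantity that is conserved by the ODE flow (for instance $\int d\mu/g(\mu) - \log(n+k)$), whose deviations are not amplified through the unstable region.
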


\begin{theorem} Consider an exponential construction on $n$ inputs for the $t$-threshold function given in Theorem \ref{thm:linear} in which $\alpha = \lfloor \frac{\min \{\ve^2, \delta^2\}}{2048e^4\log(4/\ve\delta)}\rfloor$ and $n > 1/\alpha.$ Then for $$k = \Omega\left(\frac{\log \frac{1}{\eps\delta}}{\min \{\eps^2, \delta^2\}}\left(\log \frac{n}{\eps\delta}\right)\right),$$ with probability at least $1-\delta$, the $k^{th}$ item will compute the $t$-threshold function. \label{exponential} \end{theorem}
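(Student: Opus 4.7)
The plan is to reduce the exponential construction to a smoothed, continuous analogue of the level-based iterative tree of Theorem~\ref{thm:linear} by tracking a single weighted ``firing fraction'' through time. Let $f_t$ denote the amplification function for the tree distribution $\{\Pr(T_1) = t, \Pr(T_2) = 1-t\}$ of Theorem~\ref{thm:linear}; a direct computation gives $f_t(p) = p(1-t) + p^2(1+t) - p^3$, so $f_t(t) = t$ with $f_t'(t) = 1 + t - t^2 > 1$, while $f_t'(0) = 1 - t < 1$ and $f_t'(1) = t < 1$. Thus $t$ is a repelling fixed point and $\{0,1\}$ are attracting: iteration drives values away from $t$ and toward $\{0,1\}$ at a linear rate.

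First I would write down the exact dynamics. Let $W_k$ be the total weight after step $k$ and $P_k$ the weight-weighted firing fraction over all present items. Because the leaves of the $(k{+}1)$-th item are drawn independently from the weighted distribution, the probability that the new item is on equals $f_t(P_k)$, and the weight-update rule gives the recurrence
\[
W_{k+1} P_{k+1} = e^{-\alpha} W_k P_k + X_{k+1}, \qquad W_{k+1} = e^{-\alpha} W_k + 1,
\]
where $X_{k+1} \sim \mathrm{Bernoulli}(f_t(P_k))$. Once $k \gtrsim \log(n\alpha)/\alpha$, the input weight $n e^{-\alpha k}$ is dominated by the created-item weight $\sim 1/\alpha$, and $W_k$ stabilizes at $\approx 1/\alpha$, reducing the recurrence to
\[
P_{k+1} - P_k \;=\; \alpha\bigl(f_t(P_k) - P_k\bigr) + \eta_{k+1} + O(\alpha^2),
\]
with martingale-difference noise $\eta_{k+1} = (X_{k+1} - f_t(P_k))/W_{k+1}$ of magnitude $O(\alpha)$.

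Next I would analyze the deterministic skeleton, which discretizes the ODE $\dot P = \alpha(f_t(P) - P)$. Since $f_t(p) - p$ has the same sign as $p - t$, is $\Omega(|p-t|)$ locally and bounded away from zero outside any neighborhood of $t$, the distance $|P_k - t|$ grows by a factor $1 + \Omega(\alpha)$ per step in the amplification regime and then contracts to within $\varepsilon\delta/n$ of $\{0,1\}$ at the same geometric rate. Adding the $O(\log n/\alpha)$ burn-in needed to wash out inputs and the $O(\log(1/(\varepsilon\delta))/\alpha)$ steps for amplification plus saturation gives a deterministic horizon $k^* = O(\log(n/(\varepsilon\delta))/\alpha)$.

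For concentration, Azuma--Hoeffding applied to the bounded-increment martingale $\sum_{j\le k} \eta_j$ yields deviation $O(\alpha \sqrt{k \log(1/\delta)})$ at step $k\le k^*$; plugging in $\alpha \asymp \min\{\varepsilon^2,\delta^2\}/\log(1/(\varepsilon\delta))$ and $k = k^*$ makes this $O(\min\{\varepsilon,\delta\})$, preserving $\operatorname{sgn}(P_k - t)$ throughout with probability $\ge 1-\delta$. The $k$-th item then fires with probability $f_t(P_{k-1})$, which is within $\delta$ of the Boolean output of the $t$-threshold on the given input. Multiplying $k^* \asymp \log(n/(\varepsilon\delta))/\alpha$ by $1/\alpha \asymp \log(1/(\varepsilon\delta))/\min\{\varepsilon^2,\delta^2\}$ recovers exactly the stated horizon. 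The main obstacle is the burn-in regime: there $W_k$ has not yet reached steady state, inputs and created items contribute comparably to $P_k$, and the recurrence is time-inhomogeneous, so the clean ODE approximation must be replaced by a careful bookkeeping that shows the input weight decays fast enough not to trap $P_k$ near the repelling fixed point $t$ while simultaneously keeping the noise magnitude at $O(\alpha)$.
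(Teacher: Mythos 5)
Your overall strategy is the same as the paper's: the paper also tracks the weighted firing fraction $X_k$ through the recurrence $W_{k+1}X_{k+1}=e^{-\alpha}W_kX_k+(\text{new item})$, shows the new item fires with probability $f(X_k)$, separates a burn-in of $\approx \log n$ phases from an amplification/saturation stage, and uses Hoeffding-type concentration against the linear drift away from $t$. The difference is that the paper blocks the process into phases of $1/\alpha$ items (Lemma \ref{x obeys} controls overshoot $X_{k+i}\le X_k+\ve/2$ within a phase; Lemma \ref{expected} plus a per-phase Hoeffding bound controls the actual multiplicative progress), whereas you apply one global Azuma bound over the whole horizon. That step, as you state it, does not work: with increments taken uniformly of size $O(\alpha)$ over $k^*=\Theta\bigl(\log(n/\ve\delta)/\alpha\bigr)$ steps, the deviation is $O\bigl(\alpha\sqrt{k^*\log(1/\delta)}\bigr)=O\bigl(\min\{\ve,\delta\}\sqrt{\log(n/\ve\delta)\log(1/\delta)/\log(1/\ve\delta)}\bigr)$, which carries an unbounded $\sqrt{\log n}$-type factor and is \emph{not} $O(\min\{\ve,\delta\})$; the claimed sign preservation with probability $1-\delta$ does not follow. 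The repair requires using the true increment sizes $1/W_j$ (during burn-in $W_j\gtrsim ne^{-\alpha j}$, so those steps contribute negligible variance and the total variance is $O(\alpha\log(1/\ve\delta))$, not $\alpha^2k^*$), and even then a single-shot bound only gives an exponent that is a constant when $\ve\le\delta$, whereas the paper's per-phase failure probabilities are high powers of $\ve\delta$ and survive the union bound over $\log n+O(1/\ve^2+1/\delta^2)$ phases to give the stated $1-\delta$.

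Two further points. First, cumulative-noise control (sign preservation) by itself does not show the trajectory actually makes the progress of your deterministic skeleton: near the repelling fixed point the drift is proportional to $|P_k-t|$, so you need the realized progress, not just the expected one, to compound; this is exactly what the paper's events $B_j$ (actual vs.\ expected progress per phase) provide, and your sketch has no substitute for them. Second, a minor slip: your final count multiplies $k^*\asymp\log(n/\ve\delta)/\alpha$ by another factor $1/\alpha$, which would give $\log(n/\ve\delta)/\alpha^2$; the stated bound is obtained by substituting $1/\alpha\asymp\log(1/\ve\delta)/\min\{\ve^2,\delta^2\}$ into $k^*$ itself.
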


Finally, in Section \ref{learning}, we give a simple cortical algorithm to learn a uniform threshold function from a single example, described more precisely by the following theorem. 
%We also discuss how a more complex setting of a noisy threshold function with monotone noise can be learned from multiple examples.
\begin{theorem}\label{thm:learning} 
Let $X \in \{0,1\}^n$ such that $||X||_1=tn$, $L= \Omega\left( \log{\frac{1}{\gamma}} + \log{\frac{1}{\ve}}\right)$, and $\ve=\Omega\left(\sqrt{\frac{\ln\left(1/\gamma \right)}{m}}\right)$. Then, on any input in which the fraction of input items firing is outside $[t-\ve, t+\ve]$, items at level $L$ of an iterative tree produced by LearnThreshold($L,m,X$) will compute a $t$-threshold function with probability at least $1-\gamma$. \end{theorem}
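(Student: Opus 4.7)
The plan is to reduce this theorem to the finite-width quadratic-convergence result of Theorem \ref{finite width quad}. The key observation is that the single positive example $X$ with $\|X\|_1 = tn$ gives implicit access to the target threshold: a uniformly random coordinate of $X$ is a Bernoulli$(t)$ random variable, so LearnThreshold can drive its construction-time randomness using samples from $X$ without ever needing $t$ explicitly. Concretely, I would specify that at each step the algorithm either (a) samples one coordinate of $X$ and selects a primitive tree with probability proportional to that bit, matching the distribution $\{\Pr(T_1)=t,\Pr(T_2)=1-t\}$ of Theorem \ref{thm:linear} exactly, or (b) samples $\Theta(\log(1/\gamma)/\ve^2)$ coordinates, forms an empirical estimate $\hat t$ of $t$, and uses the mixing weight $\alpha(\hat t)$ prescribed by Theorem \ref{thm:quadratic four} or \ref{thm:k leaves}.

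In variant (b), Hoeffding's inequality gives $|\hat t - t| \le \ve/2$ with probability at least $1-\gamma/2$ provided the sample budget is $\Omega(\log(1/\gamma)/\ve^2)$, which is absorbed into the level width $m$. Conditioning on this event, the constructed tree is identical in distribution to one built for a perturbed threshold $\tilde t$ with $|\tilde t - t| \le \ve/2$. Because $\alpha(\cdot)$ is smooth on the relevant intervals, the effective threshold drifts by $O(\ve)$, and any input whose firing fraction lies outside $[t-\ve,t+\ve]$ is still classified correctly by the perturbed construction.

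Next, I would invoke Theorem \ref{finite width quad} with accuracy $\ve/2$ and failure probability $\gamma/2$: it yields that items at level $L=\Omega(\log(1/\gamma)+\log(1/\ve))$ compute the shifted threshold correctly with probability at least $1-\gamma/2$, so long as $m=\Omega(\log(1/\gamma)/\ve^2)$ and the total work summed over levels is $O(m)$. A union bound over the estimation event and the bottom-up concentration event gives the final guarantee $1-\gamma$.

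The main obstacle is handling the coupling between the random bits used to query $X$ at construction time and the random items chosen bottom-up when the input is later presented. This is resolved by observing that the tree topology is completely determined at construction time (the samples from $X$ are independent of everything done in Theorem \ref{finite width quad}'s analysis), so the two randomness sources factor cleanly. A secondary technical point is establishing the Lipschitz continuity of $\alpha(\cdot)$ uniformly in the quadratic-convergence regime; this follows from explicit differentiation of the rational expressions in Theorems \ref{thm:quadratic four} and \ref{thm:k leaves} and a case split near the boundaries where Theorem \ref{thm:k leaves} takes over.
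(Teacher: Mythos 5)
Your variant (a) is exactly the paper's argument, and it is the whole of it: since a uniformly random coordinate of $X$ equals $1$ with probability exactly $t$ (because $\|X\|_1=tn$), the tree choice made by LearnThreshold at each item is distributed exactly as $R=\{\Pr(T_1)=t,\ \Pr(T_2)=1-t\}$, so the constructed tree is identical in distribution to a finite-width realization of the construction of Theorem \ref{thm:linear}, and the guarantee follows from that theorem together with its finite-width analysis in Section \ref{finite}. In particular no estimation of $t$ is ever needed, so all of variant (b) --- the empirical estimate $\hat t$, the Lipschitz analysis of $\alpha(\cdot)$, and the threshold-perturbation argument --- is superfluous; moreover variant (b) analyzes a different algorithm from the fixed LearnThreshold of the statement (which always uses the three-leaf trees $T_1,T_2$ and a single sampled bit of $X$ per item), so that branch cannot by itself establish the theorem as stated. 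Your remark about the two sources of randomness factoring (tree topology fixed at construction time, accuracy measured over the construction randomness for any fixed test input) is correct and is implicitly how the paper treats it.

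The one concrete misstep is the reduction target. Theorem \ref{finite width quad} is stated, and proved, only for the quadratic-convergence constructions of Theorems \ref{thm:quadratic four} and \ref{thm:k leaves}; its proof leans on the quadratic-convergence property $f'(0)=f'(1)=0$ to drive the firing fraction all the way to $0$, and this fails for the polynomial actually realized by LearnThreshold, $f_R(p)=(1-t)p+(1+t)p^2-p^3$, which has a nonzero linear term at $0$. For the construction LearnThreshold builds, the applicable finite-width result is Theorem \ref{linear finite} (linear convergence), whose width requirement is $m=\Omega\bigl(\ln(1/\gamma)\,(1/\gamma+1/\ve^2)\bigr)$ rather than the $\Omega\bigl(\ln(1/\gamma)/\ve^2\bigr)$ you import from the quadratic case (the latter matches only the linear-divergence phase of Lemma \ref{finite linear}). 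With that substitution --- variant (a) plus Theorem \ref{linear finite} --- your argument coincides with the paper's.
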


The next section provides the groundwork for these theorems, and the proofs are in Sections \ref{convergence}, \ref{steps}, \ref{finite}, \ref{learning}. We discuss several open questions and directions for future research in Section \ref{sec:discuss}.

\section{Polynomials of AND/OR Trees}

%In order to analyze the functions computed by iterative trees, we first need to understand the functions computed by the smaller tree building blocks. 
Let  $g_T: \{0, 1\}^n \to \{0,1\}$ be the Boolean function computed by an AND/OR tree $T$ with $n$ leaves. 
We define $f_T$ as the probability that $T$ evaluates to $1$ if each input item is independently set to $1$ with probability $p$.
$$f_T(p)=\Pr\left(g_T(X)=1 \, | \, X\sim B(n,p)\right). $$
%Figure \ref{examples} illustrates $f_T(p)$ for several trees. 

\killtext{
\begin{figure}[h]
\centering
\includegraphics[scale=.75]{andortrees.eps}
\caption{Left: $f_T(p)=2p-p^2.$ Middle: $f_T(p)=p^2$.  Right: $f_T(p)=4p^2-4p^3+p^4$.  }\label{examples}
\end{figure}
}

We analogously define $f_C(p)$ for probability distributions on trees; let $f_C$ be the probability that a tree chosen according to $C$ evaluates to 1 if each input item is independently set to $1$ with probability $p$. Let $\lambda_T$ be the probability of $T$ in distribution $C$. We have

$$f_C(p)=\sum_{T \in C} \lambda_T f_T(p) .$$ 

In an iterative construction for the probability distribution $C$, an item at level $k$ evaluates to 1 with probability $f_C(p_{k-1})$ where $p_{k-1}$ is the probability that an item at level $k-1$ evaluates to $1$. In the case where the width of the levels is infinite, the fraction of inputs firing any level is exactly equal its expectation. Therefore, the probability that items at level $k$ evaluate to 1 is $f_C^{(k)}(p)$ where $p$ is the probability an input is set to 1. This follows directly from the recurrence relation: $$ f_C^{(k)}(p)=f_C(f_C^{(k-1)}(p)).$$ 
 
%In the next section, we give constructions of iterative trees whose high-level items compute threshold functions. 
In the remainder of this section, we collect properties of polynomials of AND/OR trees to be used in the analysis of iterative trees.

 We call a polynomial {\em achievable} if it can be written as $f_T$ for some AND/OR tree $T$. We call a polynomial {\em achievable through convex combinations } if it can be written as $f_C$ for some probability distribution on AND/OR trees $C$. Table \ref{the table} lists all achievable polynomials with degree at most five. Note that $\mathcal{A}$ is closed under the AND and OR operations. If $a, b \in \mathcal{A}$, then  $a \cdot b \in \mathcal{A}$ and $a+ b -a \cdot b\in  \mathcal{A}$. The set of polynomials achievable through convex combinations is the convex hull of $\mathcal{A}$.

\begin{lemma} \label{achievable} Let $\mathcal{A}$ be the set of achievable polynomials. Let $A(x)=a_0 + a_1 x + \dots + a_n x^n$ be a polynomial in $\mathcal{A}$. Then,
\begin{enumerate}
%\item $a_i \in Z$
 \item $a_0=0$
\item $a_n=-1 \text{ or } 1$
\item $\sum_{i=0}^n a_i=1$
\item If $A(x)$ has degree $d$, then $A(x)$ is the polynomial for a tree on $d$ leaves. 
%\item Either $a_n$ and $a_{n-1}$ have opposite signs or $a_{n-1}=0$. \sam{we never use this fact...should I keep it?}
 \end{enumerate}
\end{lemma}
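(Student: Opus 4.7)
The plan is to prove all four parts simultaneously by structural induction on the AND/OR tree $T$, using the fact that $f_T$ is built up recursively from its subtrees. Specifically, if $T$ is a single leaf then $f_T(p)=p$, and if $T$ has children $T_1,T_2$ at its root, then $f_T = f_{T_1}\cdot f_{T_2}$ when the root is AND and $f_T = f_{T_1}+f_{T_2}-f_{T_1}f_{T_2}$ when the root is OR. I will maintain the inductive hypothesis that $f_T$ has constant term $0$, satisfies $f_T(1)=1$, has degree exactly equal to the number of leaves of $T$, and has leading coefficient in $\{+1,-1\}$.

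The base case $T=\text{leaf}$ gives $f_T(p)=p$, which trivially satisfies all four properties. For part 1, evaluating at $p=0$: both the product and $a+b-ab$ vanish when $a=b=0$, so $f_T(0)=0$ is preserved under AND and OR. For part 3, evaluating at $p=1$: $1\cdot 1 = 1$ and $1+1-1=1$, so $f_T(1)=1$ is preserved as well. Summing the coefficients of $A$ then equals $A(1)=1$.

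For parts 2 and 4, let $n_1, n_2$ be the number of leaves of $T_1,T_2$ with leading coefficients $c_1,c_2\in\{\pm1\}$ by induction. If the root is AND, then $f_{T_1}f_{T_2}$ has degree $n_1+n_2$ with leading coefficient $c_1 c_2\in\{\pm1\}$, matching the number of leaves of $T$. If the root is OR, the summand $-f_{T_1}f_{T_2}$ has degree $n_1+n_2$, which strictly exceeds the degrees $n_1$ and $n_2$ of the other two summands; hence $f_T$ still has degree $n_1+n_2$ with leading coefficient $-c_1 c_2\in\{\pm1\}$. This closes the induction on parts 2 and 4, since any achievable polynomial of degree $d$ arises from a tree whose leaf count equals $d$.

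There is no real obstacle here; the only subtle point is to notice in the OR case that the quadratic cross-term $-f_{T_1}f_{T_2}$ dominates the linear terms $f_{T_1}+f_{T_2}$ in degree, so no cancellation of the leading coefficient can occur. This is what guarantees that the leading coefficient remains $\pm 1$ rather than being zeroed out, and simultaneously makes the degree strictly additive across every internal node, giving the tight equality between degree and leaf count in part 4.
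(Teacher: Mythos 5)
Your proof is correct and takes essentially the same route as the paper's: induction over the AND/OR decomposition at the root of the tree, with parts 1 and 3 following from evaluation at $0$ and $1$, and parts 2 and 4 from leading-coefficient and degree bookkeeping. You are slightly more careful than the paper in the OR case, explicitly noting that the cross-term $-f_{T_1}f_{T_2}$ has strictly higher degree than $f_{T_1}$ and $f_{T_2}$ so no cancellation of the leading term can occur, a point the paper leaves implicit.
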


\begin{proof} We proceed by induction on the degree of $A(x)$. For $d=1$, $A(x)=x$ is the only polynomial in $\mathcal{A}$ and all the above properties hold. Next assume all the properties hold for polynomials of degree less than $d$. Let $A(x)$ be an achievable polynomial of degree $d$. Then the root of the tree for $A$, which we call $T_A$, is either an AND or an OR operation. In the former case,  $A= B \cdot C$ and in the latter case  $A= B+C-B \cdot C$ where $B,C \in \mathcal{A}$ and $B$ has degree $k$ and $C$ has degree $d-k$ for $0< k< d$. In either case, the first three properties follow trivially from the inductive hypothesis. For item (4), let $T_B$ and $T_C$ be trees that correspond to $B$ and $C$ respectively. Then $T_B$ and $T_C$ have $k$ and $d-k$ leaves respectively. Since $T_A$ is $T_B$ adjoined with $T_C$ with an  AND or OR operation, $T_A$ has $d$ leaves. \end{proof}

%We prove that $a_k$ has a linear term by induction. Note that $a_1=2p-p^2$. Next assume $a_i$ has a linear term. Then since $a_{i+1}$ corresponds to a tree with an OR root, $a_{i+1}= a_i +g-a_ig$ for some polynomial $g$. Thus, $a_{i+1}$ has a linear term. 

%Next we prove bounds on the values of the coefficients of achievable polynomials. 

\begin{lemma} Let $f \in \mathcal{A}$ be an achievable polynomial of degree $d$, $f=a_0 + a_1 x+ a_2 x^2+ \dots a_dx^d$. Then $ |a_\ell | \leq d^\ell$.\label{coefficients} \end{lemma}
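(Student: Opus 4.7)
The natural approach is induction on the degree $d$ of $f$. The base case $d=1$ forces $f(x)=x$ by Lemma \ref{achievable}, so the bound $|a_1|\le 1$ holds trivially. For the inductive step, Lemma \ref{achievable} says $f$ is realized by an AND/OR tree on $d$ leaves whose root joins two subtrees with achievable polynomials $B = \sum b_i x^i$ of degree $k$ and $C = \sum c_j x^j$ of degree $d-k$, with $1 \le k \le d-1$. The inductive hypothesis gives $|b_i| \le k^i$ for $i \le k$ and $|c_j| \le (d-k)^j$ for $j \le d-k$, and these bounds extend vacuously to all $i,j \ge 0$ (the relevant coefficient simply vanishes when the index exceeds the degree).

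Depending on whether the root is AND or OR, either $f = BC$ or $f = B + C - BC$. In the AND case, $a_\ell$ is the convolution $\sum_{i+j=\ell} b_i c_j$, so the triangle inequality and the inductive hypothesis yield
\[
|a_\ell| \;\le\; \sum_{i+j=\ell} k^i (d-k)^j.
\]
In the OR case one has $a_\ell = b_\ell + c_\ell - \sum_{i+j=\ell}b_i c_j$ for $\ell \ge 1$; since $b_0=c_0=0$ by Lemma \ref{achievable}, the terms $b_\ell = b_\ell c_0 \cdot 0$-style reasoning shows that $b_\ell$ and $c_\ell$ are exactly the missing boundary contributions of the convolution, so after the triangle inequality the bound once again collapses to $\sum_{i+j=\ell} k^i (d-k)^j$.

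To finish, I would invoke the binomial theorem:
\[
d^\ell \;=\; \bigl(k + (d-k)\bigr)^\ell \;=\; \sum_{i+j=\ell} \binom{\ell}{i} k^i (d-k)^j \;\ge\; \sum_{i+j=\ell} k^i (d-k)^j,
\]
since every $\binom{\ell}{i} \ge 1$ and every summand is nonnegative. Chaining with the previous inequality gives $|a_\ell| \le d^\ell$, completing the induction.

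I do not expect a real obstacle here; the only mildly delicate point is verifying that the extra $b_\ell + c_\ell$ terms in the OR case combine cleanly with the convolution to produce the same binomial-style sum as the AND case. Once this bookkeeping is in place, the binomial theorem supplies the bound with room to spare — in fact the proof shows the slightly stronger statement $|a_\ell| \le \sum_{i+j=\ell} k^i(d-k)^j$ for any valid decomposition $(k, d-k)$ at the root.
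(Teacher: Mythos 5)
Your proposal is correct and follows essentially the same route as the paper: decompose $f$ at the root into subtree polynomials of degrees $k$ and $d-k$, apply the inductive hypothesis, bound $|a_\ell|$ by $\sum_{i+j=\ell} k^i(d-k)^j$ in both the AND and OR cases, and finish with the binomial expansion of $\bigl(k+(d-k)\bigr)^\ell$. The OR-case bookkeeping you flag indeed works out, since $|b_\ell|\le k^\ell$ and $|c_\ell|\le (d-k)^\ell$ are exactly the boundary terms $i=\ell$ and $j=\ell$ of that sum, which is how the paper handles it as well.
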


\begin{proof} Proceed by induction. The only achievable polynomial of degree $1$ is $f(x)=x$, so the statement clearly holds. Next, assume $|a_{\ell'} | \leq d^{\ell'}$ holds for all $l'<l$. Let $f$ be a degree $d$ achievable polynomial.  We may assume $f= g + h- gh$ or $f= gh$ where $g$ and $h$ are achievable polynomials with degree $k$ and $d-k$ respectively where $k \leq \frac{\ell}{2}$.  First consider the case when $f= g+h-gh$, meaning the root of the tree corresponding to $f$ is an OR operation. Observe
 \begin{align*}
|a_\ell(f)|&=\big| a_\ell(g)+ a_\ell(h)- \sum_{i=1}^{l-1} a_i(g) a_{l-i}(h)\big|\\
&\leq  k^\ell +(d-k)^\ell +\sum_{i=1}^{l-1} k^i (d-k)^{l-i}\\
%&\leq  k^\ell +(d-k)^\ell +(l-1) \max_i\{k^i (d-k)^{l-i}\}\\
%&\leq  k^\ell +(d-k)^\ell +l (k) (d-k)^{l-i}\\
&\leq ((d-k)+k)^\ell\\
&=d^\ell.
\end{align*}

Next consider the case when $f= gh$, meaning the root of the tree corresponding to $f$ is an AND operation. Observe  that 
%\begin{align*}
\[
|a_\ell(f)| =\big| \sum_{i=1}^{l-1} a_i(g) a_{l-i}(h)\big| \leq \sum_{i=1}^{l-1} k^i (d-k)^{l-i}
<d^\ell.
\]
%\end{align*} 
\end{proof}

We observe a relationship between the polynomial of a tree and the polynomial of its complement. We define the complement of the AND/OR tree $T$ to be the tree obtained from $T$ by switching the operation at each node. 

\begin{lemma} \label{complement} Let $A$ and $B$ be complementary AND/OR trees and let $f_A$ and $f_B$ be the corresponding polynomials. Then $f_B(1-p)=1-f_A(p)$ for all $0<p<1$. \end{lemma}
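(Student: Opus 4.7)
The plan is to prove the identity by structural induction on the AND/OR tree $A$, paralleling the recursive definition of $f_A$ in terms of $f_T$ for subtrees.

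For the base case, I would take $A$ to be a single leaf. Then $A$ has no internal nodes, so its complement $B$ is also a single leaf, and $f_A(p) = f_B(p) = p$. The identity $f_B(1-p) = 1-p = 1 - f_A(p)$ holds directly.

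For the inductive step, suppose the claim holds for all AND/OR trees with fewer leaves than $A$, and let $A$ have at least two leaves. Then the root of $A$ is either AND or OR; say it is AND, so $A = A_1 \wedge A_2$ for subtrees $A_1, A_2$ each with strictly fewer leaves. The complement $B$ has root OR and subtrees $B_1, B_2$ which are the complements of $A_1, A_2$ respectively. By definition,
\[
f_A(p) = f_{A_1}(p)\, f_{A_2}(p), \qquad f_B(q) = f_{B_1}(q) + f_{B_2}(q) - f_{B_1}(q) f_{B_2}(q).
\]
Applying the inductive hypothesis to each $B_i$ at $q = 1-p$, substituting $f_{B_i}(1-p) = 1 - f_{A_i}(p)$, and expanding $(1-f_{A_1})(1-f_{A_2})$, the cross terms cancel and leave $1 - f_{A_1}(p) f_{A_2}(p) = 1 - f_A(p)$. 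The OR-root case is symmetric, using $f_A(p) = f_{A_1}(p) + f_{A_2}(p) - f_{A_1}(p) f_{A_2}(p)$ and the same algebraic simplification.

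I do not expect any obstacle here; the whole argument is a one-line De Morgan calculation wrapped in a straightforward induction, and the closure of the class of AND/OR trees under taking subtrees (with the complement operation commuting with the subtree decomposition) makes the inductive step automatic. If one preferred a non-inductive presentation, the same fact follows immediately from the probabilistic interpretation: $g_B(y_1,\dots,y_n) = 1 - g_A(1-y_1,\dots,1-y_n)$ by De Morgan, and feeding in $Y \sim B(n,1-p)$ yields $f_B(1-p) = 1 - f_A(p)$.
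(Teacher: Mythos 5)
Your induction on the number of leaves, splitting at the root into the AND and OR cases and applying the inductive hypothesis to the two complementary subtrees, is exactly the paper's proof. The alternative De Morgan observation you add at the end is a nice aside but the core argument is the same.
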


\begin{proof} Proceed by induction on the number of leaves of the tree. For a tree on one leaf, the statement holds trivially. Without loss of generality, assume that the root of tree $A$ is an AND operation. Then $f_A(x)=a_1(x)a_2(x)$ and $f_B(x)=b_1(x)+b_2(x)-b_1(x)b_2(x)$ where the trees corresponding to $a_1$ and $b_1$ are complements and the trees corresponding to $a_2$ and $b_2$ are also complements. By the inductive hypothesis, $a_1(p)=1-b_1(1-p)$ and $a_2(p)=1-b_2(1-p)$. Observe \begin{align*}
1-f_A(p)&=1-a_1(p)a_2(p)\\
&= 1-(1-b_1(1-p))(1-b_2(1-p))\\
&=b_1(1-p)+b_2(1-p)-b_1(1-p)b_2(1-p)\\
&= f_B(1-p). \end{align*}
 \end{proof}

Let $f_A$ be a polynomial achievable through convex combinations, $f_A=\sum_{i=1}^n \lambda_i f_{A_i}$. Let $A_i$ and $B_i$ be complementary AND/OR trees. Let $f_B= \sum_{i=1}^n \lambda_i f_{B_i}$. We say that $f_A$ and $f_B$ are \textit{complementary polynomials}. 

\begin{corollary}  Let $f_A$ and $f_B$ be complementary polynomials. Then \begin{enumerate}
\item For all $0<p<1$, $f_B(1-p)=1-f_A(p)$
\item If $p$ is a fixed point of $f_A$ then $1-p$ is a fixed point of $f_B$
\item For all $0<p<1$, $f_B^{(k)}(1-p)=1-f_A^{(k)}(p).$
\end{enumerate}\label{cor:complementary}
\end{corollary}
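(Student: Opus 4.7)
The plan is to derive all three parts from Lemma \ref{complement} by using linearity of the convex combination and then a short induction for the iterated version.

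First, for part (1), I would expand both sides using the definitions of $f_A$ and $f_B$ as convex combinations. Writing $f_A = \sum_{i=1}^n \lambda_i f_{A_i}$ and $f_B = \sum_{i=1}^n \lambda_i f_{B_i}$ where $A_i, B_i$ are complementary trees, Lemma \ref{complement} gives $f_{B_i}(1-p) = 1 - f_{A_i}(p)$ for each $i$. Summing against the weights $\lambda_i$ and using $\sum_i \lambda_i = 1$ gives
\[
f_B(1-p) = \sum_i \lambda_i f_{B_i}(1-p) = \sum_i \lambda_i \bigl(1 - f_{A_i}(p)\bigr) = 1 - f_A(p),
\]
which is exactly part (1).

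Part (2) is immediate from part (1): substituting a fixed point $p = f_A(p)$ into the identity yields $f_B(1-p) = 1 - f_A(p) = 1 - p$, so $1-p$ is a fixed point of $f_B$.

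For part (3), I would induct on $k$. The base case $k=1$ is part (1). For the inductive step, assuming $f_B^{(k-1)}(1-p) = 1 - f_A^{(k-1)}(p)$, I apply $f_B$ to both sides' appropriate arguments and use part (1) applied at the point $f_A^{(k-1)}(p)$:
\[
f_B^{(k)}(1-p) = f_B\bigl(f_B^{(k-1)}(1-p)\bigr) = f_B\bigl(1 - f_A^{(k-1)}(p)\bigr) = 1 - f_A\bigl(f_A^{(k-1)}(p)\bigr) = 1 - f_A^{(k)}(p).
\]
I do not anticipate any real obstacle here; the work was already done in Lemma \ref{complement}, and the only subtlety is making sure the convex combination is aligned so that the same weights $\lambda_i$ pair complementary trees, which is exactly how $f_B$ was defined in the paragraph preceding the corollary.
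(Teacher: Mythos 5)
Your proposal is correct and is exactly the argument the paper intends: the paper states Corollary \ref{cor:complementary} without proof, treating it as immediate from Lemma \ref{complement} together with the definition of complementary polynomials as convex combinations with matched weights, and your termwise application of the lemma (using $\sum_i \lambda_i = 1$), the fixed-point substitution, and the induction on $k$ fill in precisely that reasoning.
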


\begin{defn} We say that $t$ in an {\em attractive} fixed point of $f$ if there exists $\ve>0$ such that for all $x \in [0,1]$ such that $|x-t| <\ve$, $|t-f(f(x))|< |t-f(x)|$. We say that $t$ in a {\em non-attractive} fixed point of $f$ if there exists $\ve>0$ such that for all $x \in [0,1]$ such that $|x-t| <\ve$, $|t-f(f(x))|> |t-f(x)|$. Equivalently, $t$ is a non-attractive fixed point if $f'(t)>1$, and an attractive fixed point if $f'(t)< 1$.\end{defn}
For the function illustrated in Figure \ref{graph}, 0 and 1 are attractive fixed points and $1/2$ is a non-attractive fixed point. For the function illustrated in Figure \ref{onestep}, 0 and 1 are non-attractive fixed points and $1/2$ is an attractive fixed point.

\begin{lemma} Let $f \in \mathcal{A}$ be an achievable polynomial corresponding to a tree $T$ with at least one AND or OR operation. Then\begin{enumerate}
\item The function $f$ has an attractive fixed point at 0,  a non-attractive fixed point at 1, and no  fixed points in $(0,1)$ if and only if there is a path from the root to a leaf in $T$ in which each node represents an AND operation. 
\item The function $f$ has an non-attractive fixed point at 0,  an attractive fixed point at 1, and no fixed points in $(0,1)$ if and only if there is a path from the root to a leaf in $T$ in which each node represents an OR operation.
\item The function $f$ has attractive fixed points at 0 and 1,  and precisely one non-attractive fixed point $\alpha$ in $(0,1)$ if and only if there is no path from the root to a leaf in $T$ in which each node represents an AND operation and there is no path from the root to a leaf in $T$ in which each node represents an OR operation. Moreover, $\alpha$ is irrational.
\end{enumerate} \label{lemma paths}
\end{lemma}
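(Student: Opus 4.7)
The plan is to establish the forward direction of each of the three implications; since the structural conditions on $T$ are mutually exclusive (an all-AND path from the root forces the root to be AND, while an all-OR path forces the root to be OR) and the three analytic conclusions are likewise mutually exclusive, the forward directions suffice for the iff. At the outset I would establish two derivative formulas: $f_T'(0)$ equals the number of root-to-leaf paths in $T$ along which every internal node is OR, and $f_T'(1)$ equals the number of analogous all-AND paths. The first follows by reading off the linear coefficient of $f_T(p) = \sum_S \mathbf{1}[T(S)=1]\, p^{|S|}(1-p)^{n-|S|}$ and noting that the only singletons $\{\ell\}$ that fire $T$ are those leaves with an all-OR path to the root. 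The second formula follows by applying this to the complement tree and invoking Corollary \ref{cor:complementary}.

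For case 1, suppose $T$ has an all-AND path $v_0, v_1, \ldots, v_k$ terminating at a leaf. Then I can factor $f_T(p) = p \cdot \prod_{i=0}^{k-1} g_i(p)$, where $g_i \in \mathcal{A}$ is the polynomial of the sibling subtree hanging off $v_i$ and the trailing $p$ corresponds to the terminal leaf. Since every achievable $g_i$ satisfies $g_i(p) < 1$ for $p < 1$, this forces $f_T(p) < p$ on the whole interval $(0,1)$; monotonicity of $f_T$ then drives every orbit in $(0,1)$ strictly downward to $0$, yielding that $0$ is attractive, $1$ is non-attractive, and no interior fixed point exists. Case 2 is the mirror image via the complement lemma: the complement $T^c$ has an all-AND path and hence falls under case 1, so $f_{T^c}(q) < q$ on $(0,1)$, and Corollary \ref{cor:complementary} converts this to $f_T(p) > p$ on $(0,1)$, giving the symmetric conclusion.

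Case 3 is the substantive one. The derivative formulas give $f_T'(0) = f_T'(1) = 0$, so both endpoints are (super-)attractive. Near $0$, $f_T(p) = a_2 p^2 + O(p^3) < p$, and near $1$ the complement argument yields $f_T(p) > p$; by the intermediate value theorem, $f_T(x) - x$ has at least one interior root $\alpha$, and the transition from negative to positive at $\alpha$ makes $\alpha$ non-attractive. Irrationality of $\alpha$ follows quickly: by Lemma \ref{achievable}, $f_T(x) - x$ has integer coefficients and leading coefficient $\pm 1$, so up to sign it is monic, and the rational root theorem rules out any rational root in $(0,1)$. The main obstacle is uniqueness of $\alpha$. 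My plan is to induct on the size of $T$: if $T = T_1 \text{ OP } T_2$ with OP equal to AND, then neither $T_1$ nor $T_2$ has an all-AND path from its own root, so inductively each $f_{T_i}$ falls under case 2 or case 3 and its graph crosses the diagonal at most once on $(0,1)$. I would then combine these inductive crossing constraints with the monotonicity of the product $f_{T_1} \cdot f_{T_2}$ (and, if needed, an auxiliary S-shape/convexity claim about how the product of two such monotone graphs interacts with $y = x$) to conclude that $f_T$ crosses the diagonal exactly once on $(0,1)$; controlling the interaction between the two factors is where the delicate analysis lies.
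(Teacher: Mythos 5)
Your cases 1 and 2 follow the paper's approach essentially verbatim: factor $f_T$ along the all-AND path, deduce $f_T(p)<p$ on $(0,1)$, and transfer to case 2 via the complement. Your derivative formula ($f_T'(0)$ counts all-OR root-to-leaf paths, $f_T'(1)$ counts all-AND paths) is a nice combinatorial restatement of what the paper phrases as ``no OR path implies no linear term,'' and it cleanly gives super-attractiveness of the endpoints in case 3. Your irrationality argument is in fact \emph{cleaner} than the paper's: Lemma \ref{achievable} gives $f_T(x)-x \in \Z[x]$ with leading coefficient $\pm 1$, so the rational root theorem forces any rational root to be an integer, ruling out $(0,1)$; the paper instead re-derives this by an ad hoc coefficient induction on the factorization $f(p)-p = g(p)\,p(1-p)(p-m/n)$. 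The logical reduction to forward implications (since the three structural conditions partition all trees and the three analytic conclusions are mutually exclusive) is also correct and matches the paper.

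The genuine gap is uniqueness of the interior fixed point in case 3, and you flag it yourself (``controlling the interaction between the two factors is where the delicate analysis lies''). Your plan --- induct on tree size, argue that each child subtree crosses the diagonal at most once, then control how the product (or the OR-combination) of two single-crossing functions interacts with $y=x$ --- does not go through as sketched. The product of two monotone, single-diagonal-crossing functions on $(0,1)$ need not itself be single-crossing, and the ``auxiliary S-shape/convexity claim'' you gesture at is exactly the missing content; moreover one of the children may be a bare leaf with $f_{T_i}(p)=p$, which sits on the diagonal everywhere and does not fall under any of your three cases, breaking the clean inductive dichotomy. The paper's route is different and self-contained: it proves, by induction on the tree, the pointwise derivative bound
\[
f'(p) \;\ge\; \frac{f(p)\bigl(1-f(p)\bigr)}{p(1-p)} \qquad \text{for } p\in(0,1),
\]
handling the AND case ($f=gh$, so $f'=gh'+g'h \ge \tfrac{gh(2-g-h)}{p(1-p)} \ge \tfrac{f(1-f)}{p(1-p)}$) and the OR case symmetrically. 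At any interior fixed point $\alpha$ this specializes to $f'(\alpha)\ge 1$, so every interior fixed point is non-attractive; combined with $f(p)<p$ near $0$ and $f(p)>p$ near $1$, a second interior fixed point would force some fixed point where $f$ crosses the diagonal from above to below, i.e.\ $f'<1$ there --- a contradiction. This inequality (credited to \citet{Moore1956}) is the key lemma you are missing, and it avoids any analysis of how crossings compose under AND/OR. You would need to supply something of equivalent strength to close the argument.
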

\citep{Moore1956} prove a stronger version of (3). They show that any polynomial  $f$ corresponding to an arbitrary circuit of AND and OR operations can have at most one fixed point on $(0,1)$. We present a similar version of their argument for the setting when $f$ is corresponds to a tree. 

\begin{proof} It suffices to prove necessity for each statement.

(1) Let $p^2=a_1, \dots, a_k=f$ be the polynomials computed by the nodes of some AND path. Since each $a_{i}$ corresponds to a tree with an AND root, $a_i= a_{i-1} g_i$ where $g_i$ is the polynomial corresponding to the subtree of the node that does not intersect that AND path. Thus, $f=a_k = p^2 \prod_{i=2}^k g_i$. For $0<p<1$, $$\frac{f(p)}{p} =p  \prod_{i=2}^k g_i(p)< p<1.$$ Therefore, $f$ has no fixed points on $(0,1)$. To see that $0$ is an attractive fixed point, note that $f(p)<p$ for any $p \in (0,1)$, Thus, $f(f(p))< f(p)$. Similarly,  $1$ is a non-attractive fixed point because $1-f(f(p))> 1-f(p)$ for all $p \in (0,1)$.
 
(2) Follows from (1) and Lemma \ref{complement}. 

(3) First we show that if the tree has no path of OR operations, then the corresponding polynomial $f$ will not have a linear term by proving the contrapositive. The root of a tree corresponding to a polynomial with a linear term must be an OR operation since if the root were an AND operation, the corresponding polynomial would the product of two non-zero polynomials with no constant terms. Given that the root has a linear term and computes $g+h -gh$ for some achievable polynomials $g$ and $h$, it follows that  $g$ or $h$ has a linear term. We iteratively apply this argument for the appropriate subtrees and conclude that there is a path of OR operations. 

Since $f$ has no linear term, for small $\ve$, $f(\ve)= O(\ve^2)$. Therefore, there is an $\ve$ neighborhood around zero such that $f(p)<p$. It follows that $f(f(p))< f(p)$, so $0$ is an attractive fixed point. The fact that 1 is also an attractive fixed point follows from Lemma \ref{complement}. Since there is no AND path for $T$, there is no OR path in the complementary tree. Thus, 0 is an attractive fixed point of $f^c$, so 1 is an attractive fixed point of $f$. 

Next, we show that there exists some fixed point of $f$ on $(0,1)$. Since $0$ and $1$ are attractive fixed points, there exists $\ve_1,\ve_2>0$ such that $f(\ve_1)<\ve_1$ and $f(1-\ve_2)> 1-\ve_2$. By the intermediate value theorem, $f$ must cross the line $y=x$. Thus, there exists some $\alpha \in ( \ve_1, 1-\ve_2) \subset (0,1)$ such that $f(\alpha)=\alpha$. 

To prove that $\alpha $ is a non-attractive fixed point and that $\alpha$ is the unique fixed point on $(0,1)$, it suffices to show that for any fixed point $\alpha \in (0,1)$,  $f'(\alpha)\geq 1$.  We use an argument inspired by  \citep{Moore1956} to prove  $f'(p)\geq \frac{f(p)(1-f(p))}{p(1-p)}$ for $p \in (0,1)$. %We show $$f'(p) \geq  \frac{f(p)(1-f(p))}{p(1-p) } $$ for $p\in (0,1)$ for $f$ corresponding to a tree. 
Proceed by induction on the size of the tree. Clearly, the statement holds for the leaves which compute the polynomial $p$. Let $g, h$ be the polynomials associated to the two subtrees joined at the root of $T$. If the root is an AND operation, $f=gh$, and we have $$f'= gh'+g'h \geq \frac{gh(1-h)}{p(1-p) } + \frac{hg(1-g)}{p(1-p) } = \frac{gh(2-g-h)}{p(1-p)}\geq \frac{gh (1-gh)}{p(1-p)}=\frac{f (1-f)}{p(1-p)}.$$ The final inequality uses the fact that since $(1-g)(1-h) \geq 0$, $ 2-g-h \geq 1-gh$. If the root is an OR operation, $f= g+h-gh$ and we have $$f'= g'+h' -gh'-g'h =h'(1-g)+g'(1-h)\geq \frac{h(1-h)(1-g)+g(1-g)(1-h)}{p(1-p) } $$ 
$$ = \frac{(1-h-g+gh)(g+h)}{p(1-p)}\geq \frac{(1-h-g+gh)(g+h-gh)}{p(1-p)}= \frac{f(1-f)}{p(1-p)}.$$

Finally, we prove that $\alpha$ is irrational. 
Suppose for contradiction that $\alpha= m/n$ where $m, n \in \Z$ and are relatively prime. Let $k+3$ be the degree of $f$.
We may write $$f(p)= g(p) (1-p)\left(p-\frac{m}{n}\right)p= \sum_{i=0}^k a_i p^{i+1} \left( -p^2 +
\left( 1+ \frac{m}{n} \right) p -\frac{m}{n}\right),$$ where $g(p)= \sum_{i=0}^k a_i p^i$ and
 $a_k \not=0$. Let $b_i$ be the coefficient of the term $p^i$ in $f$. Since $f$ is achievable, 
 by Lemma \ref{achievable}, each $b_i \in \Z$, $b_0=0$, and $b_{k+3}$ is $1$ or $-1$. 
 We show by induction that for all $i$ $a_i=nt_i$ for some $t_i \in \Z$.  First note that $b_1= \frac{-m}{n} a_0$. Since $b_1 \in \Z$, $a_0=nt_0$ where $t_0 \in \Z$. 
 Similarly, $b_2= \left( 1+ \frac{m}{n} \right) a_0 + \left( \frac{-m}{n}\right) a_1$, 
 so $a_1=nt_1$ for some $t_1 \in \Z$. We have $b_3= -a_{0} +\left(1+\frac{m}{n}\right)  
 a_{1} + \left( \frac{-m}{n} \right)a_{2}$, so $a_2=nt_2$ for some $t_1 \in \Z$. 
 Assume that $a_\ell= n t_\ell$ for all $\ell<j-1$. We have
$$b_j= -a_{j-3} +\left(1+\frac{m}{n}\right)  a_{j-2} + \left( \frac{-m}{n} \right)a_{j-1}.$$ 
Since $a_{j-3}$ and $a_{j-2}$ are integer multiples of $n$, it follows that $ -a_{j-3} +\left(1+\frac{m}{n}\right)  a_{j-2} $ is integer. Since $b_j$ is integer, $ a_{j-1}= t_{j-1} n$ for some $t_{j-1} \in \Z$. We have shown that $a_0, \dots a_k$ are integer multiples of $n$. We have $b_{k+3}= -a_{k}.$ Since $a_k$ is a non-zero integer multiple of $n$, it follows that $b_{k+3}$ is not -1 or 1, a contradiction. 
\end{proof}

Finally, we make some observations about the polynomials associated with the specific family of trees we use in many of our constructions. 

\begin{defn}Let $A_k$ be a tree on $2k$ leaves that computes $(x_1 \vee x_2 \vee \dots  \vee x_k) \wedge (x_{k+1} \vee x_{k+2} \vee \dots  \vee x_{2k})$. Let $B_k$ be a tree on $2k$ leaves that computes $(x_1 \wedge x_2 \wedge \dots  \wedge x_k) \vee (x_{k+1} \wedge x_{k+2} \wedge \dots  \wedge x_{2k})$.
\end{defn}

\begin{lemma}Let $f_{A_k}$ and $f_{B_k}$ be the polynomials corresponding to $A_k$ and $B_k$ respectively. Then $f_{A_k}$ has a unique fixed point in the interval $\left(\frac{1}{k^2}, \frac{1}{k(k-1)}\right)$
and $f_{B_k}$ has a fixed point in the interval $\left(1-\frac{1}{k(k-1)}, 1-\frac{1}{k^2}\right)$.
\label{root range} \end{lemma}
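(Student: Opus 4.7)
The plan is to exploit the explicit polynomial $f_{A_k}(p) = (1-(1-p)^k)^2$, which reflects that the root AND fires iff both child ORs fire. For $k \ge 2$, $A_k$ has AND at its root and ORs below, so there is no root-to-leaf path of all ANDs and no root-to-leaf path of all ORs; Lemma~\ref{lemma paths}(3) then gives a unique interior fixed point $\alpha \in (0,1)$. Since $1-(1-p)^k \in (0,1)$ for $p\in(0,1)$, the equation $f_{A_k}(\alpha)=\alpha$ is equivalent to $(1-\alpha)^k = 1-\sqrt{\alpha}$, so $\alpha$ is the unique zero on $(0,1)$ of $\psi(p)=(1-p)^k - (1-\sqrt{p})$. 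By the intermediate value theorem it then suffices to show $\psi(1/k^2)>0$ and $\psi(1/(k(k-1)))<0$.

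The lower bound is immediate from strict Bernoulli: $(1-1/k^2)^k > 1-k\cdot 1/k^2 = 1-1/k$ for $k\ge 2$, hence $\psi(1/k^2)>0$ and $\alpha > 1/k^2$. For the upper bound I first prove the polynomial inequality
\[
(1-p)^k \;\le\; 1 - kp + \binom{k}{2}p^2 \quad \text{when } p = \tfrac{1}{k(k-1)}.
\]
Expanding the tail past the quadratic term and grouping consecutive pairs with indices $(2j+1,2j+2)$, the $j$th pair is $-\binom{k}{2j+1}p^{2j+1}+\binom{k}{2j+2}p^{2j+2}$, which is $\le 0$ exactly when $(k-2j-1)p \le 2j+2$; since $kp = 1/(k-1) \le 1$ this holds for every $j\ge 1$, and for odd $k$ the leftover term $-p^k$ is also negative. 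Substituting $p=1/(k(k-1))$ gives $(1-p)^k \le 1 - \tfrac{2k-1}{2k(k-1)}$, so
\[
\psi\!\left(\tfrac{1}{k(k-1)}\right) \;\le\; \tfrac{1}{\sqrt{k(k-1)}} - \tfrac{2k-1}{2k(k-1)},
\]
which is negative exactly when $(2k-1)^2 > 4k(k-1)$, i.e., when $1>0$. Hence $\alpha < 1/(k(k-1))$.

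The $B_k$ claim is then immediate: $B_k$ is the complementary tree of $A_k$ (every operation flipped), so by Corollary~\ref{cor:complementary}(2) the unique interior fixed point of $f_{B_k}$ is $1-\alpha$, which lies in $(1-\tfrac{1}{k(k-1)},\,1-\tfrac{1}{k^2})$; uniqueness for $B_k$ follows from Lemma~\ref{lemma paths}(3) applied to $B_k$ itself, which also has neither a pure-AND nor a pure-OR root-to-leaf path.

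The main obstacle is the upper bound $\psi(1/(k(k-1)))<0$. Softer estimates like $(1-p)^k \le e^{-kp}$ or a one-term Bernoulli bound $(1-p)^k \le 1-kp$ are too weak already at $k=2$ (they leave $\psi$ nonnegative at the endpoint). The second-order alternating-series truncation above is exactly sharp enough, and then the remaining algebra collapses to the clean identity $(2k-1)^2 - 4k(k-1) = 1$, so no delicate numerics are needed.
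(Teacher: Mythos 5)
Your proof is correct and takes essentially the same route as the paper's: both pin the fixed point by the intermediate value theorem on the interval $\left[\tfrac{1}{k^2},\tfrac{1}{k(k-1)}\right]$ using the Bernoulli lower bound $(1-x)^k>1-kx$ at one endpoint and the second-order truncation $(1-x)^k\le 1-kx+\binom{k}{2}x^2$ at the other (the paper's Lemma~\ref{1-x-approx}), and then pass to $B_k$ via Corollary~\ref{cor:complementary}. Your only departures are cosmetic: you take square roots and work with $\psi(p)=(1-p)^k-(1-\sqrt{p})$ rather than $f_{A_k}(p)-p$, you supply a pairing proof of the truncation inequality at the specific point, and you cite Lemma~\ref{lemma paths}(3) for uniqueness where the paper invokes Lemma~\ref{g}.
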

The proof of Lemma \ref{root range} will use the following elementary inequality.
\begin{lemma}\label{1-x-approx}
For $x \in (0,1)$,  and any integer $k \ge 0$,
$1- kx < (1-x)^k < 1 - kx + {k \choose 2} x^2$.
\end{lemma}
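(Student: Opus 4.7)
The plan is to prove both bounds by a single induction on $k$, exploiting the identity $(1-x)^{k+1} = (1-x)(1-x)^k$ together with the fact that $1 - x > 0$ on $(0,1)$. The base cases $k = 0$ and $k = 1$ are trivial: both bounds coincide with $(1-x)^k$ exactly, so the inequalities hold (as equalities that pose no obstruction to the inductive step).

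For the inductive step, I assume $1 - kx \leq (1-x)^k \leq 1 - kx + \binom{k}{2}x^2$ and multiply through by the positive quantity $(1-x)$. The lower side expands to $(1-kx)(1-x) = 1 - (k+1)x + kx^2$, and dropping the nonnegative $kx^2$ term yields the desired bound $1 - (k+1)x$. The upper side, using the Pascal identity $\binom{k}{2} + k = \binom{k+1}{2}$, expands to
\begin{equation*}
\left(1 - kx + \binom{k}{2}x^2\right)(1-x) = 1 - (k+1)x + \binom{k+1}{2}x^2 - \binom{k}{2}x^3,
\end{equation*}
and dropping the nonnegative cubic term $\binom{k}{2}x^3$ leaves $1 - (k+1)x + \binom{k+1}{2}x^2$, completing the induction.

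The only subtlety, and the closest thing to an obstacle, is tracking strictness. As stated the inequalities are strict, yet they degenerate to equalities for $k = 0, 1$ (both sides) and for $k = 2$ on the upper side, since $(1-x)^2 = 1 - 2x + x^2 = 1 - 2x + \binom{2}{2}x^2$. Strict inequality takes over for $k \geq 2$ in the lower bound, because the $kx^2$ term discarded during the induction is genuinely positive when $x \in (0,1)$, and for $k \geq 3$ in the upper bound, because the $\binom{k}{2}x^3$ term discarded is then genuinely positive. Since the lemma is invoked in the proof of Lemma \ref{root range} with $k$ large (in particular $k \geq 2$ in both directions), this boundary issue causes no problem in the application.
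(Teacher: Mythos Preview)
The paper does not supply its own proof of this lemma; it is simply labeled an elementary inequality and invoked immediately in the proof of Lemma~\ref{root range}. Your induction argument is correct and is the natural way to fill this gap: multiplying the inductive hypothesis through by the positive factor $1-x$ and discarding the nonnegative terms $kx^2$ (lower side) and $\binom{k}{2}x^3$ (upper side) is exactly right, and the Pascal identity $\binom{k}{2}+k=\binom{k+1}{2}$ is what makes the upper bound close up.

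Your observation about strictness is also accurate and is, in fact, a small inaccuracy in the paper's statement of the lemma: the inequalities degenerate to equality at $k=0,1$ on both sides and at $k=2$ on the upper side, so as written the lemma is false for those values of $k$. Your diagnosis that strictness holds for $k\ge 2$ (lower) and $k\ge 3$ (upper), because the discarded terms are then genuinely positive, is correct. Since Lemma~\ref{root range} is only meaningful for $k\ge 2$ (otherwise $1/(k(k-1))$ is undefined or nonpositive), the lower-bound application is fine; for the upper bound at $k=2$ one has equality in Lemma~\ref{1-x-approx}, but the final quantity $\frac{1}{4k^2(k-1)^2}$ is still strictly positive, so the conclusion survives.
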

%For $k=2,3$, it can easily be verified that the statement holds. \sam{Is this first sentence valid? I am not sure it is worth the time to come up with an argument since you can easily solve  $f_{A_k}=x$ and see they are in the range we claim.} We now consider $k\geq 4$. 
\begin{proof}[of Lemma \ref{root range}.] It suffices to show that $g(x)=f_{A_k}(x)-x$ has a zero on the interval $\left(\frac{1}{k^2}, \frac{1}{k(k-1)}\right)$.
%$(\frac{1}{k^2}, \frac{1}{k(k-1)})$. 
We will show that $g(1/k^2) < 0$ and $g(1/(k(k-1))) > 0$ and apply the intermediate value theorem.   
Using Lemma \ref{1-x-approx}, for $x=1/k^2$,
\[
(1-(1-x)^k)^2 -x < (1-(1-kx))^2 -x = k^2x^2 -x = 0.
\]
Similarly, for $x = 1/(k(k-1))$,
\begin{align*}
(1-(1-x)^k)^2 -x &> \left(1-\left(1-kx+ \frac{k(k-1)x^2}{2}\right)\right)^2 - x\\ 
&= \left(\frac{1}{k-1}-\frac{1}{2k(k-1)}\right)^2- \frac{1}{k(k-1)}\\
&= \frac{1}{(k-1)^2}\left(1 - \frac{1}{2k}\right)^2 - \frac{1}{k(k-1)}\\ 
&=  \frac{1}{(k-1)^2}\left(1- \frac{1}{k} + \frac{1}{4k^2} - (1 - \frac{1}{k})\right) = \frac{1}{4k^2(k-1)^2} > 0.
\end{align*}
It follows from Corollary \ref{cor:complementary} that $f_{B_k}$ has a fixed point in the interval $\left(1-\frac{1}{k(k-1)}, 1-\frac{1}{k^2}\right)$. Uniqueness follows from Lemma \ref{g}.
\end{proof}

\begin{lemma} Let $0\leq \alpha \leq 1$ and $f=\alpha f_{B_k}+(1-\alpha) f_{B_{k+1}}$ where $f_{B_k}$ is the polynomial corresponding to $B_k$. Let $t$ be the fixed point of $f$ in $(0,1)$. Then $g(p)= \frac{f(p)-p}{p(1-p)(p-t)}\geq \frac{1}{t}$ for all $p \in [0,1]$. 
\label{g} \end{lemma}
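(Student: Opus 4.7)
The plan is to reduce the claim to a polynomial non-negativity statement and then verify it using the explicit structure of $f_{B_k}$.

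First, I would record the identity $f_{B_k}(p) = 1 - (1-p^k)^2$ and the factorization $1 - p^k = (1-p)\, s_k(p)$, where $s_k(p) := 1 + p + \cdots + p^{k-1}$. This lets me write
\[
f(p) - p = (1-p)\bigl[1 - (1-p)\, S(p)\bigr], \qquad S(p) := \alpha\, s_k(p)^2 + (1-\alpha)\, s_{k+1}(p)^2,
\]
and the fixed-point equation $f(t) = t$ becomes the clean identity $(1-t)\, S(t) = 1$.

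Next, clearing denominators, $g(p) \ge 1/t$ is equivalent to the polynomial $N(p) := t(f(p) - p) - p(1-p)(p-t)$ having the sign of $(p-t)$ on $(0,1)$. Substituting the expression above and factoring out $(1-p)$ gives $N(p) = (1-p)\, B(p)$, where
\[
B(p) = t(1+p) - p^2 - t(1-p)\, S(p).
\]
Using $S(0) = 1$ and $(1-t)S(t) = 1$ one gets $B(0) = B(t) = 0$; and since $f'(0) = 0$ for $k \ge 2$, a direct computation shows $B'(0) = 0$ as well. Hence $B(p) = p(p-t)\, W(p)$ for some polynomial $W$, and the lemma reduces to showing $W(p) \ge 0$ on $[0,1]$.

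To obtain a workable formula for $W$, I would introduce $T(p) := \alpha\, s_{k-1}(p)(1 + s_k(p)) + (1-\alpha)\, s_k(p)(1 + s_{k+1}(p))$, a polynomial with non-negative integer coefficients satisfying $S(p) = 1 + p\, T(p)$; the fixed-point relation becomes $T(t) = 1/(1-t)$. A short manipulation then yields the closed form
\[
W(p) \;=\; \frac{2t-1}{1-t} \;-\; t(1-p)\,\frac{T(p) - T(t)}{p-t},
\]
which is manifestly a polynomial since $(p-t)$ divides the numerator on the right.

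Expanding the divided difference as $(T(p) - T(t))/(p-t) = \sum_{j=0}^{D-1} a_j p^j$, where $a_j = \sum_{m > j} T[m]\, t^{m-1-j}$ and $D = \deg T = 2k-1$, and using $t a_0 = (2t-1)/(1-t)$ (equivalent to $W(0) = 0$), one obtains
\[
W(p) \;=\; t \sum_{j=1}^{D-1} (a_{j-1} - a_j)\, p^j \;+\; t\, a_{D-1}\, p^D.
\]
Since $a_{D-1} = T[D] \ge 0$, it suffices to show that the sequence $(a_j)$ is non-increasing, which is equivalent to the tail inequality
\[
T[\ell] \;\ge\; (1-t) \sum_{m \ge \ell+1} T[m]\, t^{m-\ell-1} \qquad \text{for every } \ell \ge 0.
\]
I would finish by verifying this using the explicit ``tent'' pattern of the coefficients of $T$ (namely $T[\ell] = \ell + 2$ on an initial segment and decreasing linearly afterward) together with the lower bound $t > t_k > 1 - 1/(k(k-1))$ from Lemma~\ref{root range}, which makes the geometric weights $(1-t) t^{m-\ell-1}$ decay quickly.

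The main obstacle is the last step. Because $T[\ell]$ is not monotone in $\ell$ (it grows linearly for $\ell \le k-2$ and then decreases), the tail inequality does not follow from a soft monotonicity argument, and one must split into cases according to where $\ell$ sits relative to the peak at $\ell = k-2$. The fact that $t$ is forced close to $1$ makes each case go through, but verifying the boundary case $\ell = k-3$ (where the single ``mismatched'' coefficient $T[k-2] = 1 - 2\alpha$ coming from the convex combination can contribute with either sign) is the delicate point.
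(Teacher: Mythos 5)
Your reduction is sound and, in fact, lands exactly on the paper's key claim: since $W(p)=t\,g(p)-1$, writing $W(p)=t\sum_{j\ge 1}(a_{j-1}-a_j)p^j+t\,a_{D-1}p^D$ and asking that $(a_j)$ be non-increasing is precisely the statement that the coefficients of the polynomial $g$ are non-negative, which is what the paper establishes (and then concludes $g(p)\ge g(0)=1/t$ by monotonicity, just as you conclude $W\ge W(0)=0$). The algebra up to that point checks out: the factorization $f(p)-p=(1-p)\bigl[1-(1-p)S(p)\bigr]$, the identities $B(0)=B(t)=B'(0)=0$, the closed form for $W$ via the divided difference of $T$, and $t\,a_0=(2t-1)/(1-t)$ are all correct.

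The genuine gap is that the decisive step --- the tail inequalities $T[\ell]\ge(1-t)\sum_{m\ge\ell+1}T[m]\,t^{m-\ell-1}$ --- is never proved; you only outline a case analysis and yourself flag the junction near the peak (where the quantity $1-2\alpha$ enters with either sign) as unverified. As it stands this is exactly the heart of the lemma. Two further points would need repair even to execute your plan: (i) Lemma~\ref{root range} bounds the fixed point of $f_{B_k}$ alone, so the bound $t>1-\frac{1}{k(k-1)}$ for the fixed point of the \emph{convex combination} requires an extra comparison argument (e.g.\ $f\le f_{B_k}$ on $(0,1)$ forces $t\ge b_k$), and crude ``$t$ close to $1$'' estimates are too weak for small $k$ (e.g.\ $k=2$, where $1-t$ can be as large as $2-\phi$); (ii) the stated coefficient pattern of $T$ is off: $T[k-2]=k$, not $1-2\alpha$ (you presumably meant the difference $T[k-1]-T[k-2]=1-2\alpha$). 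The paper avoids all of this by proving coefficient non-negativity \emph{exactly}: equating coefficients in $(t-p)\sum_i a_i p^i=\alpha(1+\cdots-p^{2k-2})+(1-\alpha)(1+\cdots-p^{2k})$ gives recurrences that are handled by a forward induction from $a_0=1/t$ on the low-degree side and a backward induction from the leading coefficient $a_{2k-1}=1-\alpha$ (using $a_{i-1}=t\,a_i+1$) on the high-degree side, with no quantitative bound on $t$ needed. In your notation the same trick works: the differences $c_j:=a_{j-1}-a_j$ satisfy $c_j=(T[j]-T[j+1])+t\,c_{j+1}$, so a backward induction settles the decreasing part of the ``tent'' and a forward computation from $c$ at low indices settles the increasing part; replacing your tail estimates by this exact recurrence would close the gap.
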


\begin{proof}
By definition $$g(p) = \frac{f(p)-p}{p(1-p)(p-t)}=\frac{ \alpha(1-2p^{k-1}+p^{2k-1})+(1-\alpha)(1-2p^k+p^{2k+1})}{(1-p)(t-p)}.$$ Since $1$ and $t$ are fixed points of $f(p)$, $(1-p)$ and $(t-p)$ divide $f(p)-p$. Therefore, we may write $g= a_0 +a_1p  \dots + a_{2k-1} p^{2k-1}$ polynomial. We claim that all coefficients of $g$ are positive. Note that 
\begin{multline*}(t-p)\sum_{i=0}^{2k-1} a_i p^i=\\ \alpha(1+p \dots + p^{k-2}-p^{k-1}- p^{k}  \dots - p^{2k-2})+ (1-\alpha)(1+p \dots + p^{k-1}-p^{k}- p^{k+1}  \dots - p^{2k}).\end{multline*}
Observe \begin{align*} a_i&=\frac{1}{t}&\text{ for } i=0\\
a_i&=\frac{a_{i-1}+1}{t} &\text{ for } 1\leq i\leq k-2\\
a_i&=\frac{a_{i-1}+1-2\alpha }{t} &\text{ for } i=k-1\\
a_i&=\frac{ a_{i-1}-1}{t} &\text{ for } k \leq i\leq 2k-2\\
a_i&=\frac{ a_{i-1}-(1-\alpha)}{t} &\text{ for } i =2k-1.\\
\end{align*}
Note that $\frac{1}{t}=a_0 < a_1< \dots < a_{k-2}$, so $a_i>0$ for all $0\leq i\leq k-2$. Next observe that $a_{2k-1}=1-\alpha$ since comparing the coefficients of the $p^{2k}$ terms on both sides gives $ -a_{2k-1} = -(1-\alpha) $. It follows that $a_{2k-2}=t(1-\alpha)+1-\alpha$. For all $k\leq i\leq 2k-2$, $a_{i-1}=ta_i+1.$ Therefore $a_{i}> 0$ for all $k-1\leq i\leq 2k-3$. 

Since all coefficients of $g$ are positive, all derivatives are increasing. In particular the first derivative of $g(p)$ is increasing on the interval $(0,1)$. Therefore $g(p) \geq g(0) = \frac{1}{t}$ for all $p \in [0,1]$. 
\end{proof}

\begin{lemma} Let $f_{A_k}$ and $f_{B_k}$ be the polynomials corresponding to $A_k$ and $B_k$ respectively. For $t\leq 2-\phi$, there exists some $k$ and $\alpha$ such that $f_A= \alpha f_{A_k}+ (1-\alpha) f_{A_{k+1}}$ has fixed point $t$. Moreover, $\frac{t-f_A(p)}{t-p}\geq \left(1+\frac{p(1-p)}{t}\right)$. Similarly, for $t\geq \phi-1$, there exists some $k$ and $\alpha$ such that $f_B= \alpha f_{B_k}+ (1-\alpha) f_{B_{k+1}}$ has fixed point $t$. Moreover, $\frac{t-f_B(p)}{t-p} \geq \left(1+\frac{p(1-p)}{t}\right)$. \label{k,k+1} \end{lemma}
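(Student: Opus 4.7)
The plan is to treat the $f_B$ case directly and then transfer to the $f_A$ case by complementarity. Each case splits into two parts: existence of $(k,\alpha)$ placing the fixed point at $t$, and the quantitative bound on $(t-f(p))/(t-p)$.

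For existence in the $f_B$ case, Lemma \ref{root range} places the unique interior fixed point $t_k$ of $f_{B_k}$ in $(1-1/(k(k-1)),\,1-1/k^2)$, so $t_2=\phi-1$ and $t_k\nearrow 1$, giving $\bigcup_{k\geq 2}[t_k,t_{k+1}]=[\phi-1,1)$. Pick $k$ with $t\in[t_k,t_{k+1}]$ and define $H_\alpha(p):=\alpha f_{B_k}(p)+(1-\alpha)f_{B_{k+1}}(p)-p$. By Lemma \ref{lemma paths}(3), each $f_{B_j}$ is strictly below the diagonal on $(0,t_j)$ and strictly above on $(t_j,1)$, which forces $H_\alpha(t_k)\leq 0\leq H_\alpha(t_{k+1})$ for every $\alpha\in[0,1]$. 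Combining with uniqueness of the interior fixed point of each individual $f_{B_j}$, a continuous-selection argument shows that the interior zero of $H_\alpha$ traces a continuous path from $t_k$ (at $\alpha=1$) to $t_{k+1}$ (at $\alpha=0$); by IVT some $\alpha$ lands on $t$.

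For the inequality in the $f_B$ case, the rearrangement $(t-f_B(p))/(t-p)-1=(p-f_B(p))/(t-p)$ reduces the claim to $(f_B(p)-p)/(p(1-p)(p-t))\geq 1/t$, which is exactly Lemma \ref{g} applied to the convex combination just constructed.

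For the $f_A$ case, Corollary \ref{cor:complementary} lets us take $f_B:=\alpha f_{B_k}+(1-\alpha)f_{B_{k+1}}$ so that $f_A(p)=1-f_B(1-p)$ and $t$ is a fixed point of $f_A$ iff $1-t$ is a fixed point of $f_B$. Since $t\leq 2-\phi$ corresponds to $1-t\geq\phi-1$, existence transfers immediately. For the inequality, substituting $q=1-p$ yields $(t-f_A(p))/(t-p)=((1-t)-f_B(q))/((1-t)-q)$, so the $f_A$ bound follows from the $f_B$ bound at the complementary fixed point $1-t$ and point $q$. The main obstacle I anticipate is uniqueness/monotonicity of the interior root of $H_\alpha$ as $\alpha$ varies, required to legitimize the continuous-selection argument; I expect this to follow from the positive-coefficient computation in the proof of Lemma \ref{g}, which gives enough convexity-type control on $f_B-\mathrm{id}$ over $(t_k,t_{k+1})$ to rule out spurious diagonal crossings.
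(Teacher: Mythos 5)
Your overall route is the paper's: reduce to the $f_B$ case via Corollary \ref{cor:complementary}, use Lemma \ref{root range} to get the increasing sequence of fixed points $b_2=\phi-1<b_3<\cdots\to 1$, choose $k$ with $b_k\le t\le b_{k+1}$ and an $\alpha$ placing the fixed point at $t$, and obtain the inequality by rewriting it as $\frac{f_B(p)-p}{p(1-p)(p-t)}\ge \frac1t$, which is exactly Lemma \ref{g}. The one place you diverge is the existence of $\alpha$: you track the interior root of $H_\alpha$ in $p$ as $\alpha$ varies, which forces you to worry about uniqueness/continuity of that root. That worry is dischargeable (Lemma \ref{g}, applied to any convex combination that has an interior fixed point, gives positive coefficients for $g$ and hence uniqueness of the interior crossing, and continuity of the root then follows from uniqueness plus compactness), but it is unnecessary machinery: since $f_{B_k}(t)\ge t\ge f_{B_{k+1}}(t)$ when $b_k\le t\le b_{k+1}$, the map $\alpha\mapsto \alpha f_{B_k}(t)+(1-\alpha)f_{B_{k+1}}(t)-t$ is affine and changes sign on $[0,1]$, so one can simply take $\alpha=\frac{t-f_{B_{k+1}}(t)}{f_{B_k}(t)-f_{B_{k+1}}(t)}\in[0,1]$; this is what the paper's terse ``$\alpha$ is chosen so that $f$ has fixed point $t$'' amounts to. One further caveat, which concerns the statement rather than your argument: the complementarity substitution $q=1-p$ yields $\frac{t-f_A(p)}{t-p}=\frac{(1-t)-f_B(q)}{(1-t)-q}\ge 1+\frac{q(1-q)}{1-t}=1+\frac{p(1-p)}{1-t}$, i.e.\ with denominator $1-t$, not $t$ as written in the lemma for the $f_A$ case (and the stronger $1/t$ form fails already for $f_{A_2}$ at $t=2-\phi$, where $g_A(p)=1+\phi-p$). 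The paper's own proof has the same feature, since it only argues the $\phi-1\le t<1$ case and transfers, so your proposal matches what is actually proved.
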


\begin{proof} By Corollary \ref{cor:complementary}, it suffices to prove the theorem for $\phi -1 \leq t< 1$.  By Lemma \ref{root range}, $f_{B_k}$ has a single fixed point in the range $\big(1-\frac{1}{k(k-1)}, 1- \frac{1}{k^2}\big)$. Let $b_k$ be the fixed point $f_{B_k}$. 
Note that for $0<p<1$, 
%$$p^k(1+p) < 2$$ $$p^k(1-p^2)<2(1-p)$$ $$2p^{k+1}-p^{2k+2}<2p^k-p^{2k}$$ 
$$f_{B_{k+1}}(p)< f_{B_{k}}(p).$$ It follows that $b_k<b_{k+1}$. We obtain an increasing sequence $\phi-1=b_2, b_3, b_4 \dots $ that converges to 1.  Let $k$ be the value for which $b_k\leq t< b_{k+1}$. Let $f= \alpha f_{B_k} + (1-\alpha) f_{B_{k+1}}$ where $\alpha$ is chosen so that $f$ has fixed point $t$.

Next we show that $t-f(p)\geq (t-p)\left(1+\frac{p(1-p)}{t}\right)$. By Lemma \ref{g}, $g(p) = \frac{f(p)-p}{p(1-p)(p-t)} \geq \frac{1}{t}$ for $0\leq p\leq 1$. Therefore \begin{align*} t-f(p)
%= t- (p+p(1-p)(p-t)g(p))
=(t-p)(1+ p(1-p)g(p))
\geq (t-p)\left(1+\frac{p(1-p)}{t}\right).\end{align*} \end{proof}

\section{Convergence of iterative trees to threshold functions}\label{convergence}

In the previous section, we showed that if the width of each level is infinite, then items at level $k$ of an iterative tree evaluate to 1 with probability $f_C^{(k)}(p)$ when each input is independently set to 1 with probability $p$. In this section, we demonstrate ways of selecting $C$ so that $f_C^{(k)}(p)$ converges to a $t$-threshold function. 

By an abuse of notation, we say that $f(p)$ converges to a $t$-threshold function if
$$
\lim_{k \to \infty}f^{(k)}(p) = \left\{
        \begin{array}{ll}
            0 & \quad 0 \leq p<t  \\
            1 & \quad t<p\leq 1 \\
            p & \quad p=t.
        \end{array}
    \right.
$$
Moreover, we say that $f$ converges quadratically to a $t$-threshold function if the corresponding iterative construction exhibits quadratic convergence. The function depicted in Figure \ref{graph} converges to a $1/2$-threshold function. 
 \begin{figure}[h]
 \centering
\includegraphics[width=.2\textwidth]{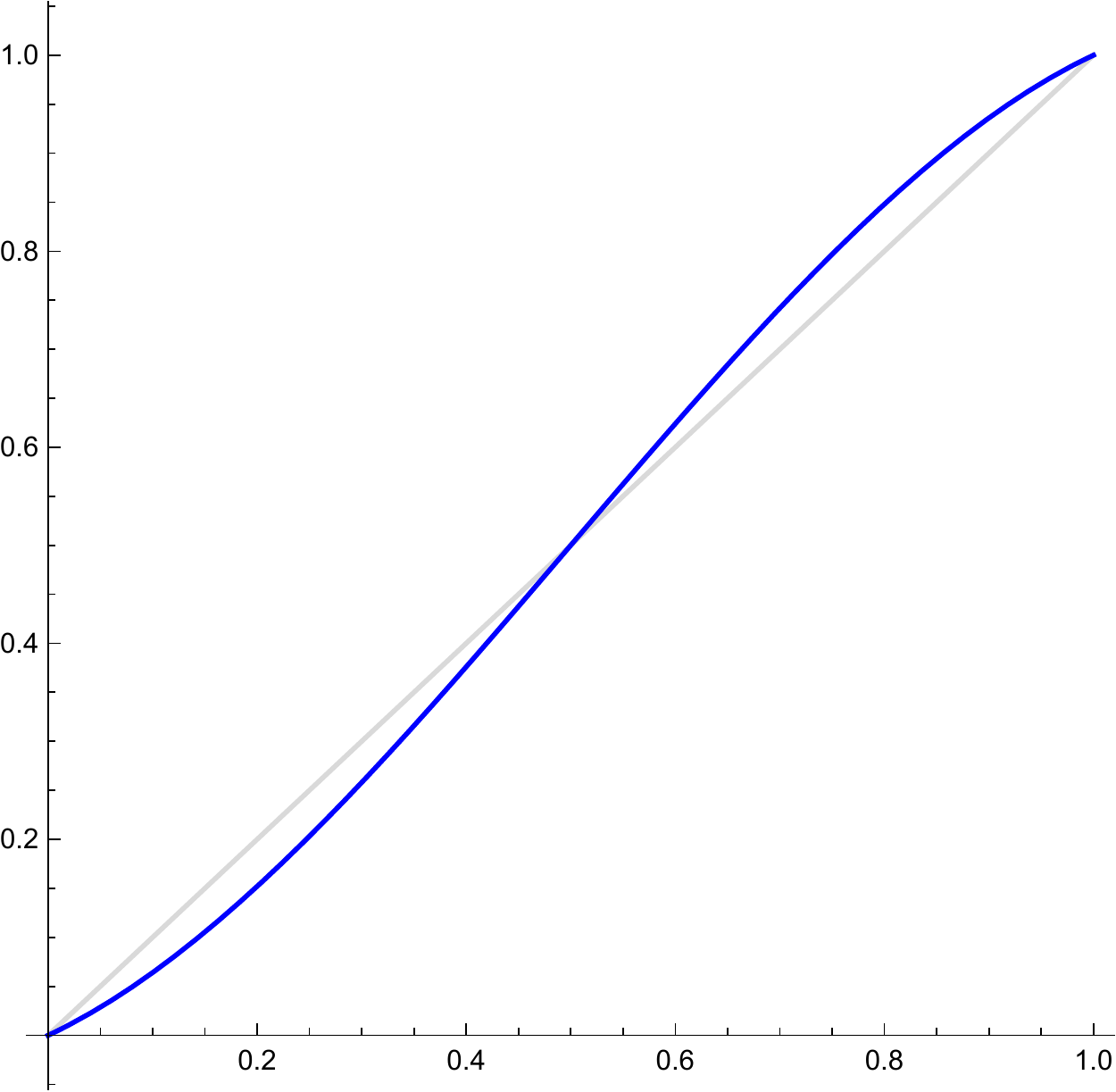}
\quad
\includegraphics[width=.2\textwidth]{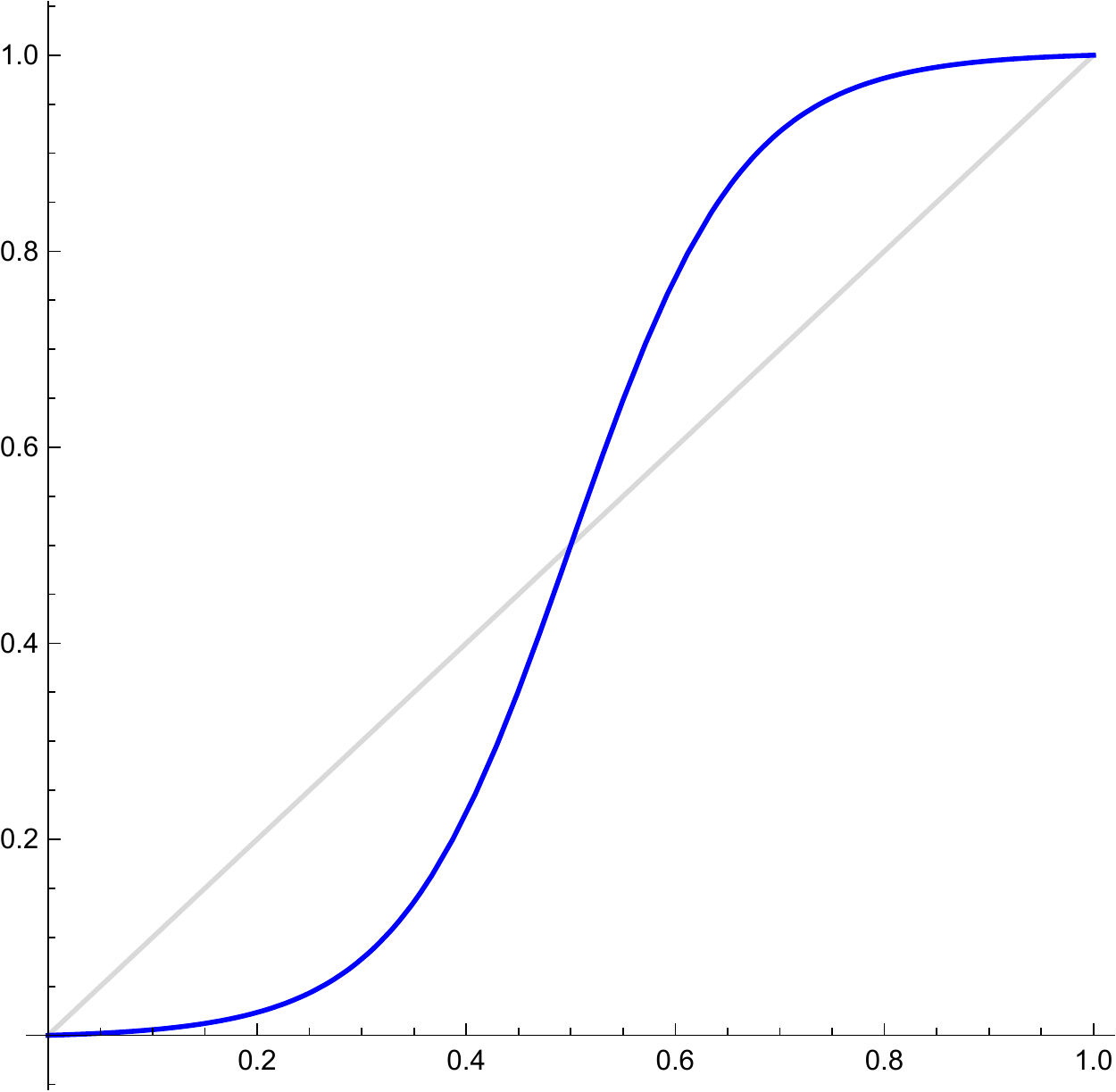}
\quad
\includegraphics[width=.2\textwidth]{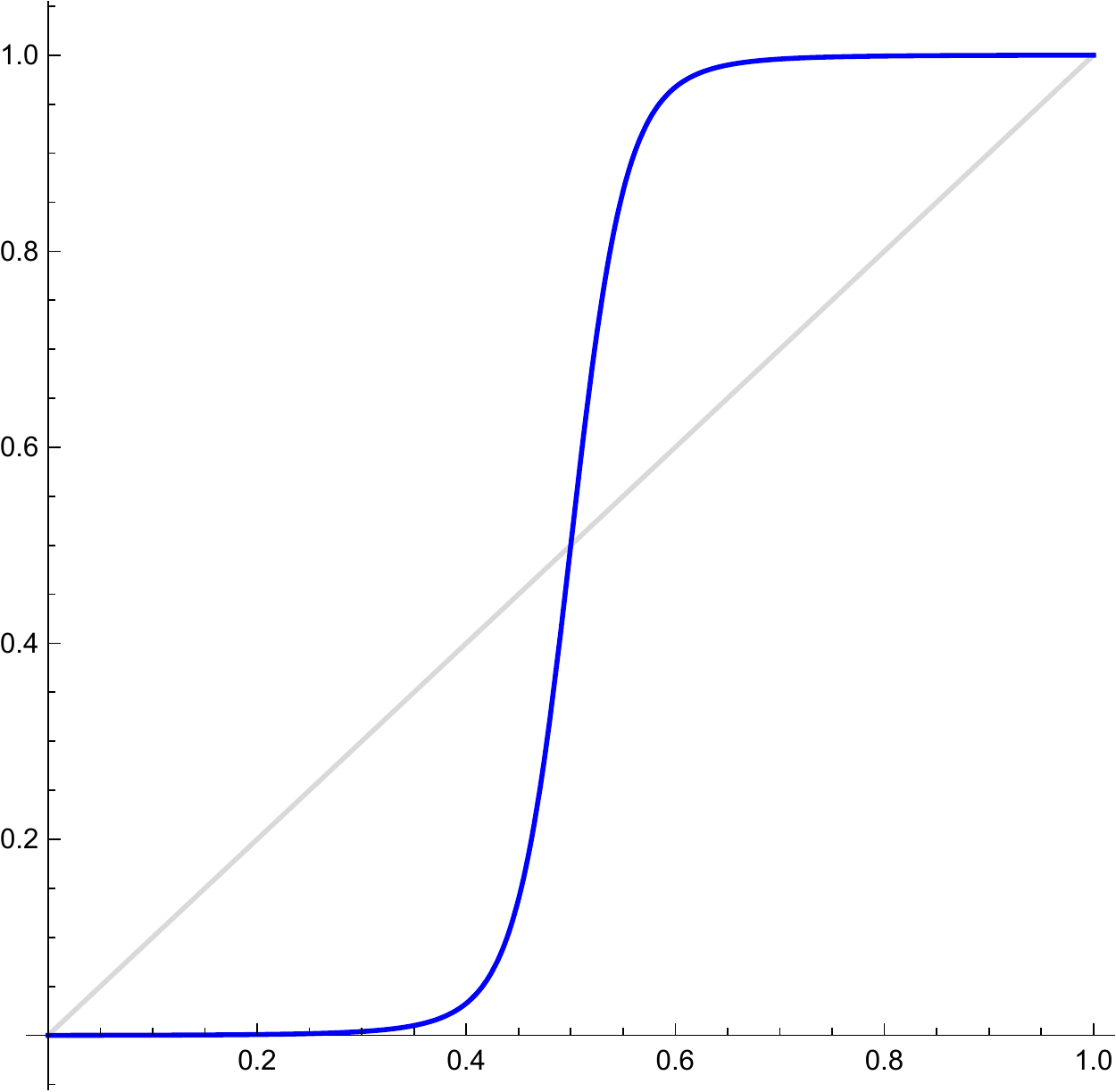}
\quad
\includegraphics[width=.2\textwidth]{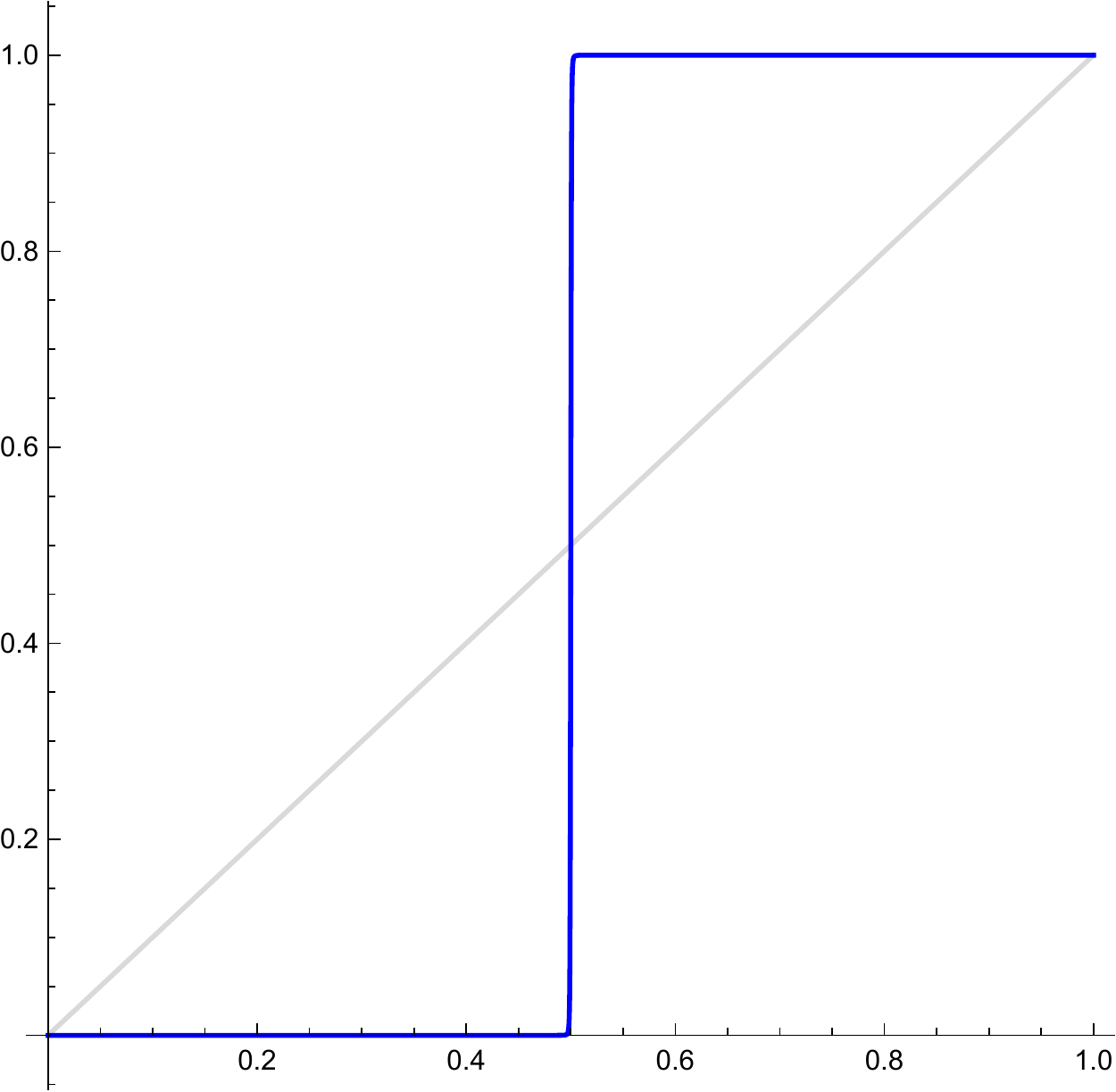}
  \caption{A function $f$ that converges to a $1/2$-threshold function. Left to right: $f(p), f^{(5)}(p),f^{(10)}(p),f^{(30)}(p)$.}
  \label{graph}
  \end{figure}

We now prove that the construction described in Theorem \ref{thm:linear} converges to a $t$-threshold function. \\

\begin{proof} [of Theorem \ref{thm:linear}.]  Let $f_R$ be the polynomial that describes the iterative construction in which $T_1$ and $T_2$ are selected with probability $t$ and $1-t$ respectively. Since, $f_{T_1}(p)=2p^2-p^3$ and $f_{T_2}(p)=p+p^2-p^3$, $$f_R(p)= t f_{T_1}(p) + (1-t) f_{T_2}(p) = (1-t)p + (1+t)p^2 - p^3.$$ Since $f_R(p)-p=p(1-p)(p-t)$, the fixed points of $f_R$ are $0, t, $ and $1$. We claim that $f_R$ exhibits linear convergence to a $t$-threshold function. 

Let $p$ be the probability that an input item fires. It suffices to consider the case when $p\leq t-1/n$. By Corollary \ref{cor:complementary}, convergence to 1 for $p \geq t+\frac{1}{n}$ follows from the complementary construction.

First we show that the probability an item at level $\Omega(\log n)$ fires is less than $\frac{t}{2}$. By definition $p-f(p)=p(1-p)(t-p)$. Observe that for $t/2<p\leq t- 1/n$ \begin{align*} 
\frac{t-f(p)}{t-p}= 1+\frac{p-f(p)}{t-p}
= 1+p(1-t)
\geq 1+\frac{t(1-t)}{2}.\end{align*}
It follows that for all $\ell$ either $f^{(\ell)}(p)< \frac{t}{2}$ or $$t-f^{(\ell)}(p) \geq \left(1+\frac{t(1-t)}{2}\right)^\ell (t-p)\geq \left(1+\frac{t(1-t)}{2}\right)^\ell \frac{1}{n}.$$
For $\ell= \log_{1+\frac{t(1-t)}{2}}\frac{tn}{2},$ $f^\ell(p)< \frac{t}{2}$. 

Next, we show that at $\Omega(k)$ additional levels, the probability an items fires is less than $2^{-k}$. For $p< \frac{t}{2}$, $$f(p)=p(1-p)(p-t)+p=p(1-(1-p)(t-p))
\leq p\left(1-\left(1-\frac{t}{2}\right)\frac{t}{2}\right).$$ 
It follows $$f^{(l)}(p)<\left(1-\left(1-\frac{t}{2}\right)\frac{t}{2}\right)^\ell p
<\left(1-\left(1-\frac{t}{2}\right)\frac{t}{2}\right)^\ell\frac{t}{2}.$$

Thus, for $l=\log_{\left(1-\left(1-\frac{t}{2}\right)\frac{t}{2}\right)}
 \frac{1}{t2^{k-1}}$, $f^\ell(p)< 2^{-k}$. We have shown that when the input items fire with probability $p\leq t-1/n$, items level $\Omega(k + \log n)$ will evaluate to $1$ with probability less than $2^{-k}$. \end{proof}

%Let $p$ be the probability that an input item fires. First consider the case when $p\leq t-1/n$. Then the probability an item fires decreases linearly with each iteration of $f$. Note that $$f(p)=p+p(1-p)(p-t)=p(1+(1-p)(p-t))\leq p(1-1/n).$$ It follows that for for $c=1-1/n$, $f^{(l)}(p)\leq c^{l}(t-1/n)$. Thus, for $l> k\log_c(1/2)-\log_c(t-1/n)$, $f^{(l)}(p)\leq c^\ellp\leq c^\ell(t-1/n)<2^{-k}$. Therefore, for initial probability $p\leq t-1/n$, items level $\Omega(k + \log n)$ will evaluate to $1$ with probability less than $2^{-k}$ in expectation.  

%Next consider the case when $p\geq t+1/n$. Then the probability an item is on increases linearly with each iteration of $f$. Note that $$1-f(p)=1-(p+p(1-p)(p-t))=(1-p)(1-p(p-t))\leq (1-p)\left(1-\frac{t+1/n}{n}\right).$$ It follows that for for $c=1-\frac{t+1/n}{n}$, $1-f^{(l)}(p)\leq c^{l}(1-(t+1/n))$. Therefore, for $l> k\log_c(1/2)-\log_c(1-(t+1/n))$, $1-f^{(l)}(p)<c^\ell(1-p)<c^\ell(1-(t+1/n))<2^{-k}$. Thus, for initial probability $p\geq t-1/n$, level $\Omega(k + \log n)$ items will evaluate to $0$ with probability less than $2^{-k}$.  \end{proof}

\subsection{Quadratic convergence from iterative trees with small building blocks}

%As demonstrated by Theorem \ref{thm:linear}, for an arbitrary threshold $t$, $0<t<1$, we can define a probability distribution on the two trees with three leaves illustrated in Figure \ref{fav} so that the corresponding iterative tree exhibits linear convergence to the desired $t$-threshold function. 
%To construct an iterative tree with quadratic convergence, it is necessary to define the iterative tree using a probability distribution over larger trees. 
In this section we show that using trees with four or five leaves as building blocks, we can construct an iterative tree that converges quadratically to a $t$-threshold function for restricted values of $t$. We begin with a lemma that provides sufficient conditions for quadratic convergence.

\begin{lemma} Let $f$ be a function corresponding to an iterative construction on $n$ inputs that satisfies the following conditions:\begin{itemize}
\item On the interval $[0,1]$, $f$ has precisely three fixed points: $0,t,$ and $1$.
\item (Linear Divergence) There exists constants $u,v$ satisfying $0<u<t$ and $t<v<1$ and constants $c_1, c_2>1$ such that 
\begin{enumerate}
\item $t-f(p)\geq c_1(t-p)$  for $p \in [u,t-\frac{1}{n}]$, and
\item $f(p)-t\geq c_2(p-t)$  for $p \in [t+\frac{1}{n},v]$.
\end{enumerate}
\item (Quadratic Convergence) For the constants $u,v$ as above, there exists constants $c_3, c_4$ such that $c_3 u<1$ and $c_4 (1-v)<1$ and 
\begin{enumerate}
\item $f(p)< c_3p^2$  for $p \in (0,u)$, and
\item $1-f(p)<c_4(1-p)^2$  for $p \in (v,1)$.
\end{enumerate}
\end{itemize} 
Then $f$ exhibits quadratic convergence to a $t$-threshold function, meaning items at level $\Omega(\log n+\log k)$ of the corresponding infinite width iterative construction compute a $t$-threshold function with probability at least $1-2^{-k}$. \label{conditions} \end{lemma}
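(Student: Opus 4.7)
The plan is to split the analysis into two phases by the same reasoning template used in the proof of Theorem~\ref{thm:linear}: a ``linear'' escape phase that gets the iterate out of the neighborhood of the unstable fixed point $t$, followed by a ``quadratic'' absorption phase that drives it very close to the attractive fixed point $0$ (or $1$). By Corollary~\ref{cor:complementary}, it suffices to handle inputs with $p \le t - 1/n$; the case $p \ge t + 1/n$ follows by applying the same argument to the complementary construction, whose polynomial satisfies the symmetric set of hypotheses with $1-v$, $1-u$, $c_2$, $c_4$ in the roles of $u$, $v$, $c_1$, $c_3$.

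For the linear escape phase, fix $p \in [0, t - 1/n]$ and let $\ell$ be the first level at which $f^{(\ell)}(p) < u$. For each iterate that still lies in $[u, t - 1/n]$, hypothesis~(1) of Linear Divergence gives $t - f^{(j+1)}(p) \ge c_1\,(t - f^{(j)}(p))$, so after $j$ such iterations $t - f^{(j)}(p) \ge c_1^{\,j}\,(t-p) \ge c_1^{\,j}/n$. This inequality can persist only while $t - f^{(j)}(p) \le t - u$, hence $\ell \le \log_{c_1}(n(t-u)) = O(\log n)$. (If $p < u$ to begin with, take $\ell = 0$.)

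For the quadratic absorption phase, set $q_i := c_3\, f^{(\ell + i)}(p)$. Since $f^{(\ell)}(p) < u$ and $c_3 u < 1$, we have $q_0 < c_3 u < 1$. The Quadratic Convergence hypothesis gives $f(q_i/c_3) < c_3 (q_i/c_3)^2 = q_i^2/c_3$ as long as the iterate stays in $(0,u)$, so $q_{i+1} < q_i^{\,2}$. Iterating, $q_i < (c_3 u)^{2^i}$, and therefore
\[
f^{(\ell+i)}(p) \;<\; \frac{(c_3 u)^{2^{\,i}}}{c_3}.
\]
To force $f^{(\ell+i)}(p) < 2^{-k}$ it suffices that $(c_3 u)^{2^i} < c_3\, 2^{-k}$, i.e.\ $2^i \ge (k\log 2 - \log c_3)/\log(1/(c_3 u))$, so $i = O(\log k)$ suffices. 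Adding the two phases yields a total of $\ell + i = O(\log n + \log k)$ levels, as desired. The symmetric argument (via the complementary polynomial and Corollary~\ref{cor:complementary}) then gives $1 - f^{(L)}(p) < 2^{-k}$ when $p \ge t + 1/n$.

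The only real subtlety is the boundary between the two phases: one must check that once the iterate falls below $u$ it stays below $u$ on subsequent steps, so that the quadratic bound keeps applying. This is immediate because $f(p) < c_3 p^2 < (c_3 u) p < p < u$ on $(0, u)$, so the iterate is strictly decreasing in that interval and never re-enters the linear-divergence regime. Apart from this check, the proof is simply iterating the two hypothesized inequalities and tracking the exponents, with no further calculation required.
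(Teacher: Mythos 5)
Your proof is correct and follows essentially the same two-phase argument as the paper: linear divergence pushes the iterate below $u$ in $O(\log n)$ levels, then the bound $f(p)<c_3p^2$ iterated gives $f^{(\ell+i)}(p)<(c_3u)^{2^i}/c_3<2^{-k}$ after $O(\log k)$ more levels, with the symmetric argument handling $p\geq t+\frac{1}{n}$. Your explicit check that the iterate, once below $u$, is strictly decreasing and never re-enters the linear regime is a nice touch that the paper leaves implicit.
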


\begin{proof} Let $p$ be the probability an input item fires. First we consider the case when $p\leq t-\frac{1}{n}$. By the linear divergence assumption, $t-f(p)\geq c_1(t-p)$  for $p \in [u,t-\frac{1}{n}]$. It follows that $f^{(\ell)}(p)< u$ or  $$t-f^{(\ell)}(p)\geq c_1^\ell (t-p)\geq c_1^\ell(1/n).$$
Thus for $\ell= \log_{c_1}n(t-u)$, $f^{(\ell)}(p)\leq u$. Therefore, level $\Omega(\log n)$ items fire with probability at most $u$. Next we show that given a level in which items fire with probability at most $u$, the items at $\Omega( \log k)$ levels higher in the iterative tree fire with probability at most $2^{-k}$. Let $p'$ be the probability an item fires at the first level for which the probability an item fires is below $u$. By the quadratic convergence assumption, $f(p')< c_3(p')^2$  for $p' \in (0,u)$. It follows that for $\ell> \log_2(\log_{c_3}(1/2))+\log_2(k)-\log_2(1+\log_{c_3}(t-1/n))+1$, $f^{(\ell)}(p')\leq c^{2^\ell-1}(p')^{2^\ell}\leq c^{2^\ell-1}u^{2^\ell}<2^{-k}$. We have shown that in expectation items at $\Omega( \log n+\log k)$ fire with probability less than $2^{-k}$ when $p\leq t-\frac{1}{n}$.
A similar argument applies for $p\geq t+\frac{1}{n}$.

% By the linear divergence assumption, $f(p)-t\geq c_2(p-t)$ for $p \in [t+\frac{1}{n},v]$. It follows that $$f^{(\ell)}(p)-t\geq c_2^\ell (p-t)\geq c_2^\ell(1/n).$$
%Thus for $\ell= \log_{c_2}n(v-t)$, $f^{(\ell)}(p)\geq v$. Therefore, level $\Omega(\log n)$ items fire with probability at least $v$. Next we show that given a level in which items fire with probability at least $v$, the items at $\Omega( \log k)$ levels higher in the iterative tree fire with probability at most $2^{-k}$ in expectation. Let $p'$ be the probability an item fires at the first level for which the probability an item fires is at least $v$. By the quadratic convergence assumption, $1-f(p')< c_3(1-p')^2$  for $p' \in (v,1)$. It follows that for $\ell> \log_2(\log_{c_4}(1/2))+\log_2(k)-\log_2(1+\log_{c_4}(1-(t+1/n)))+1$, $1-f^{(\ell)}(p')\leq c^{2^\ell-1}(1-p')^{2^\ell}\leq c^{2^\ell-1}(1-v)^{2^\ell}<2^{-k}$. We have shown that in expectation items at $\Omega( \log n+\log k)$ fire with probability at least $1-2^{-k}$ when $p\geq t+\frac{1}{n}$.
 \end{proof}

%\sam{I am not sure if this is the cleanest possible presentation of this lemma. Is it necessary to specify what $l$ is, or can I just use the $\Omega$ notation?}

\begin{remark} Let $f$ be a function corresponding to an iterative construction with fixed point $t$. Then there exists $u$ and $v$ for which the quadratic convergence condition of Lemma \ref{conditions} holds if and only if $f'(0)=0$ and $f'(1)=0$. \label{no linear term}\end{remark}

\begin{proof}  Quadratic convergence to $0$ is observed if and only if there exists some positive constant $u$ sufficiently close to $0$ for which all $x<u$, $f(x)= O(x^2)$. Writing $f(x)$ according to its Taylor series expansion about $0$
 %$$f(x)= \sum_{n=0}^\infty \frac{f^{(n)}(0)x^n}{n!}$$ 
 implies that such behavior occurs if and only if $f'(0)=0$. Similarly, the observing the Taylor series expansion about $1$ allows us to conclude that quadratic convergence to $1$ is observed if and only if $f'(1)=0$. 
\end{proof} 

Next, we prove that the construction given in Theorem \ref{thm:quadratic four}A converges quadratically to a $t$- threshold function. \\

\begin{proof} [of Theorem \ref{thm:quadratic four}A.]
 Since  $2- \phi\leq t\leq \phi-1$, $0 \leq \alpha(t)\leq 1$ and the probability distribution $R$ is well-defined. By construction, $f_{F_1}(p)=4p^2-4p^3+p^4$ and $f_{F_2}(p)=2p^2-p^4 $, so $$f_R(p)= \frac{1 + t - 3 t^2 }{t(1 - t)} p^2+ \frac{-2 + 2t + 2t^2}{t(1 - t)}p^3+ \frac{1 - 2t}{t(1 - t)} p^4.$$
We apply Lemma \ref{conditions}. First note that $0, t,$ and $1$ are fixed points. Let $p$ be the fraction of input items firing. It suffices to show convergence to 0 when $p\leq t- \frac{1}{n}$. By Corollary \ref{cor:complementary}, convergence to 1 for $p\geq t+ \frac{1}{n}$ follows from the complementary construction. 

First we show linear divergence from $t$. Let $g(p)=\frac{f(p)-p}{p(1-p)(p-t)}= \frac{1}{t}+\frac{p(2t-1)}{(1-t)t}$. We claim $g(p)\geq 1$ for $0\leq p\leq 1$.  If $t\geq \frac{1}{2}$, then $g(p)\geq \frac{1}{t}>1$. If $t < \frac{1}{2}$, then $g(p)\geq \frac{1}{t}+\frac{2t-1}{(1-t)t}=\frac{1}{1-t} \geq 1$. 
Observe that for any constant $0<u<t$ and $u<p\leq t-1/n$  \begin{align*}t-f(p)=t-\left(p+p(1-p)(p-t)g(p)\right)
\geq (t-p)\left(1+p(1-p)\right)
\geq (t-p)\left(1+u(1-t)\right).\end{align*}
Thus $c_1=1+u(1-t)$ satisfies the first linear divergence condition.  

%We similarly observe that for any constant $t<v<1$ and $t+1/n<p<v$ \begin{align*} f(p)-t&=(p-t)\left(1+\frac{p(1-p)(1-t+p(2t-1))}{t(1-t)}\right)\\
%&\geq (t-p)\left(1+t(1-v)\right).\end{align*}
%Thus $c_2=1+t(1-v)$ satisfies the second linear divergence condition.
Next, we show quadratic convergence. Let $u=1/5$. Observe that $$f(p)\leq 4p^2-4p^3+p^4<4p^2.$$ Since $(1/5)4<1$, taking $c_3=4$ satisfies the first condition of quadratic convergence. 
%Let $v=3/4$.
%Observe that for $p>v$, $$1-f(p) \leq 1-2p^2+p^4=(1-p)^2(1+p)^2<(1-p)^2(1+3/4)^2=(49/16)(1-p)^2.$$ Since $(49/16)(1-3/4)<1$ taking $c_4=49/16$ satisfies the second condition of quadratic convergence.
Thus, we may apply Lemma \ref{conditions} to conclude that items at level $\Omega(\log n+\log k)$ in the limit of the iterative construction compute a $t$-threshold function with probability at least $1-2^{-k}$.

It remains to show that no construction using trees with four leaves will yield quadratic convergence to a $t$-threshold function for $t$ outside the range $2- \phi\leq t\leq \phi-1$. A $t$-threshold function with quadratic convergence must satisfy the following five constraints: (i) $f(0)=0$, (ii) $f(1)=1$, (iii) $f(t)=t$, (iv) $f'(0)=0$, (v) $f'(1)=0$. Solving these equations gives the function 
$$f(p) = \frac{1 + t - 3 t^2 }{t(1 - t)} p^2+ \frac{-2 + 2t + 2t^2}{t(1 - t)}p^3+ \frac{1 - 2t}{t(1 - t)} p^4.$$
Suppose that $f$ can realized by a convex combination of degree four polynomials. Then the leading coefficient of $f$ must be between $-1$ and $1$ since all achievable polynomials have leading coefficient $-1$ or $1$. Thus, $ 0 \leq \frac{1 - 2t}{t(1 - t)} \leq 1$, which implies that $2- \phi\leq t\leq \phi-1$. \end{proof}

\begin{proof} [of Theorem \ref{thm:quadratic four}B.] By construction $f_{V_1}(p)=p^2+p^3-p^5$ and $f_{V_2}(p)=6p^2-9p^3+5p^4-p^5$, so $$f_R(p)= \frac{1 + t - 2 t^2 - t^3}{t(1 - t)} p^2+ \frac{-2 + t + t^2 + 2 t^3}{t(1 - t)}p^3+ \frac{1 - t^2 - t^3}{t(1 - t)} p^4 -p^5.$$ 
We apply Lemma \ref{conditions}. 
 First note that $0, t,$ and $1$ are fixed points. Let $p$ be the fraction of input items firing. It show convergence to 0 when $p\leq t- \frac{1}{n}$. By Corollary \ref{cor:complementary}, convergence to 1 for $p\geq t+ \frac{1}{n}$ follows from the complementary construction. 

First we show linear divergence from $t$.  Let $g(p)=\frac{f(p)-p}{p(1-p)(p-t)}=\frac{1}{t}-\frac{t^2+t-1}{t(t-1)}p+p^2= \frac{1}{t}+p(-\frac{t^2+t-1}{t(t-1)}+p)$. We claim that $g(p)\geq1$ for $0\leq p\leq 1$.  If $-\frac{t^2+t-1}{t(t-1)}+p\geq 0$, then $g(p)\geq \frac{1}{t}\geq 1$. If $-\frac{t^2+t-1}{t(t-1)}+p <0$, then $g(p)\geq \frac{1}{t}-\frac{t^2+t-1}{t(t-1)}+1=\frac{1}{1-t}\geq 1$.
%Observe that for any constant $0<u<t$ and $u<p\leq t-1/n$ \begin{align*} t-f(p)&= t-\left(p+p(1-p)(p-t)g(p)\right)\\
%&\geq (t-p) \left(1+p(1-p)\right)\\
%&\geq (t-p) \left(1+u(1-t)\right)\end{align*} 
Thus as in the proof of part A, $c_1=1+u(1-t)$ satisfies the first linear divergence condition. 
%We similarly observe that for any constant $t<v<1$ and $t+1/n<p<v$
%\begin{align*} f(p)-t&= (p-t) \left(1+p(1-p)\left(\frac{1}{t}+\frac{t^2+t-1}{t(t-1)}p+p^2\right)\right)\\
%&\geq (p-t) \left(1+\frac{p(1-p)}{2}\right)\\
%&\geq (p-t) \left(1+\frac{t(1-v)}{2}\right)\end{align*}
%Thus, $c_2=1+\frac{t(1-v)}{2}$ satisfies the second linear divergence condition. 
Next we show quadratic convergence. Let $u=1/7$. Note that $$f(p)\leq p^2(6-9p+5p^2-p^3)<6p^2.$$ Since $6(1/7)<1$, taking $c_3=6$ satisfies the first condition of quadratic convergence. 
%Next let $v=6/7$ and note that $$1-f(p)\leq 1-p^2-p^3+p^5=(1-p)^2(1+2p+2p^2+p^3)<6(1-p)^2.$$ Since $6(1-6/7)<1$, taking $c_4=6$ satisfies the second condition of quadratic convergence. 
Thus, we may apply Lemma \ref{conditions} to conclude that in expectation items at level $\Omega(\log n+\log k)$ of the iterative construction compute a $t$-threshold function with probability at least $1-2^{-k}$.

It remains to show that no construction using trees with five leaves will yield quadratic convergence to a $t$-threshold function for $t$ outside the range $0.26 \lesssim t \lesssim 0.74$. A $t$-threshold function with quadratic convergence must satisfy the following five constraints: (i) $f(0)=0$, (ii) $f(1)=1$, (iii) $f(t)=t$, (iv) $f'(0)=0$, (v) $f'(1)=0$. Such a function will have the form:
\begin{align*} z_{d,t}(p)&=\frac{1 + t - (3 +d) t^2 + 
  d t^3}{(1 - t) t} p^2+ \frac{-2 + (2 +d)t +  (2+d) t^2  - 
   2 d t^3}{(1 - t) t} p^3 \\
& \quad +\frac{
  1 - (2+2d) t  + d t^2 + d t^3}{(1 - t) t}p^4 +dp^5.\end{align*}
  %When can $z_{d,t}(p)$  be written as a convex combination of achievable polynomials of degree five?  
 Since each achievable polynomial has leading coefficient $-1$ or $1$, if $z_{d,t}(p)$ can written as a convex combination of achievable polynomials of degree five, then $$z_{d,t}(p)= \beta z_{-1,t}(p)+ (d+\beta) z_{1,t}(p),$$ where $z_{-1,t}(p)$ and $z_{1,t}(p)$ are convex combinations of achievable polynomials of degree five with leading coefficient $-1$ and $1$ respectively and $0\leq \beta\leq 1$. Thus, it suffices to determine the values of $t$ for which $z_{-1,t}(p)$ is achievable through convex combinations and the values of $t$ for which $z_{1,t}(p)$ is achievable through convex combinations.
  
\underline{Claim:} Let $\alpha(t)= \frac{-1 + 5 t - 4 t^2 + t^3}{5 t(t-1)}$. If the function $z_{-1,t}(p)$ is achievable through convex combinations then $0\leq \alpha(t)\leq 1$, meaning $0.26 \lesssim t \lesssim 0.74$.

\smallskip

Notice that achievable polynomials of degree five with leading coefficient $-1$ have coefficient $a_3\geq -9$ (see Table \ref{the table}). It follows that $$\frac{-2 + t + t^2 + 2 t^3}{t(1 - t)}\geq -9 \quad \text{ and} \quad
2( -1 + 5 t - 4 t^2 + t^3) \geq 0,$$ so $\alpha(t)\geq 0$. Next, note that the coefficient $a_4$ of $z_{-1,t}(p)$  must be non-negative (see Table \ref{the table}). It follows that $$ \frac{1 - t^2 - t^3}{t(1 - t)} \geq 0 \quad \text{and} \quad -1+5t-4t^2+t^3\leq 5t-5t^2,$$ so $\alpha(t)\leq 1$.

\underline{Claim:} Let $\gamma(t)=1-2t-t^2+t^3$ and $\beta(t)=1-3t^2+t^3$. If the function $z_{1,t}(p)$ is achievable through convex combinations, then $\gamma(t)\leq0$ and $\beta(t)\geq 0$, meaning $.445 \lesssim t \lesssim .653$ 

Assume $z_{-1,t}(p)$ is achievable through convex combinations. Notice that for degree five achievable polynomials with $a_1=0$ and $a_5=1$, $-4 \leq a_4 \leq -2$. It follows that 
$$-4 \leq \frac{ 1 - 4 t  +  t^2 +  t^3}{(1 - t) t}\leq -2,$$ so $\gamma(t)\leq 0$ and $\beta(t) \geq 0$. 
 
 \smallskip
 
Now consider $t\lesssim 0.26$ or $ t \gtrsim 0.74$. By the above claims, $z_{-1,t}(p)$ and $z_{1,t}(p)$ are not achievable through convex combinations. It follows that $z_{d,t}(p)$ is not achievable through convex combinations, 
meaning no construction on trees with five leaves that converges quadratically to a $t$-threshold function for $t\lesssim 0.26$ or $ t \gtrsim 0.74$.\end{proof}

%By allowing the building block trees to have five leafs makes it possible to construct iterative trees that exhibit quadratic convergence  for a wider range of thresholds.\\

Using a similar technique as in the proof above, it is possible to show that the analogous constructions on six and seven leaves yield iterative constructions that converge quadratically to  threshold functions for thresholds in the ranges $0.15 \lesssim t \lesssim 0.85$ and $0.11 \lesssim t \lesssim 0.89$ respectively.  However, it is not possible to generalize such a construction beyond this point. Instead, we observe the emergence of a staircase functions, which will be discussed in Example \ref{soft threshold}.

\subsection{ Quadratic convergence for arbitrary thresholds.} 
In this section we show that as $t$ approaches 0 or 1, increasingly large building blocks trees are needed to construct an iterative tree that converges quadratically to a $t$- threshold function. Further, we give a construction that exhibits quadratic convergence for arbitrary thresholds near $0$ and $1$. We begin by proving Theorem \ref{thm:degreenew}, which can be restated as follows: Let $f$ be an achievable polynomial with fixed points $0$, $t$, and $1$ that exhibits quadratic convergence to a $t$-threshold function. Then, $f$ has degree at least $\frac{1}{\sqrt{2s}}$ where $s= \min\{t, 1-t\}$.

\begin{proof} [of Theorem \ref{thm:degreenew}.]  Let $f$ be an achievable polynomial with fixed points $0$, $t$, and $1$ that exhibits quadratic convergence. Then for $\ve$ sufficiently small, $f(\ve)= O(\ve^2)$, which implies $a_1=0$. For $x < \frac{1}{2d}$, we have 
\begin{align*}
f(x)&= a_2 x^2 +a_3x^2 + \dots + a_d x^d
\leq d^2x^2+d^3x^3 +\dots d^d x^d
< d^2x^2 \left( \frac{1}{1-dx}\right)
< 2d^2x^2. \end{align*}
Since $t$ is a fixed point of $f$, $f(t)=t$. Thus, $t< 2d^2t^2$. It follows that $d> \frac{1}{\sqrt{2t}}$. By Lemma \ref{complement}, if there exists an achievable polynomial with fixed point $t$, then there also exists a complementary achievable polynomial with fixed point $1-t$. Thus, $d> \frac{1}{\sqrt{2(1-t)}}$. \end{proof}

We now prove that a nearly matching iterative construction exists. 
%In the last section, we gave constructions by averaging a tree of the form $A_k$ with a tree of the form $B_k$. Since this method does not generalize as described in Remark \ref{weird}, we present a new construction. 
To achieve quadratic convergence to thresholds near 0 or 1, we average trees of the form $A_k$ and $A_{k+1}$ or $B_k$ and $B_{k+1}$ respectively. \\

\begin{proof} [of Theorem \ref{thm:k leaves}.]  By Corollary \ref{cor:complementary}, it suffices to prove the theorem for $1-\phi \leq  t< 1$. The complement of a construction that achieves quadratic convergence to a $t$-threshold function yields quadratic convergence for to a $(1-t)$-threshold function. By Lemma \ref{k,k+1}, there exists $k$ and $\alpha$ such that $f= \alpha f_{B_k} + (1-\alpha) f_{B_{k+1}}$has fixed point $t$. Moreover, $\frac{t-f(p)}{t-p} \geq  \left(1+ \frac{p(1-p)}{t}\right)$. 

We apply Lemma \ref{conditions} to prove that $f$ converges to a $t$-threshold function. Let $p$ be the probability an input item is on. First suppose that $p\leq t-\frac{1}{n}$. We show linear divergence away from $t$. For any constant $0<u<t$, and $u\leq p\leq t-\frac{1}{n}$ by Lemma \ref{k,k+1} we have \begin{align*} t-f(p)\geq (t-p)  \left(1+ \frac{p(1-p)}{t}\right) 
\geq (t-p) \left(1+ \frac{u(1-t)}{t}\right). \end{align*}
Thus, $c_1=1+ \frac{u(1-t)}{t}$ is a valid choice for $c_1$ in Lemma \ref{conditions}.

Next, we claim that $u=1-\frac{1}{k-1}$ is a valid starting point for quadratic convergence towards 0.  We write $f(p)=p^2(\alpha d_k(p)+ (1-\alpha) d_{k+1}(p))$ where $d_k(p)= 2p^{k-2}-p^{2k-2}$. Let $d(p)=\alpha d_k(p)+ (1-\alpha) d_{k+1}(p)$. Note that $d(p)$ is increasing on the interval $(0, u)$ since each $d_k$ increases on this interval. For $p< u$, $$\frac{2k-4}{2k-2}=u>u^k>p^k.$$ It follows that 
$d_k'(p)=p^{k-3}((2k-4)-(2k-2)p^k)>0$. Thus, $d_k$ is increasing on the interval $(0,u)$. Thus, $c_3= d(u)$ is a valid choice for $c_3$ in Lemma \ref{conditions}.

It remains to show that for $p\geq t+\frac{1}{n}$ we observe linear divergence from $t$ then quadratic convergence to 1.
We show linear divergence away from $t$. For any constant $t<v<1$, and $t+\frac{1}{n} \leq p\leq 1$ by Lemma \ref{k,k+1} we have \begin{align*} f(p)-t\geq (p-t)  \left(1+ \frac{p(1-p)}{t}\right) 
\geq (p-t) \left(1+ \frac{t(1-v)}{t}\right). \end{align*}
Thus, $c_2=1+ \frac{t(1-v)}{t}$ is a valid choice for $c_2$ in Lemma \ref{conditions}.   

We claim that $v> 1-\frac{1}{8(k+1)^2}$ is a valid starting point for quadratic convergence to $1$. By Corollary \ref{cor:complementary}, $f_{A_k}(1-p)=1-f_{B_k}(p)$. It follows
$$1-f(p)= \alpha- \alpha f_{B_k}(p) + (1-\alpha) -(1-\alpha) f_{B_{k+1}}(p)=  \alpha f_{A_k}(1-p) + (1-\alpha) f_{A_{k+1}}(1-p).$$
Recall from the proof of Theorem \ref{thm:degreenew},  $f(x)< 2dx^2$ where $d$ is the degree of $x$. Therefore, 
$$f_{A_{k}}(1-p)< 8k^2(1-p)^2<8(k+1)^2(1-p)^2 \text{ and } f_{A_{k+1}}(1-p)<8(k+1)^2(1-p)^2.$$ Since $(1-v)8(k+1)^2<1$, $c_4=8(k+1)^2$ is a valid choice for $c_4$ in Lemma \ref{conditions}. \end{proof}

\section{Convergence of iterative constructions to staircase functions}\label{steps}

In this section, we explore the possible functions that can be achieved by sampling from the high-level items of iterative constructions with no restrictions on the size of building block trees. 
\begin{defn} The function $f$ is a staircase function if $f(x)=p_i$ on the interval $(a_i, a_{i+1})$  where $a_0=0<a_1<a_2<\dots <a_k< a_{k+1}=1$, $0=p_0<p_1< p_2< \dots< p_{k-1}<p_k=1$. \end{defn}
We will show in Theorem \ref{thm:staircase} that any staircase function in which each step intersects the line $y=x$, or more precisely $a_i\leq p_i\leq a_{i+1}$ for all $0 \leq i \leq k$, can be approximated by a high-level of an iterative tree. 

This result may seem to suggest that the high-level items of these constructions are pseudorandom.  However, this is not the case. The output of a high-level item is highly influenced by which of tree it is the root of. The distribution of items firing at a given high-level behaves according to the staircase function, rather than each item individually behaving according to the staircase function. 

We begin by giving a couple of examples of staircase functions arising from probability distributions of small trees. The first example is a one step staircase. 

\begin{example} Consider the iterative construction that selects a tree that computes $A \wedge B \wedge C$ with probability $\alpha$ and a tree that computes $A \vee B \vee C$ with probability $1-\alpha$. Let $f(p)= \alpha (1-(1-p)^3) + (1-\alpha) p^3$ be the corresponding polynomial. For $\alpha \in (1/3, 2/3)$, $f$ converges to an $(3\alpha-1)$ one step staircase function, i.e. takes value $3\alpha -1$ on $(0,1)$. For $\alpha=1/2$ and inputs in which the fraction of inputs firing is not 0 or 1, approximately half the high-level items will fire. 
 Figure \ref{onestep} illustrates $f$ for $\alpha=1/2$. \label{one step}  \end{example}

 \begin{figure}[h]
 \centering
\includegraphics[width=.2\textwidth]{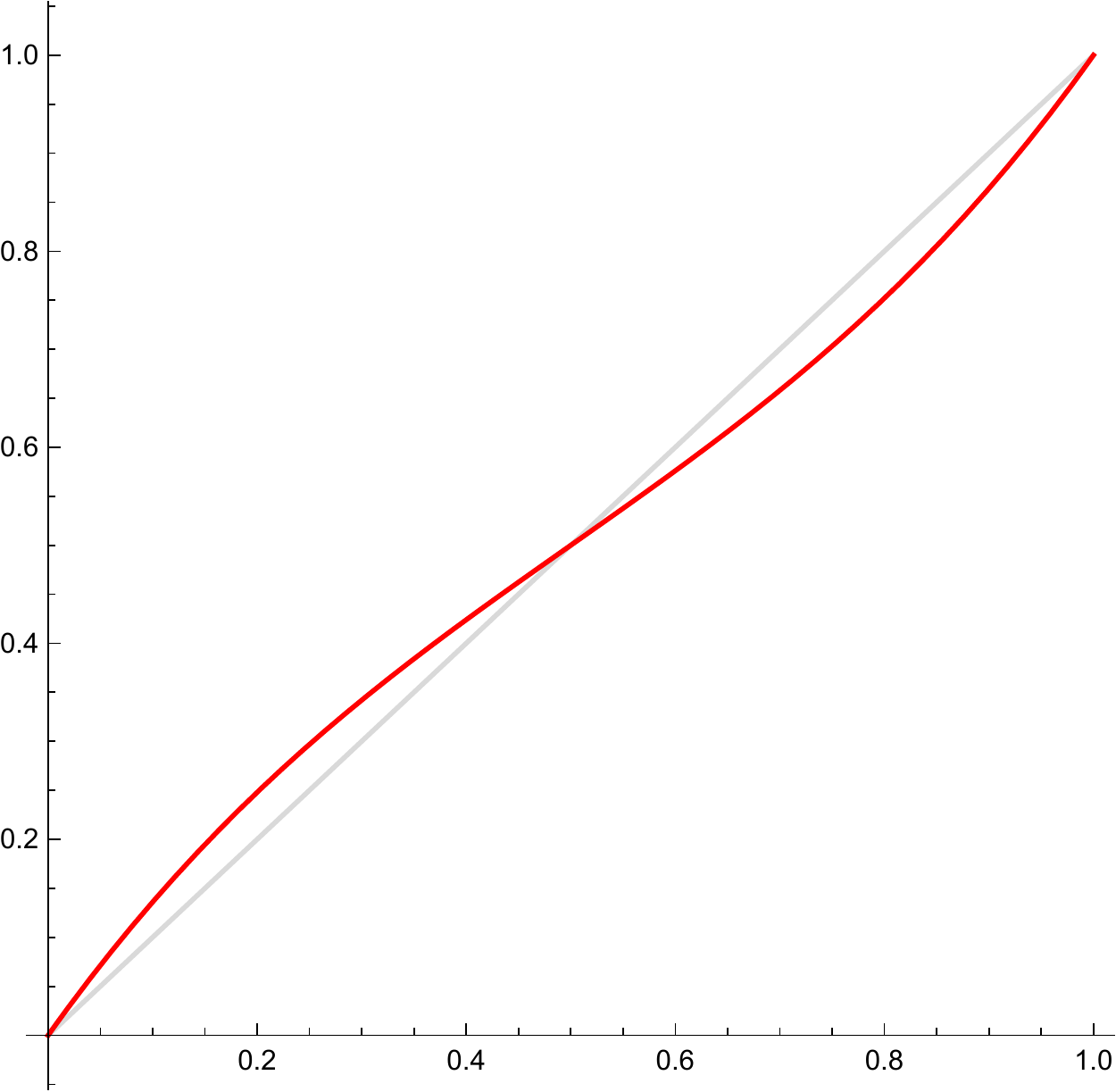}
\quad
\includegraphics[width=.2\textwidth]{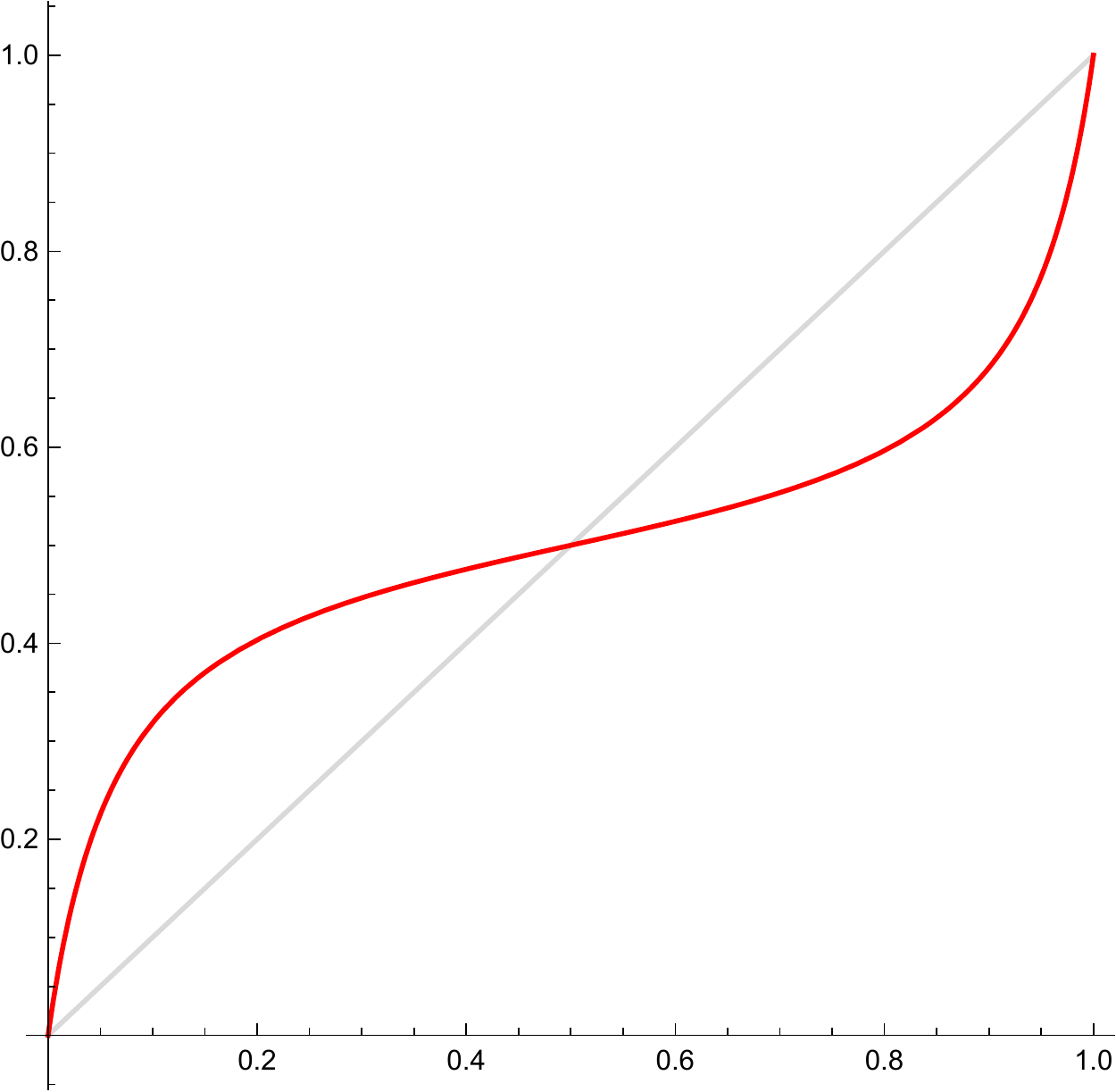}
\quad
\includegraphics[width=.2\textwidth]{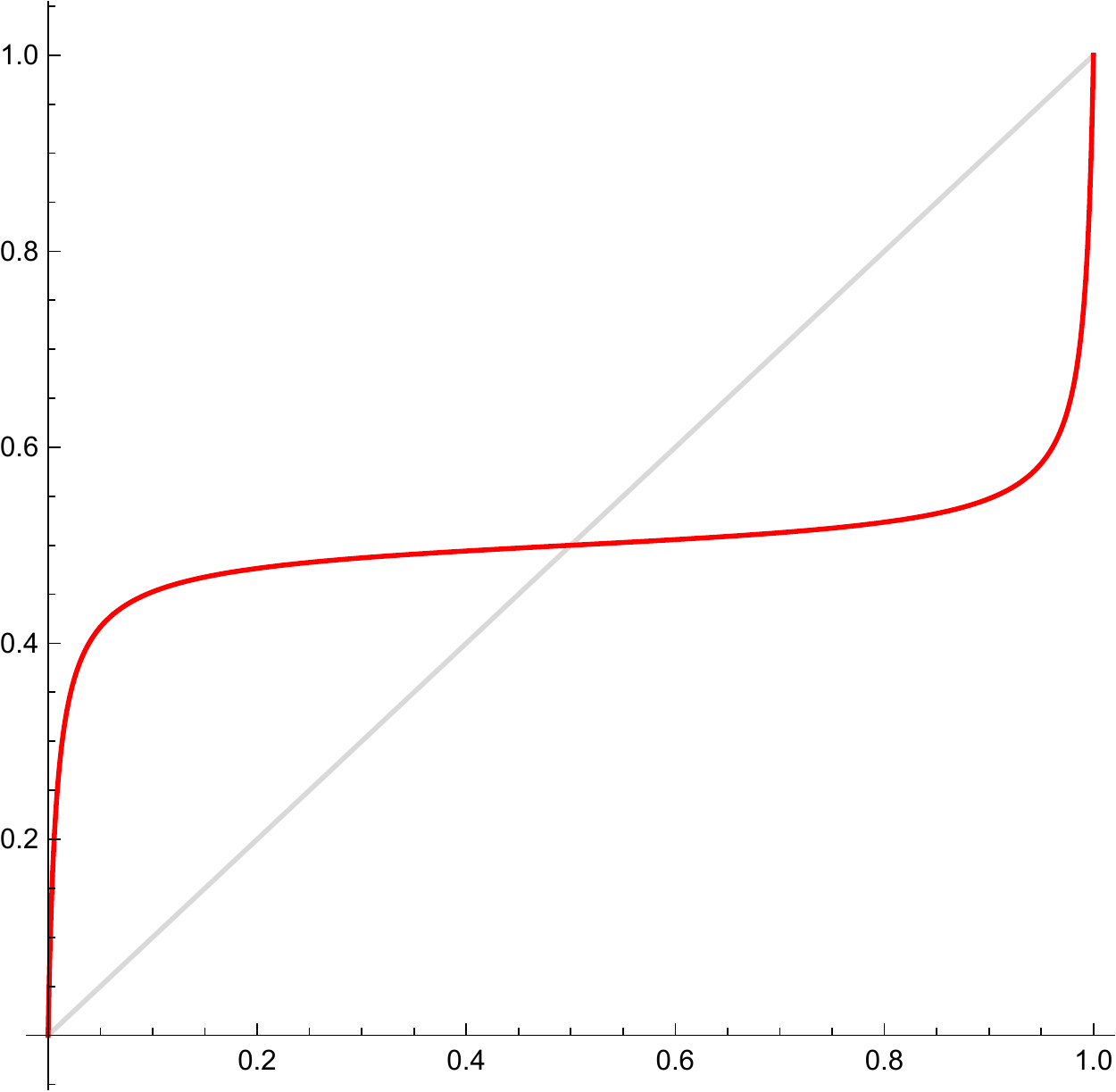}
\quad
\includegraphics[width=.2\textwidth]{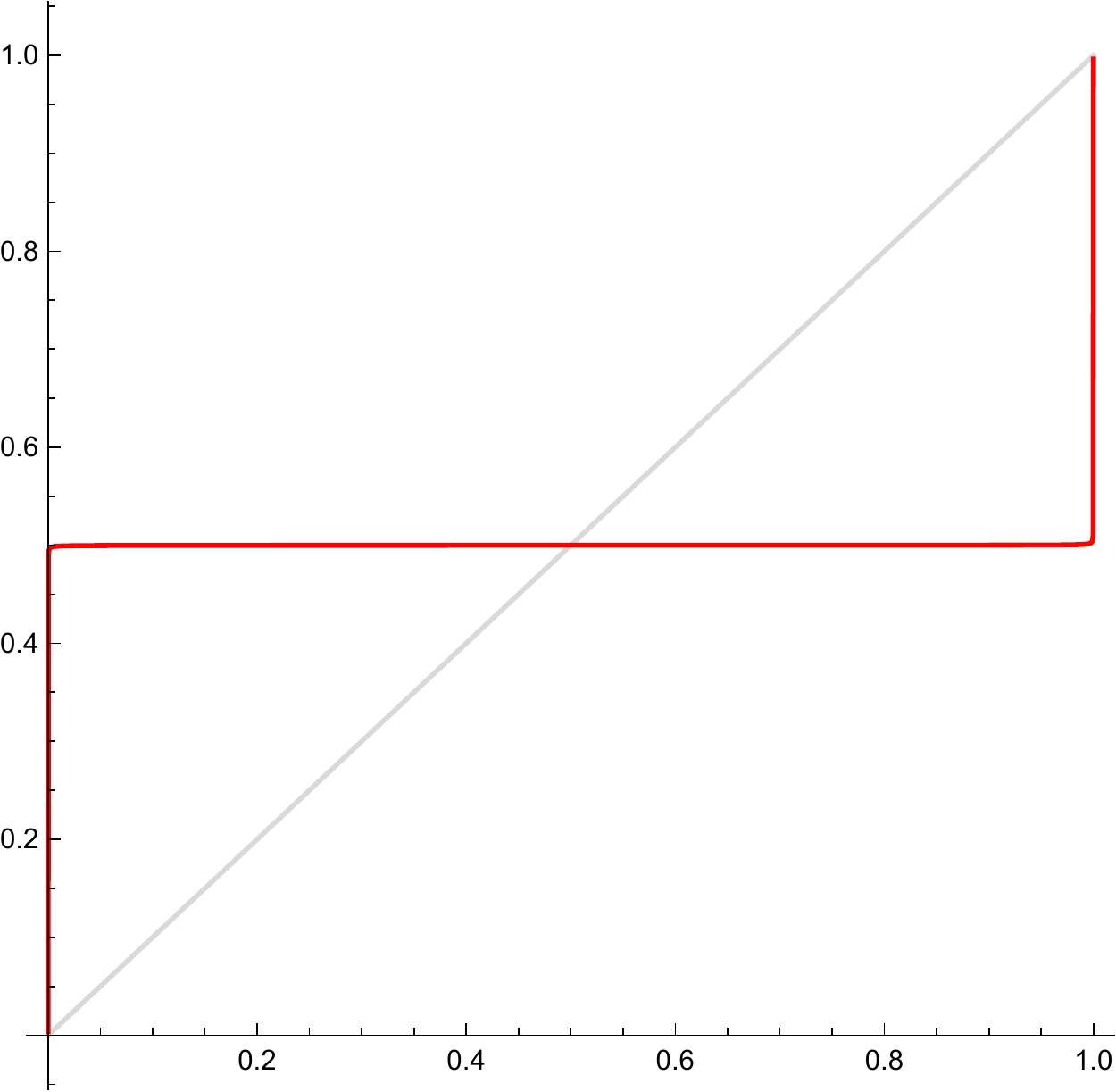}
  \caption{An iterative construction for the above function converges a 1/2 one step staircase function. Left to right:  $f(p), f^{(5)}(p),f^{(10)}(p),f^{(30)}(p)$.}
  \label{onestep}
  \end{figure}

\begin{lemma}  Consider the iterative construction that selects a tree that computes $A \wedge B \wedge C$ with probability $\alpha$ and a tree that computes $A \vee B \vee C$ with probability $1-\alpha$ where $\alpha \in (1/3, 2/3)$. Let ${f(p)= \alpha (1-(1-p)^3) + (1-\alpha) p^3}$ be the corresponding polynomial. Then $3\alpha -1$ is the only fixed point of $f$ on $(0,1)$ and it is an attractive fixed point. \end{lemma}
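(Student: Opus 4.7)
The plan is to dispatch both claims by a short direct calculation, exploiting the fact that we already know three candidate fixed points ($0$, $1$, and the target $3\alpha-1$), so we can read them off of a clean factorization. First I would expand
\[
f(p) = \alpha\bigl(1-(1-p)^3\bigr) + (1-\alpha)p^3 = 3\alpha p - 3\alpha p^2 + p^3,
\]
and then write
\[
f(p) - p \;=\; p^3 - 3\alpha p^2 + (3\alpha - 1)p \;=\; p(p-1)\bigl(p - (3\alpha-1)\bigr),
\]
which I would verify by checking that $p=1$ is a root of the quadratic factor $p^2 - 3\alpha p + (3\alpha-1)$. This factorization immediately identifies all real fixed points of $f$ on $[0,1]$; since $\alpha \in (1/3, 2/3)$ puts $3\alpha-1$ strictly inside $(0,1)$, it is the unique fixed point of $f$ in the open interval.

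For attractivity I would use the derivative criterion from the definition of an attractive fixed point: it suffices to show $f'(3\alpha-1) < 1$. Differentiating gives $f'(p) = 3\alpha - 6\alpha p + 3p^2$, and substituting $p = 3\alpha - 1$ and simplifying yields
\[
f'(3\alpha-1) \;=\; 9\alpha^2 - 9\alpha + 3.
\]
The inequality $9\alpha^2 - 9\alpha + 3 < 1$ rearranges to $9\alpha^2 - 9\alpha + 2 < 0$, equivalently $(3\alpha - 1)(3\alpha - 2) < 0$, which holds exactly on the open interval $\alpha \in (1/3, 2/3)$ assumed by the lemma.

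As a sanity check, the same derivative computation also confirms that the two boundary fixed points are non-attractive in this regime, since $f'(0) = 3\alpha > 1$ when $\alpha > 1/3$ and $f'(1) = 3(1-\alpha) > 1$ when $\alpha < 2/3$; this is consistent with the picture in Figure \ref{onestep} of trajectories escaping from $\{0,1\}$ and converging to the interior fixed point. There is no real obstacle here — the entire argument is a one-line factorization together with a quadratic inequality — so I expect the write-up to be essentially the calculation above, with the interval $(1/3, 2/3)$ emerging from each of the two factorizations for exactly the same algebraic reason.
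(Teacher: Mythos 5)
Your proposal is correct and follows essentially the same route as the paper: factor $f(p)-p = p(p-1)(p-(3\alpha-1))$ to identify the fixed points, then verify attractivity via the derivative criterion by computing $f'(3\alpha-1) = 9\alpha^2 - 9\alpha + 3 < 1$ for $\alpha \in (1/3,2/3)$. The only difference is that you spell out the quadratic inequality $(3\alpha-1)(3\alpha-2)<0$ explicitly, which the paper leaves to the reader.
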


\begin{proof} The only fixed points of $f(p)$ are the roots of $f(p)-p$, which are $0,1, 3 \alpha -1$. To prove that $3\alpha-1$ is an attractive fixed point, we show that $f'(3\alpha-1)< 1$. We compute $f'(p)= 3 \alpha (1-p)^2 +3(1-\alpha)p^2$. For $\alpha \in (1/3,2/3)$, we have $$f'(3 \alpha-1) =3-9\alpha +9 \alpha^2<1.$$ \end{proof} 

We remark that roughly $\log(1/\delta) + \log(1/\ve)$ levels are needed (and suffice) to get within $\delta$ of $\alpha=1/2$ where $\ve$ is the distance of the input from $1/2$. We omit the proof, which is similar to other proofs of finite bounds in this paper.

%\sam{ I can also prove that we need $\Omega( \log(1/\delta) + \log(1/\ve))$ levels to be within $\delta$ of $\alpha=1/2$ where $\ve$ is the distance of the input from $1/2$. I haven't TeXed it since the proof is similar to all the other proofs of this variety in the paper and I am not sure what it would add. Thoughts?}

In the above construction, a randomly sampled high-level item fires with probability approximately half. However, whether a fixed high-level item fires is not pseudorandom.  Given that  half of the items at level $k-1$ fire, an item at level $k$ that is the root of an $A \wedge B \wedge C$ building block tree fires with probability $1/8$. An item at level $k$ that is the root of a $A \vee B \vee C$ building block tree fires with probability $7/8$. 

Another simple example of a staircase function is the generalization of the construction given in Theorem \ref{thm:quadratic four}. 

\begin{example} For $k\geq 4$, the function  $h=(f_{A_k}+f_{B_k})/2$ converges to a three step staircase function. The function $h$ has a non-attractive fixed point $s \in (0,1/2)$, a non-attractive fixed point $t \in (1/2,1)$, and an attractive fixed point at $1/2$. Figure \ref{surprise} illustrates $h_6(p)$. Therefore, for inputs in the interval $(s,t)$, approximately half the high-level items return $0$ and half return $1$. For inputs in the intervals $[0,s]$ and $[t,1]$, high-level items return $0$ and $1$ respectively with high probability. \label{soft threshold} \end{example}

 \begin{figure}[h]
 \centering
\includegraphics[width=.2\textwidth]{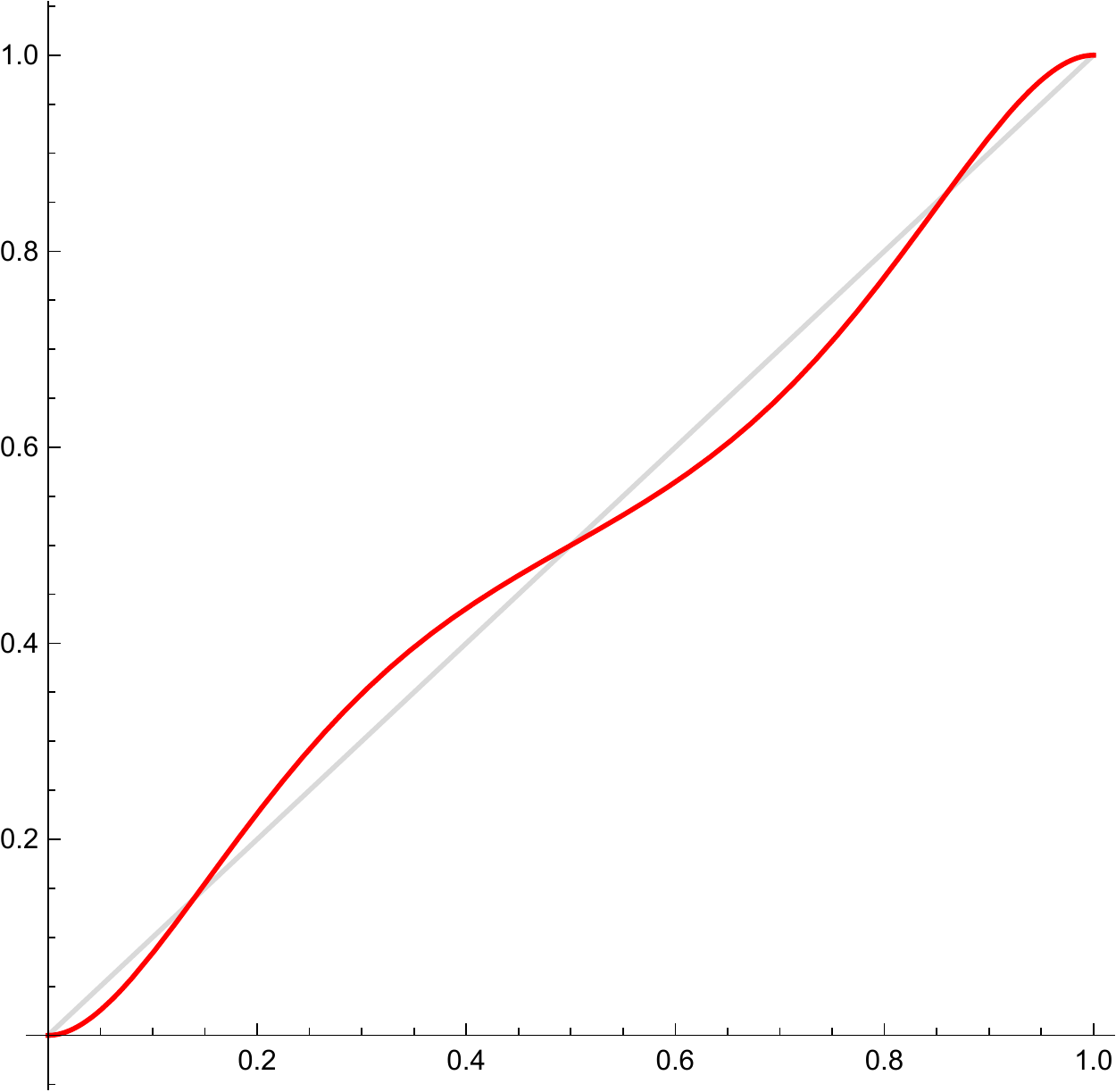}
\quad
\includegraphics[width=.2\textwidth]{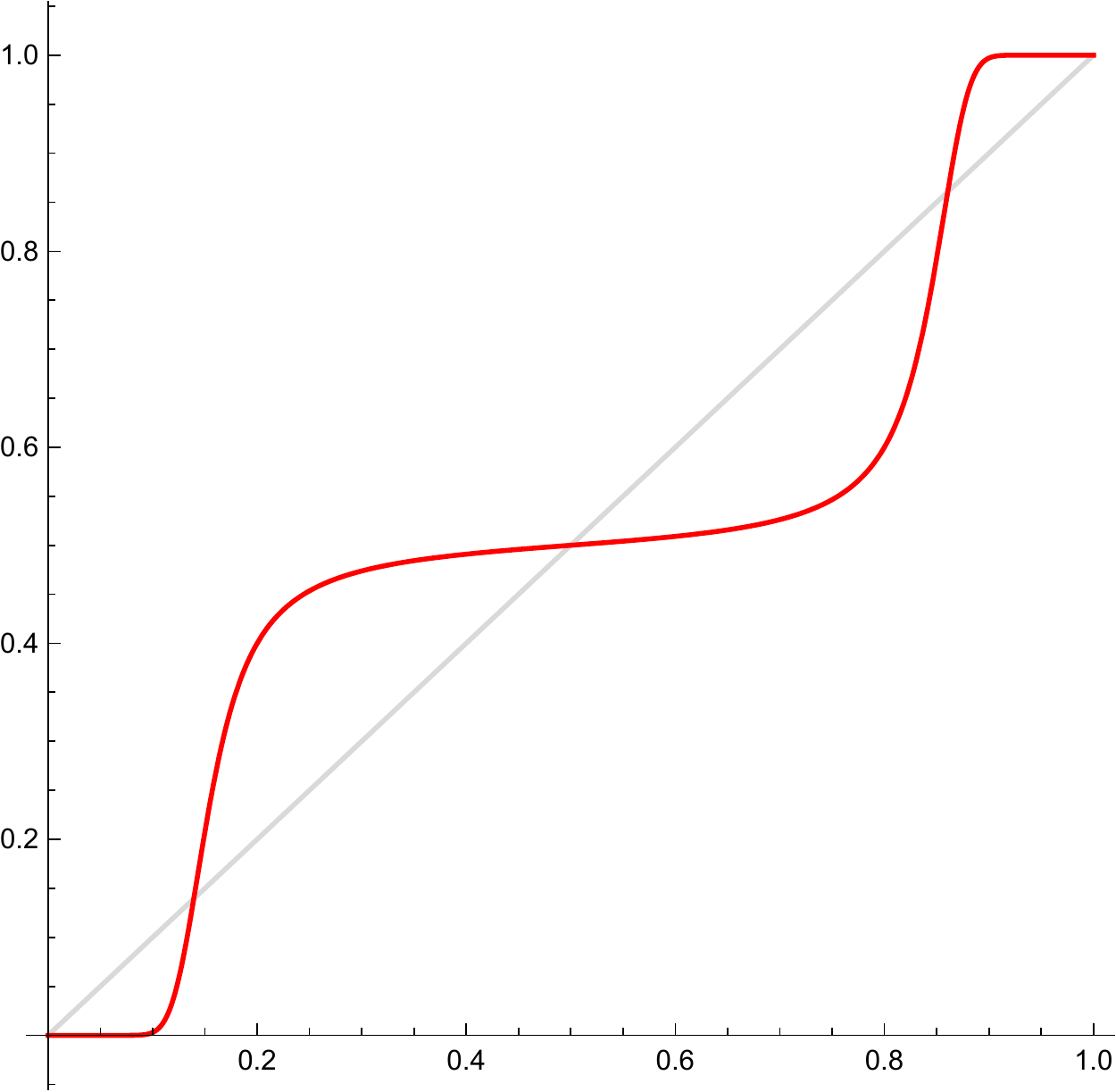}
\quad
\includegraphics[width=.2\textwidth]{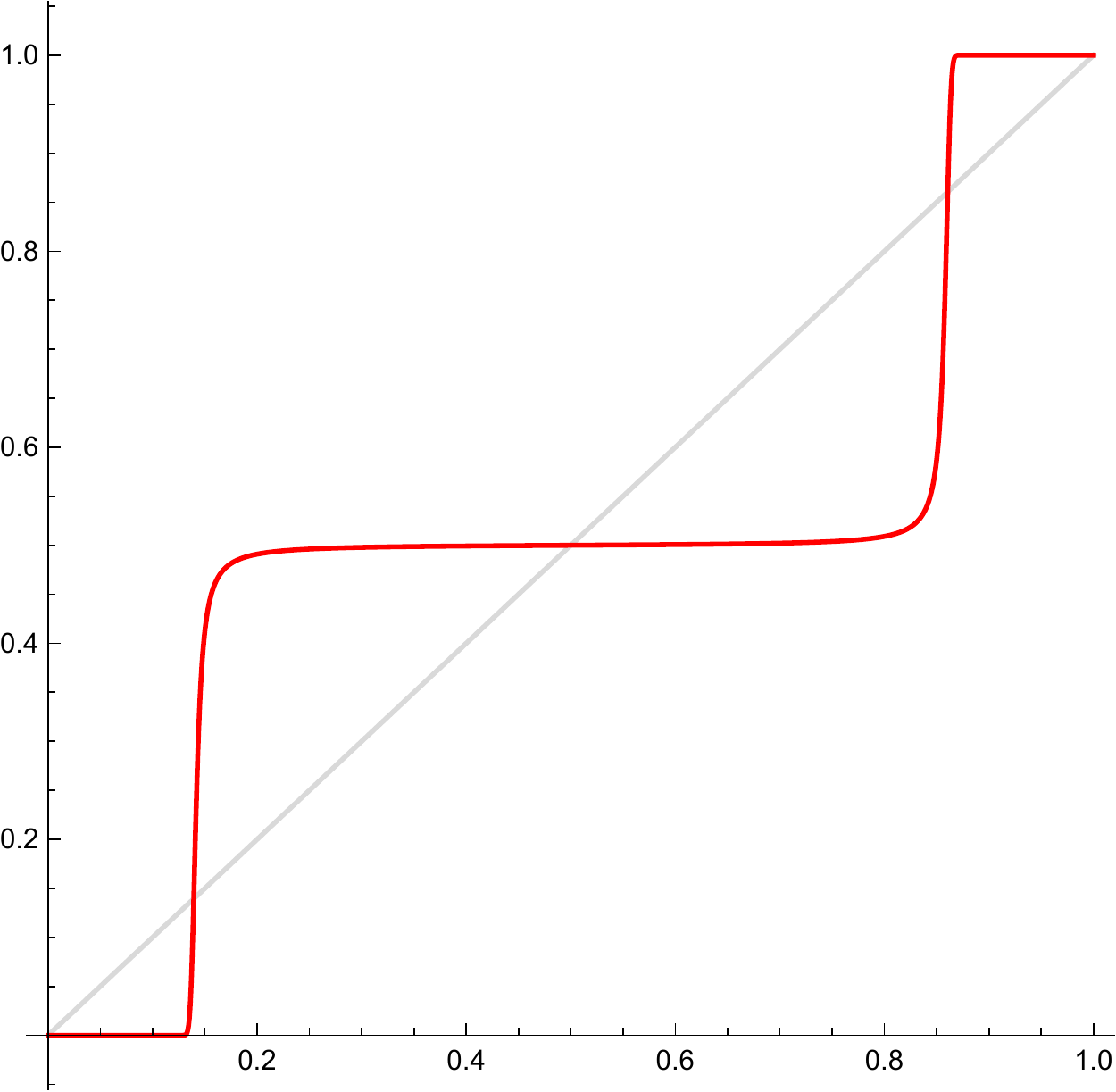}
\quad
\includegraphics[width=.2\textwidth]{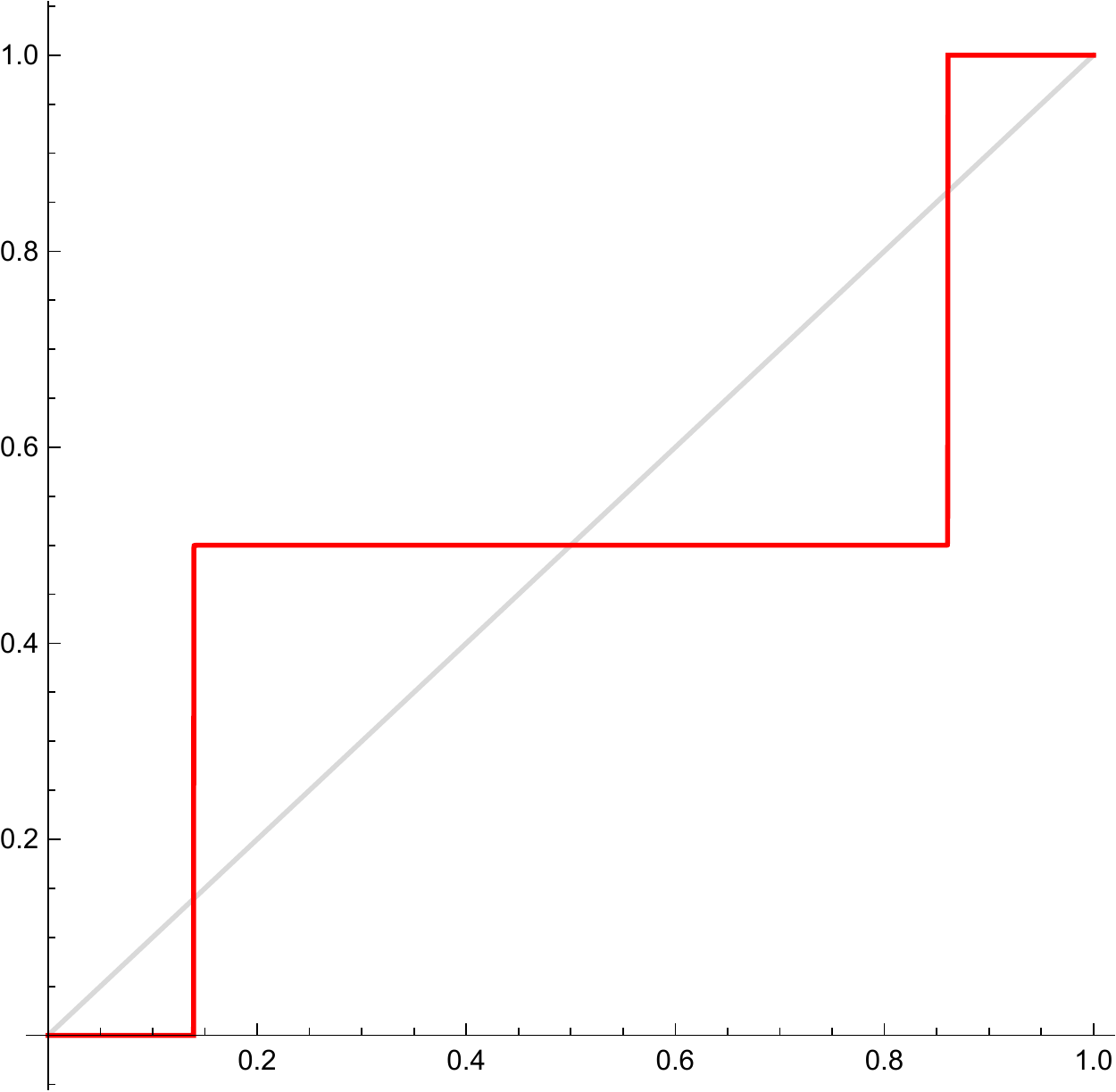}
  \caption{For $k \geq 4$, the function $(f_{A_k}+f_{B_k})/2$ converges to a three step staircase function. Left to right:  $f(p), f^{(5)}(p),f^{(10)}(p),f^{(30)}(p)$.}
  \label{surprise}
  \end{figure}

\begin{lemma} Let $k\geq 4$. Consider an iterative construction in which $A_k$ and $B_k$ are each selected with probability $1/2$. Let $h=(f_{A_k}+f_{B_k})/2$ be the corresponding polynomial. Then $1/2$ is an attractive fixed point of $h$.  \label{lem:soft threshold}
 \end{lemma}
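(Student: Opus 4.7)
The plan is to use the characterization from the definition that $t$ is attractive for $h$ precisely when $|h'(t)|<1$, so the whole lemma reduces to showing $1/2$ is a fixed point and $0 \le h'(1/2) < 1$ for $k \ge 4$.

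First I would verify that $1/2$ is a fixed point of $h$ using the complementary relation from Corollary \ref{cor:complementary}. Since $A_k$ and $B_k$ are complementary trees, $f_{B_k}(1-p) = 1 - f_{A_k}(p)$; setting $p = 1/2$ gives $f_{B_k}(1/2) = 1 - f_{A_k}(1/2)$, so
\[
h(1/2) = \tfrac{1}{2}\bigl(f_{A_k}(1/2) + f_{B_k}(1/2)\bigr) = \tfrac{1}{2}.
\]
Differentiating the same identity yields $f_{B_k}'(1-p) = f_{A_k}'(p)$, and evaluating at $p=1/2$ gives $f_{B_k}'(1/2) = f_{A_k}'(1/2)$, so $h'(1/2) = f_{A_k}'(1/2)$. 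This reduction avoids having to expand $f_{B_k}$ at all.

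Next I would compute $f_{A_k}'(1/2)$ directly from $f_{A_k}(p) = (1-(1-p)^k)^2$. The chain rule gives $f_{A_k}'(p) = 2k(1-p)^{k-1}\bigl(1-(1-p)^k\bigr)$, so
\[
h'(1/2) = \frac{k}{2^{k-2}}\left(1 - \frac{1}{2^k}\right).
\]
Since this is manifestly positive, it only remains to check $h'(1/2) < 1$ for $k \ge 4$. For $k=4$ the value is $1 \cdot (15/16) = 15/16 < 1$, and for $k \ge 5$ the factor $k/2^{k-2}$ itself is at most $5/8$, so $h'(1/2) < 1$ throughout. Applying the derivative characterization of attractive fixed points completes the proof.

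The argument is essentially a one-line computation once one exploits the complementary identity; no step is a serious obstacle. The only place where one must be careful is the $k=4$ boundary case, since $k/2^{k-2}$ equals $1$ exactly there and the strict inequality is saved only by the $(1 - 2^{-k})$ factor. This also cleanly explains why the hypothesis $k \ge 4$ cannot be relaxed: for $k=3$ the same formula yields $h'(1/2) = (3/2)(7/8) > 1$, making $1/2$ non-attractive.
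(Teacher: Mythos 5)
Your proof is correct and takes essentially the same approach as the paper: both arguments reduce the lemma to computing $h'(1/2)$ and showing it is less than $1$, arriving at the identical formula $h'(1/2) = \frac{k}{2^{k-2}}\left(1-2^{-k}\right)$. The only difference is stylistic — you use the complementary identity $f_{B_k}'(1-p) = f_{A_k}'(p)$ to reduce to differentiating $f_{A_k}$ alone, whereas the paper differentiates $h$ directly; you also take the extra (harmless) steps of verifying that $1/2$ is a fixed point and noting the $k=3$ failure, neither of which the paper spells out.
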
 

\begin{proof} Let $h=(f_{A_k}+f_{B_k})/2$. It suffices to show  $h'(1/2)<1$ for $k \geq 4$. We compute $h'(p)=k\left(p^{k-1}+(1-p)^{k-1}\right) -k\left(p^{2k-1}+(1-p)^{2k-1}\right)$. It follows  for $k \geq 4$ that $$ h'(1/2)=  k\left(2^{-(k-2)} -k2^{-(2k-2)}\right)=\frac{k}{2^{k-2}}\left(1-2^{-k}\right) <1.$$ \end{proof}

Finally, we show that it is possible to approximate any staircase function.  The proof will rely on Lemma \ref{straight} and Lemma \ref{dense}, which describe the scope of achievable polynomials for a single  tree. 

\begin{lemma} Let $t$ be a fixed point of some achievable polynomial $f$ with attractive fixed points $0$ and $1$, $0< t<1$. Then, for any $\delta,\ve>0$ there exists an achievable polynomial $g$ such that $f(x)< \delta$ for $x \in [0, t- \ve]$ and $f(x)>1- \delta$ for $x \in [t+ \ve,1].$ \label{straight}
\end{lemma}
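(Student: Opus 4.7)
The plan is to take $g = f^{(k)}$, the $k$-fold self-composition of $f$, for $k$ large enough. First I would note this is achievable: if $T$ is the AND/OR tree realizing $f$, then the tree $T^{(k)}$ obtained by repeatedly substituting a copy of $T$ at every leaf is still an AND/OR tree, and its polynomial is exactly $f^{(k)}$. (This is just the deterministic iterative construction with $C=\{T\}$.) So the problem reduces to proving uniform convergence of the iterates to the indicator of $(t,1]$ on $[0,t-\ve]\cup[t+\ve,1]$.

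Next I would invoke the structural picture from Lemma \ref{lemma paths}. Since $0$ and $1$ are both attractive fixed points and $t\in(0,1)$ is an additional fixed point, part (3) of that lemma applies: $t$ is the unique non-attractive fixed point in the open interval, and there are no others. Because $p-f(p)$ is continuous, vanishes exactly at $0,t,1$, and is positive near $0$ (attractiveness of $0$) and negative near $1$ (attractiveness of $1$), we get $f(p)<p$ on $(0,t)$ and $f(p)>p$ on $(t,1)$. Since $f$ is a monotone Boolean function's expectation polynomial, $f$ is nondecreasing on $[0,1]$, and hence so is every iterate $f^{(k)}$.

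Now I would deduce uniform convergence on the two intervals. By monotonicity of $f^{(k)}$, $\sup_{p\in[0,t-\ve]} f^{(k)}(p)=f^{(k)}(t-\ve)$ and $\inf_{p\in[t+\ve,1]} f^{(k)}(p)=f^{(k)}(t+\ve)$, so it suffices to iterate the two boundary points. The sequence $a_k=f^{(k)}(t-\ve)$ lies in $[0,t)$, is strictly decreasing (since $f(p)<p$ there), and so converges to a limit $L$; by continuity $L$ is a fixed point in $[0,t-\ve]$, forcing $L=0$. Symmetrically $f^{(k)}(t+\ve)\uparrow 1$. Choose $k$ so both iterates fall within $\delta$ of their limits and set $g=f^{(k)}$.

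The only delicate step is the sign determination $f(p)<p$ on $(0,t)$ and $f(p)>p$ on $(t,1)$: one has to rule out the possibility that $f$ crosses the diagonal at $t$ from the other side, which is where the attractiveness of $0$ and $1$ (combined with continuity and the absence of other fixed points) is used. Everything else (monotonicity of $f$, closure of achievable polynomials under composition, monotone convergence to a fixed point) is routine, so I expect no further obstacle.
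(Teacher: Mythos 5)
Your proposal is correct and follows essentially the same route as the paper: take $g=f^{(k)}$, realized by substituting copies of the tree for $f$ at its leaves, and use attractiveness of $0$ and $1$ together with uniqueness of the interior fixed point to get $f(p)<p$ on $(0,t)$ and $f(p)>p$ on $(t,1)$, so the iterates at the endpoints $t-\ve$ and $t+\ve$ converge to $0$ and $1$ respectively. The only (inessential) difference is that the paper makes the convergence quantitative via the minimum decrement $m=\min_{p\in[\delta,t-\ve]}(p-f(p))$, giving an explicit $k$, whereas you argue by monotone convergence to a fixed point.
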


\begin{proof} First we claim that if $f$ is achievable, then  $f^{(k)}$ is also achievable. Let $f$ be the polynomial corresponding to a tree $T$. Define $T^1$ to be the tree $T$ modified so that each of the leaves is replaced with a copy of $T$. Let $T^k$ be the tree $T^{k-1}$ modified so that each of the leaves is replaced with a copy of $T$. Note that $f^{(k)}$ is the polynomial for $T^k$. 

Let $f$ be the polynomial for which $t$ is a fixed point. It suffices to show that there exists some $k$ such that $f^{(k)}(x)< \delta$ for $x \in [0, t- \ve]$ and $f^{(k)}(x)>1- \delta$ for $x \in [t+ \ve,1].$ Since $t$ is the only fixed point and 0 is an attractive fixed point, for $p \in [\delta, t-\ve]$, $f(p)<p$. Let $m= \min_{ p \in [\delta, t-\ve]} p-f(p)$. Then for all $\ell$, $f^{(\ell)}(p) \leq t-\ve -\ell m$ or $f^{(\ell)}(p) < \delta$. Let $x \in [0, t-\ve]$. Then for $k = \lceil \frac{t-\ve}{m} \rceil$, $f^{(k)}(x)\leq f^{(k)}(t-\ve) \leq \delta$. A similar argument proves the case when $x \in [t+\ve, 1]$.  \end{proof}

The following lemma says that with a single tree, the set of achievable fixed points is dense in $(0,1)$. A weaker version of this lemma in which the building blocks of the constructions are arbitrary circuits rather than trees is given by \citep{Moore1956}. Our proof is inspired by their argument. 

\begin{lemma} The set of fixed points of achievable polynomials is dense in $[0,1]$.\label{dense} \end{lemma}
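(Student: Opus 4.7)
Plan: My plan is to exhibit an explicit two-parameter family of achievable polynomials whose fixed points are already dense in $(0,1)$, and to handle neighborhoods of the endpoints using Lemma \ref{root range}. For integers $m,n\ge 2$, let $H_{m,n}(x) = 1-(1-x^m)^n$, the polynomial of the binary AND/OR tree that ORs $n$ disjoint copies of the AND of $m$ leaves. Every root-to-leaf path consists of OR nodes followed by AND nodes, so the tree contains no pure-AND path and no pure-OR path, and by Lemma \ref{lemma paths}(3), $H_{m,n}$ has a unique non-attractive fixed point $x^*_{m,n}\in(0,1)$.

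First I would rewrite the fixed-point equation $(1-x^m)^n = 1-x$ as $n = \Phi_m(x^*_{m,n})$ for
\[
\Phi_m(x) \;:=\; \frac{\ln(1-x)}{\ln(1-x^m)}.
\]
An elementary calculation shows that $\Phi_m$ is continuous and strictly decreasing on $(0,1)$, with $\lim_{x\to 0^+}\Phi_m(x)=+\infty$ and $\lim_{x\to 1^-}\Phi_m(x)=1$ (the latter via L'Hopital, using the factorization $1-x^m=(1-x)(1+x+\cdots+x^{m-1})$). Hence $\Phi_m$ is a continuous bijection $(0,1)\to(1,\infty)$, and each integer $n\ge 2$ yields a unique $x^*_{m,n}=\Phi_m^{-1}(n)$.

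Next, I would show that on any compact subinterval $[\alpha,\beta]\subset(0,1)$, the set $\{x^*_{m,n}\}_{n\ge 2}$ becomes dense as $m\to\infty$. The key estimate is that $|\Phi_m'(x)|$ grows without bound in $m$, uniformly on $[\alpha,\beta]$. Indeed, for $x\in[\alpha,\beta]$ and $m$ large, $x^m$ is small and $\Phi_m(x)\approx -\ln(1-x)/x^m$; differentiating yields a dominant term of order $m\,\ln\frac{1}{1-x}\,x^{-m-1}$. By the mean value theorem the spacing $|x^*_{m,n+1}-x^*_{m,n}|\le 1/\min_{[\alpha,\beta]}|\Phi_m'|$ tends to $0$ uniformly on $[\alpha,\beta]$, and since $n$ ranges over all integers $\ge 2$, the resulting set of fixed points becomes dense in $[\alpha,\beta]$.

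Finally, Lemma \ref{root range} supplies achievable fixed points $a_k\in(1/k^2,1/(k(k-1)))$ from $f_{A_k}$ and $b_k\in(1-1/(k(k-1)),1-1/k^2)$ from $f_{B_k}$, giving sequences accumulating at $0$ and $1$; combined with the previous step, the fixed points of achievable polynomials are dense throughout $[0,1]$. The main technical hurdle is the uniform lower bound on $|\Phi_m'|$, which is elementary but requires some care in tracking the logarithmic derivatives; strict monotonicity of $\Phi_m$ is verified by computing $\Phi_m'$ directly and checking that, for $x$ bounded away from $1$, the contribution $m\,x^{m-1}\,(-\ln(1-x))/(1-x^m)$ dominates $-\ln(1-x^m)/(1-x)$, giving $\Phi_m'<0$.
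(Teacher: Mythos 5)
Your construction is correct in outline, and it takes a genuinely different route from the paper. The paper starts at the fixed point $2-\phi$ of $f_{B_2}$ and repeatedly composes: it forms new achievable polynomials $f_i^{(k)}(g(p))$ with $g(p)=p(1-(1-p)^j)$, using Claims \ref{claim one} and \ref{triple} to show each new fixed point moves up by less than $\ve$, which gives density in $(1/2,1)$ and then invokes Corollary \ref{cor:complementary} for $(0,1/2)$. You instead work with the explicit two-parameter family $H_{m,n}(x)=1-(1-x^m)^n$ (an OR of $n$ ANDs of $m$ leaves, so achievable; note $H_{m,2}=f_{B_m}$), rewrite the fixed-point equation as $n=\Phi_m(x)$, and get density on any compact subset of $(0,1)$ from the blow-up of $|\Phi_m'|$ as $m\to\infty$; this avoids iterated composition entirely, is more self-contained, and yields a quantitative spacing estimate (gaps of order $1/\min|\Phi_m'|$) that the paper's argument does not provide, at the cost of some calculus with logarithmic derivatives. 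Two technical points deserve care. First, your justification of strict monotonicity of $\Phi_m$ only covers $x$ bounded away from $1$; near $x=1$ the two competing terms agree to leading order (both behave like $-\ln(1-x)/(1-x)$) and the inequality holds only because of the lower-order $\ln(1+x+\cdots+x^{m-1})$ contribution, so the "dominance" argument as stated does not establish a global bijection. You can sidestep this entirely: Lemma \ref{lemma paths}(3) already gives you that $H_{m,n}$ has a unique fixed point in $(0,1)$, so uniqueness of the solution of $\Phi_m(x)=n$ is free, and combined with the intermediate value theorem and your derivative lower bound on a slightly enlarged compact interval, you get a fixed point within $1/\min|\Phi_m'|$ of any target in $[\alpha,\beta]$ without ever proving $\Phi_m$ is monotone on all of $(0,1)$. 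Second, the mean-value bound $|x^*_{m,n+1}-x^*_{m,n}|\le 1/\min_{[\alpha,\beta]}|\Phi_m'|$ is only valid when the whole interval between the two fixed points lies where the bound holds, so you should run the estimate on a compact interval slightly larger than $[\alpha,\beta]$ (or argue as above with IVT plus uniqueness). The endpoint discussion via Lemma \ref{root range} is fine, though $0$ and $1$ are themselves fixed points of achievable polynomials, so only $(0,1)$ really needs work.
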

The proof will rely on the following claims. 

\begin{claim} Let $f,g$ be achievable polynomials such that $g\leq p$ and $f$ has fixed point $r$, $0<r<1$. Let $x= g^{-1}(r)$. Then for all sufficiently small $\ve >0$, there exists $k$ such that the fixed point of $f^{(k)}(g(p))$ is in $(x, x+\ve)$. \label{claim one} \end{claim}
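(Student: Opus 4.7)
The plan is to set $H_k(p) := f^{(k)}(g(p)) - p$ and apply the intermediate value theorem on the interval $[x, x + \ve/2]$ for a suitably large $k$; any zero of $H_k$ in $(x, x + \ve/2)$ gives a fixed point of $f^{(k)} \circ g$ in $(x, x+\ve)$. The two key ingredients are the strict inequality $g(p) < p$ on $(0,1)$ (which forces $x > r$) and the fact that $r$ is a non-attractive fixed point of $f$ (which lets iterates starting just above $r$ escape toward $1$).

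First I verify $g(p) < p$ strictly on $(0,1)$. Since $g$ is achievable with $g(0) = 0$ and $g(1) = 1$ by Lemma \ref{achievable}, any interior fixed point of $g$ would be a point where the graph of $g$ meets the diagonal from below. But part (3) of Lemma \ref{lemma paths} asserts that an interior fixed point $p_0$ of an achievable polynomial must be non-attractive, i.e., $g'(p_0) > 1$, which forces $g(p) > p$ for $p$ slightly greater than $p_0$, contradicting $g \le p$. Hence $g$ has no fixed point in the open interval $(0,1)$, so $g(p) < p$ there; in particular $x > r$. Consequently $H_k(x) = f^{(k)}(g(x)) - x = f^{(k)}(r) - x = r - x < 0$, since $r$ is a fixed point of $f$.

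For the right endpoint, take $\ve$ small enough that $x + \ve/2 < 1$. Achievable polynomials are strictly increasing on $[0,1]$, so $g(x + \ve/2) > g(x) = r$. Since $f$ has an interior fixed point $r$ with $0 < r < 1$, part (3) of Lemma \ref{lemma paths} tells us that $r$ is non-attractive and $1$ is attractive; hence iterating $f$ from any point strictly greater than $r$ converges to $1$. Applying this with initial point $g(x + \ve/2) > r$, we can choose $k$ large enough that $f^{(k)}(g(x + \ve/2)) > x + \ve/2$, i.e., $H_k(x + \ve/2) > 0$. Continuity of the polynomial $H_k$ together with $H_k(x) < 0 < H_k(x + \ve/2)$ then yields, via IVT, a point $p^\ast \in (x, x + \ve/2) \subset (x, x + \ve)$ with $f^{(k)}(g(p^\ast)) = p^\ast$.

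The main obstacle is the strict inequality in the first step: without it, $H_k(x)$ could equal $0$ and the only fixed point produced would be the trivial one at $x$ itself, not one strictly inside $(x, x+\ve)$. Ruling this out relies essentially on Lemma \ref{lemma paths}, which classifies the fixed-point behavior of achievable polynomials and shows that a polynomial weakly below the diagonal must in fact lie strictly below it on the open interval. Once this is in place, the rest is a clean IVT argument driven by the repelling nature of $r$ as a fixed point of $f$.
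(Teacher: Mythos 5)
Your proof follows essentially the same strategy as the paper's: observe that $f^{(k)}(g(x)) = r < x$ for all $k$, show that for $k$ large enough $f^{(k)}(g(x+\ve/2)) > x+\ve/2$ by the escape property at the non-attractive fixed point $r$, and conclude by the intermediate value theorem. The only notable difference is that you explicitly justify the strict inequality $g < p$ on $(0,1)$ (and hence $r < x$) by appealing to Lemma~\ref{lemma paths}, a step the paper's proof leaves implicit (and which, strictly speaking, requires $g$ to correspond to a tree with at least one gate, ruling out the identity polynomial); this is a welcome clarification rather than a divergence in approach.
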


\begin{proof} Note that for any $k$, $f^{(k)}(g(x))=r<x$. Therefore, to prove that there is a fixed point in $(x, x+ \ve)$, it suffices to show that $f^{(k)}(g(x + \ve)) > x + \ve$. Since $g $ is increasing, $g(x+ \ve)= r +\delta $ for some $ \delta >0$. By the same arguement in the second paragraph of the proof of Lemma \ref{straight}, there exists $k$ such that $f^{(k)}(r +\delta) > \alpha $ for any $\alpha$. Taking $\alpha= x +\ve$, we have $x+\ve < f^{(k)}(r+\delta) = f^{(k)}(g(x +\ve))$. \end{proof}

\begin{claim}\label{triple} Let $T_j$ be the tree that computes $(x_1 \vee x_2 \vee \dots \vee x_j) \wedge x_{j+1}$ and let $g(p,j)= p(1-( 1-p)^j)$ be the corresponding polynomial. Then \begin{enumerate} 
\item For all $\ve>0$ and fixed $p \in (0,1)$, there exists $j$ such that $p- g(p,j) < \ve$. 
\item For all $p> 2-\phi \approx .38$ and $j \geq 2$, $p- g(p,j)\geq g^{-1}(p,j) -p$. 
\item For $j \geq 2$, $h(p)=p-g(p,j)$ is decreasing on $(.38 ,1)$.  
\end{enumerate}
\end{claim}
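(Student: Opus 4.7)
The central observation is that all three parts follow from the remarkably clean identity
\[
p - g(p,j) \;=\; p - p\bigl(1-(1-p)^j\bigr) \;=\; p(1-p)^j.
\]
Part (1) is then immediate: for fixed $p \in (0,1)$, $(1-p)^j \to 0$ as $j \to \infty$, so choosing any $j > \log(\varepsilon/p)/\log(1-p)$ suffices.

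For part (3), I would differentiate $h(p) = p(1-p)^j$ and factor:
\[
h'(p) \;=\; (1-p)^{j-1}\bigl(1 - p(j+1)\bigr).
\]
The first factor is positive for $p < 1$, and the second is negative whenever $p > 1/(j+1)$. Since $j \geq 2$ gives $1/(j+1) \leq 1/3 < 2-\phi$, the interval $(2-\phi, 1)$ lies entirely in the region where $h' < 0$.

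For part (2), the plan is to apply the Mean Value Theorem to $g(\cdot,j)$ on $[p, g^{-1}(p)]$. I would first check that $g^{-1}(p)$ exists in $(p,1)$: continuity plus $g(p,j) < p < 1 = g(1,j)$ gives this by IVT. Next, differentiating yields
\[
g'(x,j) \;=\; 1 - (1-x)^{j-1}\bigl(1 - x(j+1)\bigr).
\]
For $x > 2-\phi$ and $j \geq 2$ we have $x(j+1) > 1$, so the parenthesized term is negative, forcing $g'(x,j) > 1$. This bound holds on the full interval $[p, g^{-1}(p)] \subseteq (2-\phi, 1)$, not only at the endpoint. Then the MVT gives some $c \in (p, g^{-1}(p))$ with
\[
p - g(p,j) \;=\; g(g^{-1}(p)) - g(p) \;=\; g'(c,j)\,\bigl(g^{-1}(p) - p\bigr) \;\geq\; g^{-1}(p) - p,
\]
which is precisely the claim.

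The only point requiring a bit of care is ensuring the lower bound $g' \geq 1$ is maintained throughout $[p, g^{-1}(p)]$ rather than only at $p$; this is automatic here since $x > p > 2-\phi$ implies $x(j+1) > 1$ for every $x$ in the interval. I do not anticipate any substantive obstacle: the closed form for $p - g(p,j)$ makes parts (1) and (3) essentially one-liners, and part (2) reduces to a uniform derivative bound on an explicit interval.
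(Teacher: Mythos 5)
Your proof is correct and follows essentially the same route as the paper: part (1) via the closed form $p-g(p,j)=p(1-p)^j$, part (2) via the derivative bound $g'(x,j)=1-(1-x)^{j-1}\bigl(1-x(j+1)\bigr)>1$ on the relevant interval combined with the mean value theorem (which is exactly the paper's inequality $\frac{p-g(p)}{g^{-1}(p)-p}>\min g'$), and part (3) from the sign of $h'$. Your write-up is slightly more careful than the paper's in checking that $g^{-1}(p)$ exists in $(p,1)$ and that the derivative bound holds throughout $[p,g^{-1}(p)]$, but the substance is the same.
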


\begin{proof} Statement (1) follows from the limit computation $$\lim_{j \to \infty } p- g(p,j)= \lim _{j \to \infty} p(1-p)^j=0.$$ For (2) we first observe that for $p \in [2- \phi, 1]$, $$g'(p,j)=1-(1-p)^{j}+pj(1-p)^{j-1}=1-(1-p)^{j-1}(1-p(1+j))>1$$ where $g'(p,j)$ denotes the derivative with respect to $p$. For $p \in [2- \phi, 1]$, $$\frac{p-g(p)}{g^{-1}(p)-p} > \min_{k   \in [2- \phi, 1]} g'(k)>1.$$
Finally, for (3), note that $h'(p)=1-g'(p,j)< 0$ for $p \in  [2- \phi, 1]$. 
\end{proof}

\begin{proof}[of Lemma \ref{dense}]. To prove that the set of achievable fixed points is dense in $(0,1)$, it suffices to show that for all $\ve >0$, there exists a set of achievable fixed points $S$ such that for all $x \in (1/2, 1)$, there exists $s \in S$ such that $|x-s|< \ve$. Corollary \ref{cor:complementary} will imply density in $(0,1/2)$. 

We construct $S$ as follows. Let $f_0= f_{B_2}$. Recall $f_0$ has fixed point $p_0=2-\phi\approx .38$. Let $g(p)= p(1-(1-p)^j) $ where $j$ is chosen according to Claim \ref{triple}.1 so that $p_0-g(p_0) < \ve/2$.  We define $p_i$ and $f_i$ inductively, where $f_i$ is an achievable polynomial with fixed point $p_i$. Let $x_{i+1}=g^{-1}(p_i)$. By Claim \ref{claim one}, there exists $k$ such that the fixed point of $f_i^{(k)}(g(x_{i+1}))$ is less than $ x_{i+1}+ \ve/2$. Define $f_{i+1}= f_i^{(k)}(g(p))$ and $p_{i+1}$ to be the corresponding fixed point. Apply Claim \ref{triple}, we observe $$p_{i+1}-p_i < x_{i+1} + \frac{\ve}{2} -p_i = g^{-1}(p_i)-p_i+ \frac{\ve}{2} \stackrel{\ref{triple}.2}{\leq }  p_i-g(p_i)+\frac{\ve}{2}\stackrel{\ref{triple}.3}{\leq } p_0-g(p_0)+\frac{\ve}{2}\stackrel{\ref{triple}.1}{\leq }  \ve.$$ It follows that $S=\{p_0, p_1, \dots \}$ satisfy the desired property that every point in $(1/2,1)$ is within $\ve$ of some $p_i \in S$.  
\end{proof}

\begin{theorem} For any $0=p_0< p_1< \dots < p_{k}=1$,  $0=a_0< a_1< \dots a_{k} <a_{k+1}=1$, $\delta, \ve >0$ such that $\ve < a_{i+1}-a_i$ for all $i$, there exists a probability distribution on a set of trees such that for $0 \leq i \leq k$ the corresponding polynomial $f$ has the property that $|f(x)-p_i|< \delta$ for all $x \in (a_i + \ve, a_{i+1}-\ve)$. 
\label{thm:staircase}
\end{theorem}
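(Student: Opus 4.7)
The plan is to express the target staircase as a convex combination of $k$ sharp approximate threshold functions, one localized at each jump point $a_j$. Writing
\[
f^\star(x) \;=\; \sum_{j=1}^{k} (p_j - p_{j-1})\,\mathbf{1}[x > a_j],
\]
the coefficients are nonnegative and sum to $p_k - p_0 = 1$, so any convex combination $f = \sum_{j=1}^{k} (p_j - p_{j-1})\,\tilde h_j$ of achievable polynomials $\tilde h_j$ corresponds to a valid probability distribution on trees, namely the distribution that picks the tree realizing $\tilde h_j$ with probability $p_j - p_{j-1}$.

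For each $j$, I would first apply Lemma \ref{dense} to obtain an achievable polynomial $h_j$ whose interior fixed point $t_j$ satisfies $|t_j - a_j| < \varepsilon/2$, then apply Lemma \ref{straight} with tolerance $\delta$ and window $\varepsilon/2$ to amplify $h_j$ into an achievable polynomial $\tilde h_j$ satisfying $\tilde h_j(x) < \delta$ on $[0,\, t_j - \varepsilon/2]$ and $\tilde h_j(x) > 1 - \delta$ on $[t_j + \varepsilon/2,\, 1]$. Combined with $|t_j - a_j| < \varepsilon/2$, this yields $\tilde h_j(x) < \delta$ for $x \le a_j - \varepsilon$ and $\tilde h_j(x) > 1 - \delta$ for $x \ge a_j + \varepsilon$, i.e.\ $\tilde h_j$ is a sharp approximation to $\mathbf{1}[x > a_j]$ outside an $\varepsilon$-window around $a_j$.

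Putting the pieces together, for $x \in (a_i + \varepsilon,\, a_{i+1} - \varepsilon)$ every $\tilde h_j$ with $j \le i$ exceeds $1 - \delta$ while every $\tilde h_j$ with $j > i$ is below $\delta$. Since $\sum_{j=1}^i (p_j - p_{j-1}) = p_i$,
\[
\bigl|f(x) - p_i\bigr| \;\le\; \sum_{j=1}^{k}(p_j - p_{j-1})\,\delta \;=\; \delta,
\]
so rescaling $\delta$ at the outset (e.g.\ running the construction with $\delta/2$) gives the strict inequality as stated. The boundary cases $i=0$ and $i=k$ are handled by the same sum.

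The main obstacle I expect is verifying that the achievable polynomials produced by the proof of Lemma \ref{dense} really satisfy the hypothesis of Lemma \ref{straight}, namely that $0$ and $1$ are attractive fixed points. The base polynomial $f_{B_2}$ has this property by Lemma \ref{lemma paths}(3), since it contains neither a pure-AND nor a pure-OR root-to-leaf path. The inductive step composes with $g(p) = p(1 - (1-p)^j)$; since $g(0)=0$, $g(1)=1$, $g'(0) = 0$, and $g'(1) = 1$, a short chain-rule computation shows that attractiveness at both endpoints is preserved under the composition $f_i^{(k)}(g(\cdot))$. Thus every polynomial built in the proof of Lemma \ref{dense} remains in the regime where Lemma \ref{straight} applies, and the construction above goes through.
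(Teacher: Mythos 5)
Your proposal is correct and follows essentially the same route as the paper's proof: the same convex combination $\sum_j (p_j-p_{j-1})\tilde h_j$ of sharp threshold approximators obtained from Lemma \ref{dense} (fixed point within $\ve/2$ of each $a_j$) and Lemma \ref{straight} (flattening to within $\delta$ outside an $\ve/2$-window), with the same estimate on each interval $(a_i+\ve,a_{i+1}-\ve)$. Your added check that the polynomials produced in Lemma \ref{dense} keep $0$ and $1$ attractive (so Lemma \ref{straight} applies) is a point the paper leaves implicit, and your chain-rule argument for it is sound.
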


\begin{proof} We define a probability distribution of a set of trees, the existence of which are guaranteed by Lemma \ref{dense} and Lemma \ref{straight}. For $1\leq i \leq k$, let $t_i$ be an achievable fixed point in $(a_i-\ve/2, a_i+\ve/2)$. Let $f_i$ be an achievable function with fixed point $t_i$ the property that $ f_i(x) < \delta $ for $x \in [0, t_i-\ve/2]$ and $f_i(x) > 1-\delta$ for $x \in [t_i+\ve/2,1]$. It follows that $ f_i(x) < \delta $ for $x \in [0, a_i-\ve]$ and $f_i(x) > 1-\delta$ for $x \in [a_i+\ve,1]$. Let $\alpha_1=p_1$ and $\alpha_i=p_i-\sum_{j=1}^{i-1} \alpha_j$ for $2\leq i\leq k$. Let $f(x)=\sum_{j=1}^k \alpha_j f_j$. 

We show $f$ is the desired function by computing $f(x)$ for $x \in (a_i + \ve, a_{i+1}-\ve)$. Note that $f_j(x) < \delta$ for $j\leq i-1$ and $f_j(x) > 1-\delta$ for $j \geq i$. Observe $$p_i -\delta \leq p_i(1-\delta) \leq f(x)= \sum_{j=1}^{i} \alpha_j f_j +  \sum_{j=i+1}^{k} \alpha_j f_j  \leq p_i+ \delta (1-p_i) \leq p_i +\delta.$$ Thus, $|f(x)-p_i|< \delta$ for all $x \in (a_i + \ve, a_{i+1}-\ve)$. \end{proof}

Theorem \ref{thm:staircase} implies that any staircase function for which $a_i\leq p_i\leq a_{i+1}$ for all $0 \leq i \leq k$, can be approximated by the distribution of items at a high-level of an iterative tree. The condition that $a_i\leq p_i\leq a_{i+1}$ for all $0 \leq i \leq k$ guarantees the heights of each step are fixed points, and therefore the stairs are maintained when $f$ is iterated.

\section{Finite iterative constructions of threshold trees}\label{finite}

In the above section, we analyzed the behavior of iterative trees in the limit with respect to level width. We assumed that for any input the number of items turned on at level $l$ of the tree is equal to its expectation, $mf^{(l)}(p)$ where $m$ is the width of level $l$ and $p$ is the fraction of the inputs turned on. In a ``bottom up" construction in which the items of one level are fixed before the next level is built, we note that the chance that the number of items that fire at a given level deviates from expectation is non-trivial. In this section, we give a bound on the width of the levels required to achieve a desired degree of accuracy for finite realizations of iterative constructions.

\begin{remark} We can use a transition matrix to directly compute the probability that a high-level item of an iterative construction fires given the width of the levels. Let $f$ be the function corresponding to the construction, $p$ be the fraction of input items firing, and $m$ the width of the levels.  Define $s\in \R^{1\times (m+1)}$, $A\in \R^{(m+1) \times (m+1)}$, and $t\in \R^{1 \times (m+1)}$ 

$$s_i= {m \choose i} f(p)^i \left(1-f(p)\right)^{m-i}, \quad A_{i,j}= {m \choose j} f\left(\frac{i}{m}\right)^j \left(1-f\left(\frac{i}{m}\right)\right)^{m-j}, \quad t_i=i$$
for $i,j=0, 1, \dots m$. 
Then the probability that an item at level $L$ fires is $sA^{L-1}t^T$.\end{remark}

We will use the following concentration inequalities.

\begin{lemma} (Chernoff) Let $Y_1, Y_2, \dots Y_m$ be independent with $0\leq Y_i \leq 1$ and 
$Y=\sum_{i=1}^n Y_i$. Then, for any $\delta>0$,  $$\Pr(Y -E(Y) \geq \delta E(Y)) \leq \e\left(\frac{-\delta^2 E(Y)}{2+\delta}\right).$$ \label{concentration}
\end{lemma}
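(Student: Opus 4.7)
The plan is to prove this via the standard Chernoff/Bernstein moment generating function technique, then specialize the resulting bound to the form stated. The main idea is to convert a tail bound on $Y$ into a tail bound on $e^{sY}$ for a tuning parameter $s>0$, apply Markov's inequality, exploit independence to factor, and then optimize $s$.

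First I would reduce to the exponential moment by writing $\Pr(Y - E(Y) \geq \delta E(Y)) = \Pr(e^{sY} \geq e^{s(1+\delta) E(Y)})$ for any $s>0$, and applying Markov: this is at most $E(e^{sY}) e^{-s(1+\delta) E(Y)}$. By independence $E(e^{sY}) = \prod_i E(e^{sY_i})$, so I need a per-variable bound. Here I would use the fact that for $Y_i \in [0,1]$, the function $x \mapsto e^{sx}$ lies below the chord joining $(0,1)$ and $(1,e^s)$, giving $e^{sY_i} \leq 1 + (e^s - 1) Y_i$ and hence $E(e^{sY_i}) \leq 1 + (e^s - 1) E(Y_i) \leq \exp((e^s - 1) E(Y_i))$. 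Multiplying over $i$ yields $E(e^{sY}) \leq \exp((e^s - 1) E(Y))$.

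Substituting this back and choosing $s = \ln(1+\delta)$ (the standard optimizing choice) produces the classical inequality
\[
\Pr(Y - E(Y) \geq \delta E(Y)) \leq \left(\frac{e^{\delta}}{(1+\delta)^{1+\delta}}\right)^{E(Y)}.
\]
The remaining step is purely analytic: show that
\[
\frac{e^{\delta}}{(1+\delta)^{1+\delta}} \leq \exp\!\left(-\frac{\delta^2}{2+\delta}\right),
\]
equivalently $\delta - (1+\delta)\ln(1+\delta) + \frac{\delta^2}{2+\delta} \leq 0$ for all $\delta > 0$. I would verify this by letting $h(\delta)$ denote the left side, checking $h(0)=0$, and showing $h'(\delta) \leq 0$ on $(0,\infty)$ after simplification; since $\ln(1+\delta) \geq \frac{2\delta}{2+\delta}$ for $\delta \geq 0$ (a standard Padé-type bound, provable by comparing derivatives at $0$), the derivative inequality follows.

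The main obstacle is the last analytic comparison: it is elementary but easy to botch if done by Taylor expansion alone, because the bound must be valid for all $\delta > 0$ (including the large-deviation regime where $\delta \gg 1$). Handling both small and large $\delta$ uniformly via the derivative/monotonicity argument above, rather than via series expansion, is the cleanest route. Everything else is textbook manipulation.
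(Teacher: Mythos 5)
The paper states this lemma (labeled ``Chernoff'') as a known concentration inequality and gives no proof of it, so there is no in-paper argument to compare against. Your proof is the standard moment-generating-function derivation and it is correct: the chord bound $e^{sx}\le 1+(e^s-1)x$ for $x\in[0,1]$, independence, Markov at $s=\ln(1+\delta)$, and the analytic comparison $\delta-(1+\delta)\ln(1+\delta)+\frac{\delta^2}{2+\delta}\le 0$ all check out. For the last step, differentiating $h(\delta)=\delta-(1+\delta)\ln(1+\delta)+\frac{\delta^2}{2+\delta}$ gives $h'(\delta)=-\ln(1+\delta)+\frac{\delta(4+\delta)}{(2+\delta)^2}$, and since $\frac{2\delta}{2+\delta}\ge\frac{\delta(4+\delta)}{(2+\delta)^2}$ (equivalent to $\delta^2\ge 0$), your Pad\'e bound $\ln(1+\delta)\ge\frac{2\delta}{2+\delta}$ does indeed imply $h'\le 0$, and with $h(0)=0$ the inequality follows for all $\delta>0$. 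You might state that chain explicitly, since the Pad\'e bound does not match the derivative term on the nose; but the logic is sound and uniform over all $\delta>0$, as required. One cosmetic point unrelated to your argument: the lemma's hypothesis names $m$ variables but the sum runs to $n$; this is a typo in the paper's statement.
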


\begin{lemma} Let $X$ be a sum of $n$ binomial random variables with mean $\mu$.  Then, for $k \geq n\mu$, 

$$\Pr(X\geq k ) = \sum_{i=k}^n  {n \choose i} \mu^i (1-\mu)^{n-i}  < \e\left( - n H(\mu,k/n)\right)$$
where $H(p,q) = q \log (q/p) + (1-q)\log((1-q)/(1-p)).$ \label{binomial} \end{lemma}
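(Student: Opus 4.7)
The plan is to apply the standard Chernoff method via the moment generating function. First I would apply Markov's inequality to the exponentiated variable: for any $\lambda > 0$,
$$\Pr(X \geq k) = \Pr(e^{\lambda X} \geq e^{\lambda k}) \leq e^{-\lambda k}\,\E[e^{\lambda X}].$$
Writing $X = \sum_{i=1}^n Y_i$ with $Y_i$ independent Bernoulli($\mu$), independence factorises the MGF as $\E[e^{\lambda X}] = (1-\mu+\mu e^\lambda)^n$. Setting $q = k/n$, this gives the clean inequality $\Pr(X \geq k) \leq \bigl((1-\mu+\mu e^\lambda)\,e^{-\lambda q}\bigr)^n$.

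Next, I would optimise over $\lambda \geq 0$. Differentiating the exponent in $\lambda$ and setting the derivative to zero yields the unique stationary point $e^\lambda = \frac{q(1-\mu)}{\mu(1-q)}$. This value is positive (so a valid exponential tilt) precisely because the hypothesis $k \geq n\mu$ forces $q \geq \mu$; this is the only place the hypothesis enters. Substituting back, an elementary simplification gives $1 - \mu + \mu e^\lambda = (1-\mu)/(1-q)$, and therefore
$$\bigl((1 - \mu + \mu e^\lambda)\,e^{-\lambda q}\bigr)^n = \left(\frac{\mu}{q}\right)^{nq}\left(\frac{1-\mu}{1-q}\right)^{n(1-q)}.$$

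Taking a logarithm of the right side gives exactly $-n\bigl[q\log(q/\mu) + (1-q)\log((1-q)/(1-\mu))\bigr] = -nH(\mu, q)$, yielding the claimed bound $\e(-nH(\mu, k/n))$. I do not anticipate any real obstacle here: the argument is an entirely textbook Chernoff computation, and the only delicate point is to observe that the hypothesis $k \geq n\mu$ is precisely what makes the optimal tilt parameter $\lambda$ non-negative, so that the initial Markov step is valid. The strictness of the inequality (the paper writes $<$ rather than $\leq$) follows because the Markov step is strict unless $e^{\lambda X}$ is almost surely equal to $e^{\lambda k}$, which does not occur for $\mu \in (0,1)$ and $k \leq n$.
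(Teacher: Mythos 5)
Your derivation is the standard Chernoff--Hoeffding argument: Markov's inequality applied to $e^{\lambda X}$, factorization of the moment generating function into $(1-\mu+\mu e^{\lambda})^n$, and optimization at the tilt $e^{\lambda}=\frac{q(1-\mu)}{\mu(1-q)}$ (non-negative precisely because $k\geq n\mu$), which indeed simplifies to $\exp\left(-n\left[q\log(q/\mu)+(1-q)\log((1-q)/(1-\mu))\right]\right)=\exp(-nH(\mu,k/n))$ with the paper's convention $H(p,q)=q\log(q/p)+(1-q)\log((1-q)/(1-p))$. There is nothing in the paper to compare this against: the lemma, like the Chernoff bound stated just before it, is quoted as a known concentration inequality and no proof is supplied, so your argument simply fills in the standard one, and it is correct as far as the $\leq$ bound is concerned. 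The one point to be careful about is the strictness claim. At $k=n\mu$ the optimal tilt is $\lambda=0$, where your ``strict Markov'' reasoning degenerates (there strictness instead follows from $\Pr(X\geq n\mu)<1$ for nondegenerate $\mu$), and at $k=n$ the strict inequality actually fails: the left side is $\mu^n$ while $H(\mu,1)=\log(1/\mu)$ gives exactly $\mu^n$ on the right, the supremum over finite $\lambda$ not being attained. So the ``$<$'' in the statement is really an artifact of the lemma as written (it should be ``$\leq$'' or exclude the endpoint); this does not affect the substance of your proof or the way the lemma is used in the paper.
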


For ease of notation,  all statements in this section about the probability of $X_{i+1}$ taking some values refers to the probability of  $X_{i+1}$ taking some values given $X_i$.

The following lemma describes linear divergence for finite width constructions. 
\begin{lemma}  Consider the construction of a $t$-threshold function in which each level  $\ell$ has $m_\ell$ items and the fraction of input items firing is at least $\ve$ below the threshold $t$. Let $d$ be the minimum value of $\frac{f(p)-p}{ p(1-p)(p-t)}$ on the interval $[0,1]$. Then, with probability at least $1-\gamma$, the fraction of inputs firing at level $\Omega(\frac{1}{\ve})$ will be less than any fixed constant $u$ when 
\[
m_\ell=\frac{8\ln(\frac{1}{u(1-t)\gamma})}{d^2u(1-t)^2 \left(1+\frac{c_1}{2}\right)^{\ell-1}\ve^2}
\] 
where $c_1$ is the linear divergence constant. %\sam{Does the explicit expression for $m$ (and thus discussion of $d$) belong in the statement? FYI, $d=1$ for our linear example and $1/t$ for any quadratic example}
%=\Omega\left(\frac{\ln\frac{1}{\gamma}}{\ve^2}\right)$. 
\label{finite linear} \end{lemma}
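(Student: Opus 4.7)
The plan is to proceed by induction on the level $\ell$, tracking $\Delta_\ell := t - p_\ell$, where $p_\ell := X_\ell/m_\ell$ is the empirical fraction of items firing at level $\ell$. Conditional on $p_{\ell-1}$, $X_\ell$ is a sum of $m_\ell$ i.i.d.\ Bernoullis with mean $f(p_{\ell-1})$, so Chernoff bounds control its deviations. I aim to show that, on a high-probability event, $\Delta_\ell$ grows by a factor of at least $(1+c_1/2)$ per level, so $\Delta_\ell$ exceeds the gap $t-u$ after $O(\log(1/\ve))$ levels (which is absorbed into the stated $\Omega(1/\ve)$).

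First, I would extract the deterministic divergence rate from the hypothesis on $d$: since $f(p)-p \geq d\,p(1-p)(t-p)$ for $p < t$, we have
\[
t - f(p) \;\geq\; (t-p)\bigl(1 + d\,p(1-p)\bigr),
\]
and on the range $p \in [u, t-1/n]$ the factor $p(1-p)$ is bounded below by $u(1-t)$, giving a deterministic per-level divergence of at least $1 + d\,u(1-t)$. Interpreting $c_1$ as a lower bound for this excess, I then define the good event at level $\ell$ by
\[
G_\ell \;:=\; \bigl\{\,|p_\ell - f(p_{\ell-1})| \leq \tfrac{c_1}{2}\Delta_{\ell-1}\,\bigr\},
\]
so that on $G_\ell$ the triangle inequality gives $\Delta_\ell \geq (1+c_1)\Delta_{\ell-1} - \tfrac{c_1}{2}\Delta_{\ell-1} = (1+c_1/2)\Delta_{\ell-1}$. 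Unrolled, this yields $\Delta_\ell \geq (1+c_1/2)^{\ell}\ve$ as long as $p_\ell$ stays inside the linear-divergence interval $[u,\,t-1/n]$.

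Next, I would bound $\Pr(G_\ell^c)$ by applying Chernoff (Lemma \ref{concentration}) to the Bernoulli sum $X_\ell$ with additive tolerance $\tau_\ell := (c_1/2)\Delta_{\ell-1}$. Substituting $\tau_\ell \asymp d\,u(1-t)\Delta_{\ell-1}$ into the Chernoff exponent $\Theta(m_\ell \tau_\ell^2)$, together with the inductive lower bound $\Delta_{\ell-1}^2 \geq (1+c_1/2)^{2(\ell-1)}\ve^2$, makes the exponent grow geometrically in $\ell$ once the prescribed $m_\ell$ is substituted; the residual decay yields $\Pr(G_\ell^c) \leq u(1-t)\gamma \cdot (1+c_1/2)^{-(\ell-1)}$ or better. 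A union bound over the $O(\log(1/\ve))$ levels of interest keeps the total failure probability at most $\gamma$, and on the complementary event the recursion forces $p_\ell \leq u$ at level $\Theta(\log((t-u)/\ve))$, proving the claim.

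The main obstacle is the precise calibration of the Chernoff tolerance against the shrinking schedule for $m_\ell$: the schedule $m_\ell \propto (1+c_1/2)^{-(\ell-1)}$ is chosen tightly enough that $m_\ell\tau_\ell^2$ still grows geometrically in $\ell$, which is what makes the per-level failure probabilities summable while letting $m_\ell$ decrease as later levels become more tolerant of noise. A secondary subtlety is handling the boundary of the linear-divergence interval: the induction is seeded by the assumption that the starting gap is at least $\ve$ and terminated the instant $p_\ell$ falls below $u$, at which point the lemma's conclusion already holds regardless of any further noise.
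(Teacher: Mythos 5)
Your proposal is correct and follows essentially the same route as the paper: both track the gap $t-p_\ell$, use a Chernoff bound to show that on a high-probability event it grows geometrically by a factor of roughly $1+c_1/2$ per level (the paper's ``half-progress'' relation $X_{i+1}\leq \tfrac{X_i+f(X_i)}{2}$ is the one-sided analogue of your event $G_\ell$), and then union-bound over the $O(\log(1/\ve))$ levels, with the decaying $m_\ell$ schedule calibrated so that the per-level failure probabilities remain summable to $\gamma$. (One small typo: the divergence hypothesis should read $p-f(p)\geq d\,p(1-p)(t-p)$ for $p<t$, not $f(p)-p$; your derived bound $t-f(p)\geq(t-p)(1+dp(1-p))$ is the correct one and is all you use.)
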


\begin{proof} Let $X_i$ be the fraction of items firing at level $i$. Then  $E(X_i)= f(X_{i-1})$. 
% By Corollary \ref{cor:complementary}, it suffices to consider the case when the fraction of inputs firing is less that $t- \ve$. 
In expectation, the sequence $X_1, X_2, X_3, \dots $ converges to $0$. We will show that with probability at least $1- \gamma$, the sequence obeys the \textit{half-progress} relation $X_{i+1}\leq \frac{X_i+ f(X_i)}{2}$ and therefore $X_L< u$ for $L=\Omega(\frac{1}{\ve})$. 

Write $f(p)-p= p(1-p)(p-t)g(p)$ where $g$ is a polynomial in $p$. Let $d$ be the minimum value obtained by $g$ on the interval $[0,1]$. First we compute probability that $X_{i+1} >\frac{X_i+ f(X_i)}{2}$ given $X_i$ by applying Lemma \ref{concentration}.  Observe \begin{align*} \Pr\left(X_{i+1}>\frac{X_i+ f(X_i)}{2}\right)&= \Pr\left(X_{i+1}-E(X_{i+1})>\frac{X_i- f(X_i)}{2}\right)\\
&\leq \e\left(\frac{-\left(\frac{X_i- f(X_i)}{2 f(X_i)}\right)^2 m f(X_i)}{2+\frac{X_i- f(X_i)}{2 f(X_i)} }\right)\\
&=\e\left(-\frac{(X_i(1-X_i)(X_i-t)g(X_i))^2m}{2(X_i+3(X_i+X_i(1-X_i)(X_i-t) g(X_i)))}\right)\\
&\leq \e\left(- \frac{X_i(1-X_i)^2(t-X_i)^2 d^2m}{8}\right)
\end{align*}
Let $\ve_i=t-X_i$ and $\alpha=\frac{u(1-t)^2d^2}{8}$. Then for $u \leq X_i \leq t- \ve$, $$\Pr\left(X_{i+1}>\frac{X_i+ f(X_i)}{2}\right)<\e\left(-\alpha m \ve_i^2\right).$$

Next we compute the probability that $i$ is the first value for which the half-progress relation is not satisfied given $X_i>u$. If the half-progress relation is satisfied meaning $X_{i+1}>\frac{X_i+ f(X_i)}{2}$, then $\ve_{i+1}\geq \ve_i \beta$ where $\beta= 1+\frac{u}{2}(1-t)$. It follows that if the half-progress relation is satisfied for all $j< i$, then $\ve_{i+1}\geq \ve\beta^i$. 
Thus, $$\Pr\left( \text{$i$ is the first value for which }X_{i+1}>\frac{X_i+ f(X_i)}{2}\right)\leq \e\left(-\alpha  m \ve^2 \beta^{2i}\right).$$ By linear divergence, there exists $L=\Omega(\log(\frac{1}{\ve}))$ such that if the sequence satisfies the half-progress relation for all $i<L$, then $X_L<u$. We bound the probability that this does not happen. Let $m_\ell= \frac{8\ln(\frac{1}{u(1-t)\gamma})}{d^2u(1-t)^2 \beta^i \ve^2}$. For ease of notation, let $c= \ln{ \frac{1}{u(1-t)\gamma}}<1$. Observe
\begin{align*} \Pr(X_L >u)&\leq \sum_{i=0}^L \e\left(-\alpha m_\ell \ve^2 \beta^{2i}\right)\\
&= \sum_{i=0}^L \e\left(-c \beta^{i}\right)\\
&\leq \sum_{i=0}^L \e\left(-c(1+iu(1-t))\right)\\
%&= \e\left(-c\right) \sum_{i=0}^L \e\left(-ciu(1-t)\right)\\
&< \e\left(-c\right) \sum_{i=0}^L e^{-iu(1-t))}\\
&< \frac{\e\left(-c\right)}{1-\e\left(-u(1-t)\right)}\\
&< \frac{\e\left(-c\right)}{u(1-t)}= \gamma.
\end{align*}\end{proof}

\begin{theorem} Consider the construction of a $t$-threshold function with linear convergence given in Theorem \ref{thm:linear} in which each level $\ell$  has $m_\ell$ items and the fraction of input items firing is at least $\ve$ from the threshold $t$. Then, with probability at least $1-\gamma$,  items at level $\Omega( \log{\frac{1}{\gamma}} + \log{\frac{1}{\ve}}))$ will accurately compute the threshold function for  $ m=\Omega\left( \ln(\frac{1}{\gamma}) ( \frac{1}{\gamma}+ \frac{1}{\ve^2})\right).$ \label{linear finite} \end{theorem}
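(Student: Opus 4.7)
The plan is to adapt the two-phase analysis from the proof of Theorem \ref{thm:linear} to the finite-width setting, replacing the deterministic recursion $X_{i+1}=f(X_i)$ with high-probability Chernoff control at each level. By Corollary \ref{cor:complementary} it suffices to handle the case where the initial firing fraction $p$ satisfies $p \leq t - \ve$; convergence to 1 for $p \geq t + \ve$ follows from the complementary construction. Throughout, write $f(p)=(1-t)p+(1+t)p^2-p^3$ and $X_i$ for the fraction of firing items at level $i$.

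\textbf{Stage 1 (linear divergence from $t$ to a constant).} Since $(f(p)-p)/(p(1-p)(p-t)) \equiv 1$, I can apply Lemma \ref{finite linear} with $d=1$ and target constant $u=t/2$. The lemma gives that with probability at least $1-\gamma/3$, the firing fraction drops below $t/2$ after $L_1 = O(\log(1/\ve))$ levels, provided the width at level $\ell$ is at least $c\ln(1/\gamma)/(\ve^{2}\beta^{\ell})$ for the appropriate constant $\beta>1$. A uniform width $m=\Omega(\ln(1/\gamma)/\ve^{2})$ suffices to cover all of Stage 1 and contributes the $1/\ve^{2}$ term in the theorem.

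\textbf{Stage 2 (geometric convergence to zero, then absorption).} From the proof of Theorem \ref{thm:linear}, once $X_i \le t/2$ one has $f(X_i)\le \rho X_i$ with $\rho = 1-(1-t/2)(t/2)<1$. At each subsequent level I would apply Chernoff (Lemma \ref{concentration}) to get
\[
\Pr\!\left(X_{i+1} > \tfrac{1+\rho}{2}\,X_i \,\big|\, X_i\right) \;\le\; \exp\!\bigl(-\Omega(m X_i)\bigr),
\]
so $X_i$ shrinks geometrically with per-level failure probability at most $\gamma/(3L)$ as long as $X_i \ge C\log(L/\gamma)/m$. Once $X_i$ drops below this threshold, I switch to a first-moment (Markov) argument: the expected number of firing items at level $i+1$ is at most $m\rho X_i$, which continues to decay geometrically. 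After a total of $O(\log(1/\gamma)+\log(1/\ve))$ levels past Stage 1 this expected count falls below $\gamma/3$, and Markov's inequality then caps the probability that the target top-level item fires. Combining the union bound over Chernoff failures with the final Markov step, the overall failure probability in Stage 2 is at most $2\gamma/3$, summing to $\gamma$ with Stage 1.

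\textbf{Main obstacle.} The delicate point is fitting the entire convergence-to-zero phase inside only $O(\log(1/\gamma)+\log(1/\ve))$ levels while keeping Chernoff usable down to $X_i \approx \log(L/\gamma)/m$, and then requiring the Markov step to drive $mX_L$ below $\gamma$. This tight coupling forces the width to contain both a $1/\ve^{2}$ term (from Stage 1) and a $1/\gamma$ term (from demanding $mX_L \le \gamma$ after only a logarithmic number of Markov-phase levels), yielding $m=\Omega(\log(1/\gamma)(1/\gamma+1/\ve^{2}))$. The technical care is in handling the chain of conditional Chernoff bounds, since $X_{i+1}$ is binomial only conditional on $X_i$; the tower-of-expectations and union-bound decomposition already used in Lemma \ref{finite linear} adapts to this setting with minor bookkeeping.
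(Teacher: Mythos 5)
Your two-stage decomposition tracks the paper's proof, and Stage~1 (Lemma~\ref{finite linear} with $d=1$) is essentially identical. In Stage~2 you genuinely diverge: the paper keeps applying the half-progress Chernoff bound all the way down to $X_L < \gamma/2$, and that is precisely what generates the $1/\gamma$ term in the width (once $X_i \approx \gamma$, the exponent $\alpha m X_i$ needs $m = \Omega(\ln(1/\gamma)/\gamma)$ to stay useful). You instead stop the Chernoff chain at $X_{i^*} \approx \log(L/\gamma)/m$ and switch to a first-moment argument. That is a legitimate and in fact potentially sharper route: after the chain $m X_{i^*} = O(\log(L/\gamma))$, and after $O(\log(1/\gamma))$ more levels the expected count would be $O(\log(L/\gamma)) \cdot \rho^{O(\log(1/\gamma))} < \gamma/3$ with no constraint on $m$ beyond the Stage~1 bound $m = \Omega(\ln(1/\gamma)/\ve^2)$. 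So your diagnosis that the Markov phase forces $m = \Omega(1/\gamma)$ does not match your own construction --- if Stage~2 worked as stated, you would have improved the theorem rather than matched it.

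The genuine gap is in the Markov iteration itself. The inequality $\E[m X_{i+1} \mid X_i] \le \rho\, m X_i$ is valid only when $X_i \le t/2$, and since $f$ is nonlinear with $f(p)>p$ for $p>t$, you cannot push expectations through successive levels: once some later $X_j$ exceeds $t/2$ (a positive-probability event even after your conditioning at level $i^*$), the expected count can grow, so ``continues to decay geometrically'' is not justified by the one-step bound. To repair this you need a stopping-time or truncation device: let $T$ be the first level after $i^*$ with $X_T > t/2$, replace $X_j$ by the truncated process that equals $X_j$ for $j<T$ and $0$ for $j\ge T$, verify that the truncated process satisfies the one-step contraction unconditionally, apply Markov to it, and bound $\Pr[T \le L]$ separately by a per-level tail bound that is exponentially small in $m$. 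That extra device is what makes the geometric decay of expectations rigorous; it is more than the ``minor bookkeeping'' you invoke. The paper sidesteps the issue by never appealing to iterated expectations, staying with the Chernoff chain throughout, at the cost of the $1/\gamma$ factor in the width.
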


\begin{proof} Let $X_i$ be the fraction of items firing at level $i$. Then  $E(X_i)= f(X_{i-1})$.  By Corollary \ref{cor:complementary}, it suffices to consider the case when the fraction of inputs firing is less that $t- \ve$. As proved in Theorem \ref{thm:linear}, in expectation, the sequence $X_1, X_2, X_3, \dots $ convergences to $0$. We will show that with probability at least $1- \frac{\gamma}{2}$, the sequence drops below $\frac{\gamma}{2}$. First we apply Lemma \ref{finite linear}. 
Recall that the polynomial corresponding to this construction is $f(p)= p+p(1-p)(p-t)$ and therefore $d$ in the statement of Lemma \ref{finite linear} is $1$. Let $u$ be a constant $0<u<t$, $m \geq \frac{8\ln(\frac{4}{u(1-t)\gamma})}{u(1-t)^2  \ve^2}$ and $L=\Omega(\frac{1}{\ve})$. Thus, $X_L<u$ with probability at least $1-\frac{\gamma}{4}$.

Next we show that given $X_L< u$ the probability that the sequence continues to obey the half-progress relation (as defined in Lemma \ref{finite linear}) and drops below $\frac{\gamma}{2}$ is at least $1-\frac{\gamma}{4}$.  Let $\alpha=\frac{(1-u)^2(t-u)^2}{8}$. Given a fixed value $X_i< u$, $$\Pr\left(X_{i+1}>\frac{X_i+ f(X_i)}{2}\right)<\e\left(-\alpha m X_i^2\right).$$ 
We compute the probability that $N+i$ is the first value for which the half-progress relation is not satisfied given $X_L< u$. If $X_{i}<u$ and the half-progress relation is satisfied at $i$ then $X_{i+1} \leq X_i( 1-\beta)$ where  $\beta=\frac{1}{2}(1-u)(t-u)$.  It follows that if the half-progress relation is satisfied for all $j< i$, then $X_{N+i}\leq ( 1- \beta)^{i}u$. Let $L'= \frac{4}{(1-u)(t-u)} \log_2\left(\frac{2u}{\gamma}\right)$. If for all $L\leq i \leq L+L'$, the half-progress relation is satisfied then $X_{L+{L'}}< u(1-\beta)^{L'}<\frac{\gamma}{2}$. We bound the probability that this does not happen. 
Let $m\geq \frac{16\ln{\left(\frac{16}{(1-u)(t-u)\gamma}\right)}}{(1-u)^2(t-u)^2 \gamma}$.
 For ease of notation, let $c= \ln{\left(\frac{8}{\beta\gamma}\right)}$. Observe \begin{align*} \Pr\left(X_{L+L'}>\frac{\gamma}{2}\right)&\leq \sum_{i=0}^{L'} \e\left(-mX_i \alpha\right)\\
&= \sum_{i=1}^{L'} \e\left(-\frac{2cX_i}{\gamma}\right)\\
&\leq \sum_{i=0}^{L'} \e\left(-c(1-\beta)^{-(L'-i)}\right)\\
&= \sum_{i=0}^{L'} \e\left(-c(1-\beta)^{i}\right)\\
&\leq \sum_{i=0}^{\beta L'}\frac{1}{\beta} \e\left(-ce^{i}\right)\\
&\leq \frac{2\e\left(-c\right)}{\beta}\\ 
&=\frac{\gamma}{4}.\end{align*}
Therefore, with probability at least $1-\frac{\gamma}{2}$, items at level $\Omega( \log{\frac{1}{\gamma}} + \log{\frac{1}{\ve}}))$ of an iterative construction with width $m$ fire with probability at most $\frac{\gamma}{2}$ for $m=\Omega\left( \ln(\frac{1}{\gamma}) ( \frac{1}{\gamma}+ \frac{1}{\ve^2})\right).$ Thus, the iterative construction accurately computes the threshold function with probability at least $(1-\frac{\gamma}{2})^2> 1-\gamma$. \end{proof}

We give a tighter bound for the accuracy of the finite width construction for functions with quadratic convergence. We now prove Theorem \ref{finite width quad}, which can be restated as follows: in order to accurately compute, with probability at least $1-\gamma$, a $t$-threshold function for inputs in which the fraction of inputs firing is within $\ve$ of $t$ , the width of the levels must be $\Omega\left(\frac{ \ln(1/\gamma) }{\ve^2}\right).$

\begin{proof}[of Theorem \ref{finite width quad}.]
Let $X_i$ be the fraction of items firing at level $i$. Then  $E(X_i)= f(X_{i-1})$.  By Corollary \ref{cor:complementary}, it suffices to consider the case when the fraction of inputs firing is less that $t- \ve$. As proved in Lemma \ref{conditions}, in expectation, the sequence $X_1, X_2, X_3, \dots $ convergences to $0$. We will show that with probability at least $1-\gamma$, the sequence reaches $0$. 

First we apply Lemma \ref{finite linear}.  Recall from the proof of Theorem \ref{thm:quadratic four}, that the minimum value of $g(p)=\frac{f(p)-p}{ p(1-p)(p-t)}$ is $1$ on the interval $[0,1]$.  Therefore for such constructions $d$ in the statement of Lemma \ref{finite linear} is $1$. For constructions described in Theorem \ref{thm:k leaves}, the minimum value of $g(p)=\frac{f(p)-p}{ p(1-p)(p-t)}$ is $\frac{1}{t}$ on the interval $[0,1]$, as proved in Lemma \ref{k,k+1}. Therefore for such constructions $d$ in the statement of Lemma \ref{finite linear} is $1/t$.
Let $u$ be the constant $0<u<t$ in quadratic convergence as in Lemma \ref{conditions}, $m_\ell \geq \frac{8\ln(\frac{4}{u(1-t)\gamma})}{d^2u(1-t)^2 (1+(c_1/2)^{\ell-1}\ve^2}$ and $L=\Omega(\frac{1}{\ve})$. Thus, $X_L<u$ with probability at least $1-\frac{\gamma}{2}$.

 Next, we bound the probability given $X_L<u$, that $X_{L+r}=0$. We say that $X_{k+1}$ \textit{regresses} if $X_{k+1} \geq u$. For $m \geq \frac{l}{u}$ and $c_3$ as in Lemma \ref{conditions}, we apply Lemma \ref{binomial}  and obtain \begin{align*}
\Pr(X_{k+1} \geq X_k)&= \sum_{i=\lceil\frac{u}{e}n\rceil}^m {m \choose i} f(p)^i (1-f(p))^{m-i}\\
&\leq \e\left(-m\left(p \log\left(\frac{p}{f(p)}\right)+(1-p)\log\left(\frac{1-p}{1-f(p)}\right)\right)\right)\\
&\leq \e\left(-m\left(p \log\left(\frac{1}{c_3p}\right)+(1-p)\log\left(1-\frac{p-f(p)}{1-f(p)}\right)\right)\right)\\
&\leq \e\left(-m\left(p \log\left(\frac{1}{c_3p}\right)+(1-p)\left(-\frac{p-f(p)}{1-f(p)}\right)\right)\right)\\
&\leq \e\left(-m\left(p \log\left(\frac{1}{c_3p}\right)-p(1-p)\right)\right)\\
&\leq \e\left(-mp \log\left(\frac{1}{c_3pe}\right)\right) \\
&\leq(c_3ue)^\ell\end{align*}
It follows that $$\Pr(\text{$X_L, X_{L+1}, \dots X_{L+r}$ do not regress})\geq 1-r(c_3ue)^\ell.$$ 

Next we bound the probability that given $X_k<u$, $X_{k+1}=0$. 
$$\Pr(X_{k+1}=0 | X_{k}\leq u)=(1-f(X_k))^m \geq (1-u^2)^m \geq 1-mu^2 =1-lu.$$
Therefore $$\Pr(X_{L+r}=0 | \text{$X_L, X_{L+1}, \dots X_{L+r}$ do not regress}) \geq 1- (lu)^r.$$
Let $l=r =\max\left\{ c_3 e,\min\left\{\frac{1}{2u}, \log_2\left(\frac{4}{\gamma}\right)\right\}\right\}$. It follows $\log_2\left( \frac{4}{\gamma}\right)< l \log_2\left(\frac{1}{lu}\right)$ and therefore $\frac{4}{\gamma} < \left(\frac{1}{lu}\right)^\ell$. 
We now compute \begin{align*} \Pr(X_{L+r}=0 )\geq 1-r(c_3ue)^\ell-(lu)^r 
\geq 1-2(lu)^\ell
\geq 1-\frac{\gamma}{2}.
\end{align*}

 We have shown that given $X_L<u$, $\Pr(X_{L+r}>0)\leq \frac{\gamma}{2}$. Therefore, with probability at least $1-\gamma$,  items at level $\Omega( \log{\frac{1}{\gamma}} + \log{\frac{1}{\ve}}))$ do not fire for $m=\Omega\left(\frac{ \ln(\frac{1}{\gamma}) }{\ve^2}\right).$ \end{proof}
 
\subsection{Exponential and wild constructions}

In this section, we analyze exponential constructions, where items are chosen with probabilities proportional to their weights, and the latter decay exponentially with time, by a factor of $e^{-\alpha}$. 
%\begin{figure}[h!]
%\fbox{\parbox{\textwidth}{
%{\bf ExponentialConstruction($k, C, \alpha$):}\\[0.1in]
%Construct $k$ items as follows:
%\begin{enumerate}
%\item Choose a tree $T$ according to $C$. 
%\item  Choose the leafs for $T$ independently according to the following distribution: the $i^{th}$ item created is selected with probability $e^{-(k-i-1)}/W_k$ and each of the original $n$ input items is selected with probability $e^{-\alpha k}/W_k$.
%\item Build the tree $T$ with these items as leaves, where $W_k=n e^{-\alpha k} +\sum_{j=0}^{k-1} e^{-j \alpha}.$ 
% 
%\end{enumerate}
%}}
%\end{figure}
% In the creation of the $k^{th}$ item, each leaf is independently selected from the existing items and the inputs according to the following distribution: the $i^{th}$ item is selected with probability $e^{-(k-i-1)}/W_k$ and each of the original $n$ input items is selected with probability $e^{-\alpha k}/W_k$ where $W_k=n e^{-\alpha k} +\sum_{j=0}^{k-1} e^{-j \alpha}$ \sam{ Note to self: put algorithm in box and refine section intro}
%\sam{perhaps move box to intro with figure?}
The wild construction is the special case of the exponential construction in which the $\alpha=0$, i.e. the weights of the items do not decay with time. Here we analyze the wild and exponential constructions for the probability distribution given in Theorem \ref{thm:linear}, which converges linearly to a $t-$threshold function. Analogous results, up to constants, hold for the constructions with quadratic convergence. 

Throughout this section, we use the following notation. We define $X_k$ to be the probability an item chosen according to the weights at the beginning of the $k+1$ iteration is 1. We define $$d_i= \sum_{j=0}^{i-1} e^{-j \alpha}$$ $$W_k=n e^{-\alpha k} +\sum_{j=0}^{k-1} e^{-j \alpha}.$$ 
 Additionally, we use the following concentration inequality. 
\begin{lemma} (Hoeffding) Let $X_1,\dots, X_n$ be independent random variables with $ X_i \in [a_i, b_i]$, and $X=\sum_{i=1}^n X_i$. Then for $t>0$, $$\Pr[ X > \E[X]+t] \leq \e\left(- \frac{2t^2}{\sum_{i=1}^n (a_i-b_i)^2 }\right).$$ \label{Hoeffding}
\end{lemma}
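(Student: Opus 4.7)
The plan is to follow the standard Chernoff--Cram\'er approach: exponentiate, apply Markov's inequality, and then exploit independence of the $X_i$. Specifically, for any $s>0$,
\[
\Pr[X - \E[X] > t] \;=\; \Pr\!\left[e^{s(X-\E[X])} > e^{st}\right] \;\leq\; e^{-st}\, \E\!\left[e^{s(X-\E[X])}\right] \;=\; e^{-st} \prod_{i=1}^n \E\!\left[e^{s(X_i - \E[X_i])}\right],
\]
where the factorization in the last equality uses independence. The right-hand side will be optimized over $s>0$ at the end.

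The main technical ingredient is Hoeffding's lemma: if $Y$ satisfies $\E[Y]=0$ and $Y \in [a,b]$ almost surely, then $\E[e^{sY}] \leq e^{s^2(b-a)^2/8}$. To prove it, I would use the convexity of $y \mapsto e^{sy}$ on $[a,b]$, which gives
\[
e^{sy} \;\leq\; \frac{b-y}{b-a}\, e^{sa} + \frac{y-a}{b-a}\, e^{sb}.
\]
Taking expectations and using $\E[Y]=0$ yields $\E[e^{sY}] \leq \frac{b}{b-a} e^{sa} - \frac{a}{b-a} e^{sb}$; denote this upper bound by $e^{\psi(s)}$. A direct calculation shows $\psi(0)=\psi'(0)=0$, while the second derivative satisfies $\psi''(s) \leq (b-a)^2/4$ for all $s$ (this reduces to the AM--GM bound $u(1-u)\leq 1/4$ applied to the convex combination appearing in $\psi''$). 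Integrating twice gives $\psi(s) \leq s^2(b-a)^2/8$, as desired.

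Applying Hoeffding's lemma to each centered variable $Y_i = X_i - \E[X_i]$, which lies in an interval of length $b_i - a_i$, the product bound becomes
\[
\Pr[X - \E[X] > t] \;\leq\; \exp\!\left(-st + \frac{s^2}{8}\sum_{i=1}^n (b_i-a_i)^2\right).
\]
Setting $s = 4t / \sum_i (b_i - a_i)^2$ minimizes the exponent and yields exactly the claimed bound $\exp\!\left(-2t^2 / \sum_i (b_i-a_i)^2\right)$. The step most likely to require care is the second-derivative estimate in Hoeffding's lemma; everything else is routine bookkeeping, and since the target bound is tight and well known, no other surprises should arise.
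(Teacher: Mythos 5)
Your proof is correct: it is the standard Chernoff--Cram\'er argument combined with Hoeffding's lemma on the moment generating function of a bounded, centered variable, with the key second-derivative estimate $\psi''(s)\leq (b-a)^2/4$ and the optimization $s=4t/\sum_i(b_i-a_i)^2$ all handled properly. The paper itself offers no proof of this lemma---it is quoted as a classical concentration inequality and used as a black box---so there is nothing to compare against; your write-up is simply the canonical proof of the stated bound.
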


The following two lemmas lay the foundation for the proofs of Theorem \ref{wild} and \ref{exponential}, which prove convergence for the wild and exponential constructions respectively. 

\begin{lemma} Consider an exponential construction corresponding to a $t$-threshold function given in Theorem \ref{thm:linear}. Let 
 $X_k$ be the probability an item chosen according to the weights at the beginning of the $k+1$ iteration is 1, and let $\ve=t- X_k$. Let $A_i$ be the event that $X_{k+i} \leq X_k + \frac{\ve}{2}$. Then  $$\Pr[ \bigwedge_{i=1}^s A_i]\geq \prod_{i=1}^s1- \e\left( \frac{-\ve^2 \beta_i^2}{2d_i} \right)$$ where $\beta_i= n e^{-(k+i)\alpha} + \sum_{j=i}^{k+i-1} e^{-j \alpha} = W_{k+i}-d_i$.  \label{x obeys} \end{lemma}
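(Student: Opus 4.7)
The plan is to express $X_{k+i}$ as an explicit affine combination of the Bernoulli indicators $V_j := U_{k+j}$ of whether the item created at iteration $k+j$ fires, rewrite the event $A_i$ as a tail inequality on a weighted sum of these Bernoullis, and invoke an Azuma--Hoeffding bound. Unrolling the recursion $X_{k+j} = (X_{k+j-1} W_{k+j-1} e^{-\alpha} + V_j)/W_{k+j}$ gives
\[
X_{k+i} \;=\; \frac{X_k \beta_i + \sum_{j=1}^i c_j V_j}{W_{k+i}}, \qquad c_j := e^{-(i-j)\alpha} \in (0,1],
\]
using $\beta_i = W_k e^{-i\alpha} = W_{k+i} - d_i$. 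Consequently $A_i$ is equivalent to $S_i := \sum_{j=1}^i c_j V_j \le X_k d_i + \tfrac{\ve}{2} W_{k+i}$.

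Next I would use the chain rule $\Pr[\bigwedge_{i=1}^s A_i] = \prod_{i=1}^s \Pr[A_i \mid G_{i-1}]$, where $G_{i-1} := A_1 \cap \cdots \cap A_{i-1}$ (and $G_0 = \Omega$), and prove the per-step bound $\Pr[A_i \mid G_{i-1}] \ge 1 - \exp(-\ve^2 \beta_i^2/(2d_i))$. The construction's Markov transition says $\E[V_j \mid \mathcal{F}_{j-1}] = f(X_{k+j-1})$ where $f(p) = (1-t)p + (1+t)p^2 - p^3$ from Theorem~\ref{thm:linear}. On $G_{i-1}$, the hypothesis $X_k = t - \ve$ together with $A_1, \dots, A_{i-1}$ gives $X_{k+j-1} \le X_k + \ve/2 = t - \ve/2 \le t$ for every $j \le i$; since $f(p) - p = p(1-p)(p-t) \le 0$ for $p \le t$, we get $\E[V_j \mid \mathcal{F}_{j-1}] \le X_k + \ve/2$ on $G_{i-1}$.

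Apply Azuma--Hoeffding to the martingale-difference sequence $D_j := c_j(V_j - \E[V_j \mid \mathcal{F}_{j-1}])$: the increments satisfy $|D_j| \le c_j$ and $\sum_{j=1}^i c_j^2 \le \sum_{j=1}^i c_j = d_i$ because $c_j \le 1$, so
\[
\Pr\!\left[\textstyle\sum_{j=1}^i D_j > \tfrac{\ve \beta_i}{2}\right] \;\le\; \exp\!\left(-\frac{2(\ve\beta_i/2)^2}{d_i}\right) \;=\; \exp\!\left(-\frac{\ve^2 \beta_i^2}{2 d_i}\right).
\]
On $G_{i-1}$, $\sum_j c_j f(X_{k+j-1}) \le (X_k + \ve/2) d_i$, so the event $A_i^c \cap G_{i-1}$ (namely $S_i > X_k d_i + \tfrac{\ve}{2} W_{k+i}$) forces $\sum_j D_j > \tfrac{\ve}{2}(W_{k+i} - d_i) = \tfrac{\ve \beta_i}{2}$, giving the desired bound.

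The main obstacle is the conversion from the joint bound $\Pr[A_i^c \cap G_{i-1}] \le \exp(-\ve^2\beta_i^2/(2 d_i))$ to the conditional bound $\Pr[A_i^c \mid G_{i-1}]$ needed for the product form, since naive conditioning on the future events $A_j, \dots, A_{i-1}$ can alter the transition law of earlier $V_j$'s. I would dispatch this via a monotone-coupling argument: because $f$ is monotone non-decreasing (the polynomials $f_{T_1}, f_{T_2}$ come from monotone AND/OR trees, so any convex combination is non-decreasing on $[0,1]$), increasing $V_j$ stochastically increases every later $X_{k+\ell}$, making each subsequent $A_\ell$ less likely; Bayes' rule then yields $\E^*[V_j \mid \mathcal{F}_{j-1}] \le f(X_{k+j-1}) \le X_k + \ve/2$ under the conditional measure $P^* := \Pr(\cdot \mid G_{i-1})$, so the Azuma--Hoeffding computation above transfers verbatim to $P^*$ and the chain rule delivers the stated product bound.
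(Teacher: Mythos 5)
Your proposal is correct and follows essentially the same route as the paper's proof: chain rule over the events $A_i$, unrolling the recursion to write $X_{k+i}$ as $\bigl(X_k\beta_i+\sum_j e^{-(i-j)\alpha}V_j\bigr)/W_{k+i}$, bounding the conditional means via $f(p)\le p$ for $p\le t$, and applying Hoeffding with $\sum_j c_j^2\le d_i$. Your closing monotone-coupling remark addresses a conditioning subtlety (that conditioning on the earlier events $A_1,\dots,A_{i-1}$ could in principle alter the transition law) which the paper simply passes over by asserting $\E[Y]\le X_k+\ve/2$, so if anything your write-up is the more careful of the two.
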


\begin{proof} 
Note $$\Pr[ \bigwedge_{i=1}^s A_i]= \Pr[A_s| \bigwedge_{j=1}^{s-1} A_i]\Pr[\bigwedge_{j=1}^{s-1}A_j]=\prod_{i=1}^s \Pr[A_i |  \bigwedge_{j=1}^{i-1} A_j].$$
We now bound $\Pr[A_i |  \bigwedge_{j=1}^{i-1} A_j].$
Let $y_1, \dots y_i$ be random variables where $y_j$ takes value $\frac{e^{-(i-j)\alpha}}{d}$ if the $(k+j)^{th}$ item fires. Let $Y= \sum_{j=1}^i y_j$. 
We compute $$X_{k+i}= \frac{W_k X_k e^{-i\alpha} + d Y}{W_{k+i}}.$$ The statement $X_{k+i}\geq X_k+ \frac{\ve}{2}$ is equivalent to the following statements, with $d=d_i$: 
$$Y \geq \frac{W_{k+i}-W_k e^{-i \alpha}}{d} X_k +\frac{\ve}{2}\frac{W_{k+i}}{d}$$ $$Y \geq X_k + \frac{\ve}{2} + \frac{\ve}{2}\left( \frac{ n e^{-(k+i)\alpha} + \sum_{j=i}^{k+i-1} e^{-j \alpha}}{d}\right)=X_k + \frac{\ve}{2} + \frac{\ve\beta_i}{2d}$$ 
%Define $\overline{y}_1, \dots \overline{y}_i$ be random variables where $\overline{y}_j$ takes value $\frac{e^{-(i-j)\alpha}}{d}$ with probability $X_k+\frac{\ve}{2}$. Define $\overline{Y}= \sum_{j=1}^i \overline{y}_j$. 
Note that $\E[Y]\le X_k+\frac{\ve}{2}.$
We have \begin{align*}
\Pr[\overline{A_i} |  \bigwedge_{j=1}^{i-1} A_j]=\Pr\left[ Y \geq X_k + \frac{\ve}{2} + \frac{\ve\beta_i}{2d} \right]&\leq \Pr\left[ Y- \E[Y] \geq   \frac{\ve\beta_i}{2d} \right]\end{align*}
Note that $0\leq y_i \leq \frac{e^{-(i-j)\alpha}}{d}$.
% For ease of notation let $c= \sum_{j=0}^{i-1}e^{-2j\alpha}\leq \frac{1}{d}$. Applying Hoeffding, we obtain  \begin{align*}
%\Pr\left[ \overline{Y}- \E[\overline{Y}] \leq  \frac{\ve}{2}\left( \frac{ n e^{-(k+i)\alpha} + \sum_{j=i}^{k+i-1} e^{-j \alpha}}{d}\right)\right]
%&\leq \e\left(\frac{-2\left(\frac{\ve}{2}\left( \frac{ n e^{-(k+i)\alpha} + \sum_{j=i}^{k+i-1} e^{-j \alpha}}{d} \right)\right)^2 d^2}{c}\right)\\
%&= \e\left(    -\frac{\ve^2\left( n e^{-(k+i)\alpha} + \sum_{j=i}^{k+i-1} e^{-j \alpha}\right)^2 }{2c}\right)\\
%&= \e\left(-s \ve^2 c_1\right)
%\end{align*}
Applying Hoeffding, we obtain
 \begin{align*}
\Pr\left[Y- \E[Y] \geq  \frac{\ve \beta_i}{2d}\right]
&\leq \e\left(-2\left(\frac{\ve\beta_i}{2d}\right)^2\frac{1}{d(1/d^2)}\right)\\
&= \e\left( \frac{-\ve^2 \beta_i^2}{2d} \right).
%&= \e\left( -\ve^2 c_i s\right)
\end{align*}
It follows that $$\Pr[ \bigwedge_{i=1}^s A_i]\geq \prod_{i=1}^s1- \e\left( \frac{-\ve^2 \beta_i^2}{2d_i} \right).$$
\end{proof}

\begin{lemma} Consider an exponential construction corresponding to a $t$-threshold function given in Theorem \ref{thm:linear}. Let 
 $X_k$ be the probability an item chosen according to the weights at the beginning of the $k+1$ iteration is 1, and let $\ve=t-X_k$. Assume $X_{k+i} 
\leq X_k +\frac{ \ve}{2}$ for all $1\leq i \leq s$. Then $$ \E[ X_{k+s}]\leq X_k \prod_{i=1}^s\left( 1-\frac{\ve(1-t)}{2 W_{k+i}}\right).$$\label{expected}\end{lemma}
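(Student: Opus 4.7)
The plan is to derive a one-step recurrence for the conditional expectation $\E[X_{k+i}\mid X_{k+i-1}]$, simplify it using the explicit form of the polynomial $f_R$ from Theorem \ref{thm:linear}, invoke the hypothesis $X_{k+i-1}\le X_k+\ve/2=t-\ve/2$ to bound the per-step contraction factor uniformly, and then iterate via the tower property.

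For the setup, I would observe that at iteration $k+i$ a single new item is created whose (Bernoulli) output $Y_{k+i}$ satisfies $\E[Y_{k+i}\mid X_{k+i-1}]=f_R(X_{k+i-1})$: each leaf of the sampled tree is drawn independently proportional to weights, hence fires with probability $X_{k+i-1}$, and the tree then evaluates $g_T$ on independent Bernoulli$(X_{k+i-1})$ inputs. The weight update $W_{k+i}=e^{-\alpha}W_{k+i-1}+1$ together with the identity $X_{k+i}\, W_{k+i}=e^{-\alpha}W_{k+i-1}X_{k+i-1}+Y_{k+i}$ gives, after writing $f_R(p)=p+p(1-p)(p-t)$ (as used in the proof of Theorem \ref{thm:linear}) and using $e^{-\alpha}W_{k+i-1}+1=W_{k+i}$ to cancel the linear part,
\[
\E[X_{k+i}\mid X_{k+i-1}] \;=\; X_{k+i-1}\;+\;\frac{X_{k+i-1}(1-X_{k+i-1})(X_{k+i-1}-t)}{W_{k+i}}.
\]

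Under the standing hypothesis $X_{k+i-1}\le t-\ve/2$, the factors $t-X_{k+i-1}$ and $1-X_{k+i-1}$ are bounded below by $\ve/2$ and $1-t$ respectively, so $(1-X_{k+i-1})(t-X_{k+i-1})\ge (1-t)\ve/2$ and hence
\[
\E[X_{k+i}\mid X_{k+i-1}] \;\le\; X_{k+i-1}\!\left(1-\frac{\ve(1-t)}{2\,W_{k+i}}\right).
\]
Taking unconditional expectations on both sides (the bracketed factor is deterministic) and iterating this inequality from $i=1$ up to $i=s$ telescopes directly to the stated product $X_k\prod_{i=1}^{s}\bigl(1-\ve(1-t)/(2W_{k+i})\bigr)$.

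The main technical wrinkle I anticipate is that the hypothesis ``$X_{k+j}\le X_k+\ve/2$ for all $j\le s$'' is a random event rather than a pointwise guarantee, so the step-by-step application of the tower property should respect this conditioning. I expect to handle this either by explicitly conditioning throughout on the event $E=\{X_{k+j}\le X_k+\ve/2,\ \forall j\le s\}$, or equivalently by working with the truncated process $\tilde X_{k+i}=\min(X_{k+i},\,X_k+\ve/2)$, which deterministically satisfies the assumption, agrees with $X_{k+i}$ on $E$, and inherits the same one-step drift bound. Modulo this bookkeeping the argument is a direct calculation once $f_R$ is written in the product form $p+p(1-p)(p-t)$.
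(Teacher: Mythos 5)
Your proposal is correct and follows essentially the same route as the paper: the same one-step identity $\E[X_{k+i}\mid X_{k+i-1}]=\bigl(e^{-\alpha}W_{k+i-1}X_{k+i-1}+f_R(X_{k+i-1})\bigr)/W_{k+i}$, the same bound $(1-X_{k+i-1})(t-X_{k+i-1})\ge (1-t)\ve/2$ under the hypothesis, and the same telescoping. The conditioning ``wrinkle'' you flag is handled in the paper only by its standing convention that all such expectations are conditional on the previous state, so your explicit conditioning/truncation remark is a harmless (and slightly more careful) addition rather than a deviation.
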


\begin{proof} For $X_{k+i-1} \leq X_k +\frac{ \ve}{2}$, we use the definition of $f$ to compute $$f(X_{k+i-1})= X_{k+i-1}-X_{k+i-1}(1-X_{k+i-1})(t-X_{k+i-1})\leq X_{k+i-1}\left( 1-\frac{\ve(1-t)}{2}\right).$$ We compute $$\E[ X_{k+i}]= \frac{W_{k+i-1}X_{k+i-1}e^{-\alpha} +f( X_{k+i-1})}{W_{k+i}} \leq X_{k+i-1}\left( 1-\frac{\ve(1-t)}{2W_{k+i}}\right).$$
%For $1 \leq i \leq s$, $$W_{k+i}= \sum_{j=0}^{k+i-1} e^{-j \alpha} + n e^{-(k+i)\alpha}\leq \sum_{j=0}^{k+s-1} e^{-j \alpha} + n e^{-k\alpha}= \frac{1- e^{ (k+s) \alpha}}{1-e^{-\alpha}}+n e^{-k\alpha}. $$
%It follows $$\E[ X_{k+s}]\leq X_k \left( 1-\frac{\ve(1-t)}{2 \left(\frac{1- e^{ -(k+s) \alpha}}{1-e^{-\alpha}}+n e^{-k\alpha}\right)}\right)^s.$$
\end{proof}

We are now ready to prove Theorem \ref{wild} and Theorem \ref{exponential} which describe the number of items needed to guarantee high accuracy in the wild and exponential constructions respectively. 
%\sam{restate theorems here?}

\begin{proof}[of Theorem \ref{wild}.] Assume that initial fraction of inputs firing is below the target threshold. The other case follows similarly. We divide the analysis into phases in each of which the total number of items double; in phase $j$, $2^jn$ new items are created.
 Let $X_k$ be the fraction of the first $k$ items and the $n$ inputs that fire. 
First we show that with high probability, $X_{k+i} < X_k +\frac{\ve}{2}$ holds for the entire phase, meaning for all $1\leq i \leq n+k$. We call this event $A_j$ where $j$ denotes the phase. Lemma \ref{x obeys} for  $\alpha=0$ states $$\Pr[A]\geq \left(1- e^{\frac{-\ve^2 (n+k)}{2}}\right)^{(n+k)} \geq 1- (n+k)e^{\frac{-\ve^2 (n+k)}{2}}.$$
We bound the probability that $A_j$ does not occur for some phase $$ \Pr\left[ \bigvee_{j} \overline{A}_j\right]\leq \sum_{j} \Pr[ \overline{A}_j]\leq \sum_j (2^j n)\e\left(- \frac{\ve^2 2^jn}{2}\right)<2n\e\left( \frac{-\ve^2n}{2}\right) < \frac{\delta}{4}.$$ 

Next, we compute the expected progress of the sequence of $X_k's$ corresponding to the final items in each phase. Setting $\alpha=0$, Lemma \ref{expected} states $$\E[ X_{k+s}] \leq X_k \left( 1- \frac{ \ve (1-t)}{2(n+k+s)} \right)^s \leq X_k \left( 1- \frac{ \ve (1-t)s}{4(n+k+s)}\right) =  X_k \left( 1- \frac{ \ve (1-t)}{8}\right).$$

 We now show that with high probability, the actual progress made by each phase is close to its expectation.  Let $z_1, \dots z_k$ be indicator random variables where $z_i=1$ if $i$ the $(k+i)^{th}$ item fires, and let $Z= \sum_{i=1}^s z_i$. We apply the Hoeffding bound and obtain \begin{align*} \Pr\left[ X_{k+s} - \E[X_{k+s}]> \frac{\ve X_k(1-t)}{16}\right] &= \Pr\left[ Z - \E[Z]> \frac{2s\ve X_k(1-t)}{16}\right] \\
&<\e\left(\frac{-\ve^2 (1-t)^2 X_k^2 s }{128}\right).  \end{align*}
Let $B_j$ be the event that  $X_{2^{j+1}n} - \E[X_{2^{j+1}n}]\leq \frac{\ve X_k(1-t)}{16}$. 
Thus, 
$$ \Pr\left[ \bigvee_{j} \overline{B}_j\right]\leq \sum_{j} \Pr[ \overline{B}_j]\leq \sum_j \e\left(\frac{-\ve^2 (1-t)^2 X_k^2 2^j n }{128}\right) < \frac{\delta}{4}.$$ 

We have shown, with probability at least $1-\frac{\delta}{2}$, the events $A_j$ and $B_j$ hold until the fraction of items firing is less that $\frac{\delta}{2}$. At this point, an item does not fire and therefore correctly computes the threshold function with probability $1-\frac{\delta}{2}$. 

To bound the total number of items, we compute the number of phases  needed to achieve $X_k< \frac{\delta}{2}.$ We divide the analysis into two parts, when $\epsilon = t-X_k < X_k$ and $X_k < t-X_k$. In the first part, $\eps = t-X_k$ grows by a constant factor in each phase, while in the second part $X_k$ decays by a constant factor in each phase. So the total number of phases needed is $O\left(\log(1/\delta)+\log(1/ \ve)\right)$. The total number of items therefore is $O\left(n(\frac{1}{\delta^c}+\frac{1}{\ve^c})\right)$ for some absolute constant $c$.\end{proof}

\begin{proof}[of Theorem \ref{exponential}.]
As before, we assume without loss of generality that the initial fraction of inputs firing is below the threshold. We divide the analysis into phases where $s$ is the number of items created in a phase, $k$ is the number of items created prior to the start of the phase, and $n$ is the number of inputs. Let $X_k$ be the probability an item selected according to the probability distribution in the $(k+1)^{st}$ iteration fires. We will set the length of a phase $s = \lfloor1/\alpha\rfloor$. 

First we compute the probability that, $X_{k+i} < X_k +\frac{\ve}{2}$ holds for the entire phase, meaning for all $1\leq i \leq s$.  We call this event $A_j$ where $j$ denotes the phase. Recall the definition of $\beta_{i}$ given in Lemma \ref{x obeys}. For $1 \leq i \leq s=1/\alpha$ and $n > 1/\alpha$, we have $$\beta_{i,k} = ne^{-(k+i)\alpha} + \sum_{j=i}^{k+i-1}e^{-j\alpha} =ne^{-(k+i)\alpha} + e^{-i\alpha}\sum_{j=i}^{k-1}e^{-j\alpha} \ge ne^{-(k+i)\alpha} + \frac{1}{e}\sum_{j=0}^{k-1}e^{-j\alpha} \ge \frac{1}{e^2\alpha}$$ where the final inequality follows from the observations that if $k < \frac{1}{\alpha}$ then $ne^{-(k+i)\alpha}\geq \frac{1}{e\alpha}$, and if $k \geq \frac{1}{\alpha}$, then $\sum_{j=0}^{k-1}e^{-j\alpha} \geq \frac{1}{e\alpha}$.
%Since $s \geq \frac{c}{\ve^2} \log( \frac{1}{\ve\delta})$ 
Note that $d_i < d_s=\frac{1-e^{-1}}{1-e^{-\alpha}} <\frac{1}{\alpha}$, so $\beta_{i,k} >\frac{d_i}{e}$. 
Lemma \ref{x obeys} implies that $$\Pr[A_j]\geq 1- \prod_{i=1}^s1- \e\left( \frac{-\ve^2 \beta_{i,k}^2}{2d_i} \right) \geq \left(1- \sum_{i=1}^s\e\left(- \frac{\ve^2\beta_{i,k}^2}{2d_i}\right) \right) \ge  \left(1-    \sum_{i=1}^s\e\left(- \frac{\ve^2\beta_{i,k}}{2e}\right) \right) .$$
Therefore $$\Pr\left[ \overline{A}_j\right] \leq  \sum_{i=1}^s\e\left(- \frac{\ve^2\beta_{i,k}}{2e}\right) \leq  \frac{1}{\alpha} \e\left(- \frac{\ve^2}{2e^3\alpha}\right). $$

%Thus, with the choice of $\alpha$, we can bound the probability that $A_j$ does not occur for some phase $$ \Pr\left[ \bigvee_{j} \overline{A}_j\right]\leq \sum_{j} \Pr[ \overline{A}_j] < \frac{\delta}{4}.$$ 

Next, we compute the probability that the progress made in a phase is close to its expectation. We ignore the progress made in the first $\log n$ phases. Let $B_j$ be the event that $X_{k+s}-\E\left[ X_{k+s}\right] \leq \frac{\ve_j X_k (1-t)}{32}$ where $ k =\frac{j-1}{\alpha}$, $s=\frac{1}{\alpha}$, and $\ve_j= t- X_k$,  given $\overline{A}_j$ and $j \geq \log n$.
First note that  
$$W_{k+i} = ne^{-(k+i)\alpha}+\sum_{j=0}^{k+i-1}e^{-j\alpha} \le ne^{-k\alpha}+\frac{2}{\alpha}.$$ 
%We can bound the expected progress of the sequence of $X_k's$ corresponding to the final items in each phase. 
Lemma \ref{expected} implies that 
$$\E[ X_{k+s}] \leq  X_k \prod_{i=s/2}^s\left( 1-\frac{\ve_j(1-t)}{2 W_{k+i}}\right) \leq X_k \left(1 - \frac{\ve_j(1-t)}{2(ne^{-k\alpha}+\frac{2}{\alpha})}\right)^s \leq X_k \left( 1- \frac{ \ve_j (1-t)s}{4(ne^{-k\alpha}+\frac{2}{\alpha})}\right).$$
For $j\geq \log n$ phases, we have $k\ge \log n/\alpha$ and therefore,
\[
\E[X_{k+s}] \le X_k \left(1-\frac{\eps_j(1-t)}{16}\right).
\]
%(The analysis in the first $\log n$ phases is similar, but with slower progress in $X_k$; we give the details for the later phases).
We now show that with high probability, the actual progress made by each phase is close to its expectation.  Let $z_1, \dots z_s$ be indicator random variables where $z_i=\frac{e^{-(s-i) \alpha}}{d_s}$ if $i$ the $(k+i)^{th}$ item fires, and let $Z= \sum_{i=1}^s z_i$.  Note $$X_{k+s} - \E[X_{k+s}] = \frac{d_s}{W_{k+s}} \left(Z- \E[Z] \right).$$
% Note $0 \leq x_i \leq \frac{1}{d}$. \sam{We could do a better approximation and avoid dividing by $1/d^2$.} 
We apply the Hoeffding bound, noting that $\sum_{i=1}^s  \left(\frac{e^{-(s-i) \alpha}}{d_s}\right)^2 \le \frac{1}{\alpha d_s^2}$, and obtain 
\begin{align*}\Pr\left[ \overline{B}_j\right]= \Pr\left[ X_{k+s} - \E[X_{k+s}]> \frac{\ve X_k(1-t)}{32}\right] &= \Pr\left[ Z - \E[Z]> \frac{W_{k+s}\ve X_k(1-t)}{32d_s}\right] \\
&<\e\left(\frac{-\ve^2 (1-t)^2 X_k^2 W_{k+s}^2\alpha}{512}\right)\\
&< \e\left(\frac{-\ve^2(1-t)^2X_k^2}{512e^4 \alpha}\right)\\
&= \e\left(\frac{-\ve^2(1-t)^2X_k^2 2048 e^4 \log(\frac{4}{\ve\delta})}{512e^4 \min\{ \ve^2, \delta^2\} }\right)\\
  &<\frac{(\delta \ve)^4}{256} \end{align*}
The third step uses the fact that $W_{k+s} \ge \frac{1}{\alpha \ve^2}$.

Next we compute the total number of phases needed before $X_k < \delta/3$ given $A_j$ holds for all phases and assuming $B_j$ holds after the first $\log n$ phases. For each phase after the first $\log n$ phases, we have $$X_{k+s} \leq \left( 1- \frac{ \ve_j(1-t)}{32} \right) X_k.$$ We use the observation that if $y\geq \frac{1}{x}$, then $(1-x)^y \leq \frac{1}{e}$. Observe that $$t \left(1- \frac{\ve(1-t)}{32}\right)^{\lceil \frac{32}{\ve(1-t)} \rceil} \leq \frac{t}{e} < \frac{t}{2}.$$ Thus, after $\lceil \frac{32}{\ve(1-t)} \rceil$ phases, the current $X_k$ is less than $t/2$. Similarly we compute $$\frac{t}{2} \left(1-\frac{t/2(1-t)}{32}\right)^{\lceil \frac{64\log{\frac{3t}{2\delta}}}{t(1-t)}\rceil}\leq \frac{t}{2}\left( \frac{1}{e}\right)^{\log \frac{3t}{2 \delta}}=\frac{\delta}{3}.$$ and conclude that after an additional $\lceil \frac{64\log{3t/2\delta}}{t(1-t)}\rceil$ phases the current $X_k$ is less than $\delta/3$. We may assume $\ve+\delta< t< 1-\ve-\delta$. The following is an upper bound on the total number of phases needed to have a failure probability less than $\delta/3$: $$T= \log n+  \frac{32}{\ve^2} + \frac{96}{\delta^2}.$$

It suffices to show that $(i)$ the probability $A_j$ holds for the first $T$ phases is at least $1-\delta/3$ and $(ii)$ the probability $B_j$ holds for the latter $T- \log n$ phases is at least $1-\delta/3$. 
We show $(i)$ in two parts. First we analyze the probability of some event $\overline{A}_j$ in the for the first $\log n$ phases. We have $$\Pr\left[ \bigvee_{j=1}^{\log n} \overline{A}_j \right] \leq   \sum_{j=1}^{\log n} \frac{1}{\alpha} \e\left(\frac{-\ve^2 \max_i \{\beta_{i,\frac{j-1}{\alpha}}\} }{2e}\right).$$
For ease of notation, let $a= \e\left( -\frac{\ve^2}{2e}\right)$. For $j=1$, $\beta_{i,0} \geq \frac{n}{e}$. For $j>1$, note $\beta_{i,\frac{j-1}{\alpha}} \geq \frac{n}{e^j}+\frac{1}{e^2\alpha}$. We rewrite the above expression. \begin{align*}\Pr\left[ \bigvee_{j=1}^{\log n} \overline{A}_j \right]\leq  a^{\frac{n}{e}} +a^{\frac{1}{e^2\alpha}}\sum_{j=2}^{\log n} a^{\frac{n}{e^j}}=a^{\frac{n}{e}} +a^{\frac{1}{e^2\alpha}} \left( \frac{1-a}{1-a^{1/e}} \right)\leq 3 a^{\frac{1}{e^2\alpha}}.
\end{align*}
Observe $$ \e\left( -\frac{\ve^2}{2e^3 \alpha}\right)= \e\left( - \frac{\ve^2 2048e^4\log 4/\ve \delta}{2e^3\min\{ \ve^2, \delta^2\}}\right)\leq \frac{(\ve \delta)^{1024 e}}{256}.$$
Next we compute \begin{align*}
\Pr\left[ \bigvee_{j=1}^{T} \overline{A}_j \right]&\leq \left( 3  + \frac{T-\log n}{\alpha} \right)  \e\left( -\frac{\ve^2}{2e^3 \alpha}\right)\\
  &\leq \left( 3  + \left(  \frac{32}{\ve^2} + \frac{96}{\delta^2}\right) \left(  \frac{ 2048 e^4 \log\frac{4}{\ve \delta}}{\min\{\ve^2, \delta^2\}}\right)  \right) \frac{(\ve \delta)^{1024 e}}{256}\\
& < \frac{\delta}{3}.\end{align*}
Statement $(ii)$ follows from the calculation:$$\Pr\left[ \bigvee_{j=\log n}^{T} \overline{B_j}\right]\leq \left(  \frac{32}{\ve^2} + \frac{96}{\delta^2}\right)\frac{(\ve\delta )^4}{256}< \frac{\delta}{3}. $$

We have shown after $T= \Omega(\log (n/\eps\delta)$ phases the next item will fire with probability less than $\delta$. Since each phase has $1/\alpha$ new items, the total new items created to achieve this is $$\Omega\left(\frac{\log \frac{1}{\eps\delta}}{\min \{\eps^2, \delta^2\}}\left(\log \frac{n}{\eps\delta}\right)\right).$$
 \end{proof}

Finally, we note the lemmas and theorems in this section hold up to modification of constants for wild and exponential constructions corresponding to constructions given in Theorem \ref{thm:quadratic four} and Theorem \ref{thm:k leaves}, which exhibit  quadratic convergence in the iterative tree setting. However, unlike in the leveled iterative construction, in the exponential and wild constructions we do not see asymptotically faster convergence than the corresponding linear constructions, even in the quadratic regime. In our analysis of the latter constructions, we track the probability that the sequence of $X_k's$ goes to zero faster than iterating on curve $g(p)$ that is a weighted average of $f(p)$ and the function $p$. Regardless of whether $f(p)$ has quadratic behavior, $g(p)$ has a linear term, implying $g(p)$ exhibits linear convergence. Since this yardstick function $g(p)$ exhibits linear convergence, this analysis will not yield asymptotically improved results for functions with quadratic convergence.

\section{Learning}\label{learning}

So far we have studied the realizability of thresholds via neurally plausible simple iterative constructions. These constructions were based on prior knowledge of the target threshold. Here we study the learnability of thresholds from examples. It is important that the learning algorithm should be neurally plausible and not overly specialized to the learning task. We believe the simple results presented here are suggestive of considerably richer possibilities.

We begin with a one-shot learning algorithm.  We show that given a single example of a string $X \in \{0,1\}^n$ with $\|X\|_1=tn$, we can build an iterative tree that computes a $t$-threshold function with high probability. Let $T_1$ and $T_2$ be the building block trees in the construction given in Theorem \ref{thm:linear}. The simple LearnThreshold algorithm, described below, has the guarantee stated in Theorem \ref{thm:learning}, which follows from Theorem \ref{thm:linear}.

\begin{figure}[h!]
\fbox{\parbox{\textwidth}{
{\bf LearnThreshold($L,m,X$):}\\
\textit{Input:} Levels parameter $L$, a string $X \in \{0,1\}^n$ such that $\|X\|_1=tn$, width parameter $m$.\\
\textit{Output:} A finite realization of iterative tree with width $m$.\\

For each level $j$ from $1$ to $L$, apply the following iteration $m$ times:\\ 
(level $0$ consists of the input items $X$)
\begin{enumerate}
\item Pick a random input item $i$. 
\item If $X_i=1$ then let $T=T_1$, else let $T=T_2$. 
\item Pick $3$ items from the previous level. 
\item Build $T$ with these items as leaves. 
\end{enumerate}
}}
\end{figure}

\section{Discussion}\label{sec:discuss}

We have seen that very simple, distributed algorithms requiring minimal global coordination and control can lead to stable and efficient constructions of important classes of functions.  Our work raises several interesting questions. 

\begin{enumerate}

\item What are the ways in which threshold functions are applied in cognition? Object recognition is one application of threshold functions in cognition. For instance, suppose we have items representing features such as ``trunk," ``grey," ``wrinkled skin," and ``big ears," and an item representing our concept of an ``elephant." If a certain threshold of items representing the features we associate with an elephant fire, then the ``elephant" item will fire. This structure lends itself to a hierarchical organization of concepts that is consistent with the fact that as we learn, we build on our existing set of knowledge. For example, when a toddler learns to identify an elephant, he does not need to re-learn how to identify an ear. The item representing ``ear" already exists and will fire as a result of some threshold function created when the toddler learned to identify ears.  Now the item representing ``ear" may be used as an input as the toddler learns to identify elephants and other animals.

\item What is an interesting model and neurally plausible algorithm for learning threshold functions of $k$ relevant input items? 
In this scenario, the input is a set of sparse binary strings of length $n$ representing examples in which at least $tk$ of $k$ relevant items are firing. The output is an iterative tree that computes a $t$-threshold function on the $k$ relevant items. 
%In addition to being of independent mathematical interest, using iterative trees to solve this problem has implications for the theory of neural computation. 
We can formulate the previously described example of  learning to identify an elephant as an instance of this problem. Each time the toddler sees an example of an elephant, many features associated with elephant will fire in addition to some features that are not associated with elephants. There may also be features associated with an elephant that are not present in the example and therefore not firing. 
%Since there is no way to distinguish these relevant but not currently firing items from any item that is not current firing (of which there are far more) the method of sampling in Step 1 of the LearnThreshold algorithm cannot be applied. Thus, 
A learning algorithm must rely on information about the items that are currently firing to learn both the set of relevant items and a threshold function on this set of items. It might also be beneficial to utilize prediction, as e.g., done by \cite{PV15}.

\item To what extent can general linear threshold functions with general weights be constructed/learned by cortical algorithms? 

\item A concrete question is whether the construction of Theorem \ref{thm:k leaves} is optimal, similar to the optimality of the constructions in Theorem \ref{thm:quadratic four}.

%\item Our construction of a soft threshold function raises the question of what monotone functions $g: \{0,1\}^n \rightarrow [0,1]$ can be realized by iterative constructions. 

\item A simple way to include non monotone Boolean functions with the same constructions as we study here, would be to have input items together with their negations (as in e.g., \citep{Savicky1990}). What functions can be realized this way, using a distribution on a small set of fixed-size trees?

\end{enumerate}
\bibliographystyle{plainnat}
\bibliography{neuro}

\newpage

\section{Appendix} 
\appendix

\begin{table}[h!]
 \begin{center}
\begin{tabular}{|l|l|} \hline
Degree & Polynomials in $\mathcal{A}$ \\ \hline
1 & (0,1)\\ \hline
2 & (0, 0, 1)\\
& (0, 2, -1)\\ \hline
3 &(0, 0, 0, 1) \\
& (0, 1, 1, -1)\\
 & (0, 0, 2, -1) \\
 & (0, 3, -3, 1))\\ \hline
 4 & ((0, 0, 0, 0, 1)\\
 &    (0, 1, 0, 1, -1)\\
   &  (0, 0, 1, 1, -1)\\
    & (0, 2, 0, -2, 1)\\
    & (0, 0, 0, 2, -1)\\
   &  (0, 1, 2, -3, 1)\\
    & (0, 0, 3, -3, 1)\\
    & (0, 4, -6, 4, -1)\\
    & (0, 0, 2, 0, -1)\\
    & (0, 0, 4, -4, 1))\\ \hline
\end{tabular}
\quad
\begin{tabular}{|l|l|} \hline
Degree & Polynomials in $\mathcal{A}$ \\ \hline
 5 & (0, 0, 0, 0, 0, 1) \\
  &   (0, 1, 0, 0, 1, -1)\\
   &  (0, 0, 1, 0, 1, -1)\\
&     (0, 2, -1, 1, -2, 1)\\
  &   (0, 0, 0, 1, 1, -1)\\
    & (0, 1, 1, 0, -2, 1)\\
&     (0, 0, 2, 0, -2, 1)\\
  &   (0, 3, -2, -2, 3, -1)\\
    & (0, 0, 0, 0, 2, -1)\\
&     (0, 1, 0, 2, -3, 1)\\
  &   (0, 0, 1, 2, -3, 1)\\
    & (0, 2, 1, -5, 4, -1)\\
&     (0, 0, 0, 3, -3, 1)\\
  &   (0, 1, 3, -6, 4, -1)\\
   &  (0, 0, 4, -6, 4, -1)\\
 &    (0, 5, -10, 10, -5, 1)\\
  &    (0, 0, 0, 2, 0, -1)\\
 &    (0, 1, 2, -2, -1, 1)\\
 &    (0, 0, 0, 4, -4, 1)\\
 &    (0, 1, 4, -8, 5, -1)\\
 &    (0, 0, 1, 1, 0, -1)\\
 &    (0, 0, 3, -1, -2, 1)\\
 &    (0, 0, 2, 1, -3, 1)\\
 &    (0, 0, 6, -9, 5, -1)\\ \hline
\end{tabular}
\end{center}\caption{ Achievable polynomials for AND/OR trees.  The polynomial $a_0 + a_1 x + a_2 x^2 +a_3 x^3 + a_4 x^4 + a_5 x^5$ is denoted by $(a_0, a_1, a_2, a_3, a_4, a_5)$.}\label{the table}
\end{table}

\end{document}